\theoremstyle{plain}
\newtheorem{theorem}{Theorem}[section]
\newtheorem{proposition}[theorem]{Proposition}
\newtheorem{lemma}[theorem]{Lemma}
\newtheorem{corollary}[theorem]{Corollary}
\theoremstyle{definition}
\newtheorem{assumption}[theorem]{Assumption}
\theoremstyle{remark}
\newtheorem{remark}[theorem]{Remark}
\newcommand{\red}[1]{{\color{red}#1}}
\newcommand{\orange}[1]{{\color{orange}#1}}
\newcommand{\blue}[1]{{\color{blue}#1}}
\def\argmin{{\arg\min}}
\def\bA{\mathbf{A}}
\def\bbE{\mathbb{E}}
\def\bbP{\mathbb{P}}
\def\bbR{\mathbb{R}}
\def\bSigma{{\boldsymbol\Sigma}}
\def\cA{\mathcal{A}}
\def\cB{\mathcal{B}}
\def\cE{\mathcal{E}}
\def\cG{\mathcal{G}}
\def\cI{\mathcal{I}}
\def\cM{\mathcal{M}}
\def\cN{\mathcal{N}}
\def\cT{\mathcal{T}}
\def\cX{\mathcal{X}}
\def\KL{{\sf KL}}
\def\Norm#1{\Vert#1\Vert}
\def\innerprod#1{\left\langle#1\right\rangle}
\def\pr{{\bbP}}
\def\tr{{\sf tr}}
\def\TV{{\sf TV}}
\def\op{{\rm op}}
\def\Cov{\text{Cov}}
\def\evel{\varepsilon^{\rm v}}
\def\escore{\varepsilon^{\rm s}}
\def\meanevel{\varepsilon_{\rm v}}
\def\meanescore{\varepsilon_{\rm s}}
\def\evelJone{\varepsilon^{\rm J,1}}
\def\meanevelJone{\varepsilon_{\rm J,1}}
\def\evelJtwo{\varepsilon^{\rm J,2}}
\def\meanevelJtwo{\varepsilon_{\rm J,2}}
\def\evelH{\varepsilon^{\rm H}}
\def\meanevelH{\varepsilon_{\rm H}}
\def\tr{{\sf Trace}}
\def\ind{{\mathbbm{1}}}
\mathchardef\mhyphen="2D
\def\dx{{\mathrm{d}x}}
\newcommand{\bc}[1]{\left\{{#1}\right\}}
\newcommand{\br}[1]{\left({#1}\right)}
\newcommand{\bs}[1]{\left[{#1}\right]}
\def\1{\bm{1}}
\DeclareMathAlphabet{\mathsfit}{\encodingdefault}{\sfdefault}{m}{sl}
\SetMathAlphabet{\mathsfit}{bold}{\encodingdefault}{\sfdefault}{bx}{n}
\newcommand{\Law}{\text{Law}}
\DeclareMathOperator{\Tr}{Tr}
\icmltitlerunning{}
\begin{document}

\onecolumn
  \icmltitle{Low-Dimensional Adaptation of Rectified Flow: A Diffusion and Stochastic
Localization Perspective
}



  \icmlsetsymbol{equal}{*}
   \begin{icmlauthorlist}
    \icmlauthor{Saptarshi Roy}{}
    \quad
    \icmlauthor{Alessandro Rinaldo}{}
    \quad
    \icmlauthor{Purnamrita Sarkar}{}\\
    \vspace{0.2em}
    Department of Statistics and Data Science\\
    University of Texas at Austin, TX, USA
  \end{icmlauthorlist}

  \icmlcorrespondingauthor{Saptarshi Roy}{saptarshi.roy@austin.utexas.edu}

  \icmlkeywords{Machine Learning, ICML}
  \vskip 0.3in



\printAffiliationsAndNotice{}  

\begin{abstract}
  In recent years, Rectified flow (RF) has gained considerable popularity largely due to its generation efficiency and state-of-the-art performance. In this paper, we investigate how well RF automatically adapts to the intrinsic low dimensionality of the support of the target distribution to accelerate sampling. We show that, using a carefully designed choice of the time-discretization scheme and with sufficiently accurate drift estimates, the RF sampler enjoys an iteration complexity of order $O(k/\varepsilon)$ (up to log factors), where $\varepsilon$ is the precision in total variation distance and $k$ is the intrinsic dimension of
 the target distribution. In addition, we show that the denoising diffusion probabilistic model (DDPM) procedure is equivalent to a stochastic version of RF by establishing a novel connection between these processes and stochastic localization. Building on this connection, we further design a stochastic RF sampler that also adapts to the low-dimensionality of the target distribution under mild requirements on the accuracy of the drift estimates, and also with a specific time schedule. We illustrate the efficacy of newly designed time-discretization schedules with simulations on the synthetic data and text-to-image (T2I) data experiments. 
\end{abstract}

\section{Introduction}
Recently, Rectified flow or Flow matching \cite{liu2022rectified, lipman2022flow} has gained much attention due to its state-of-the-art generative performance across various modalities including image \citep{liu2023instaflow, yan2024perflow}, audio \citep{ audio_RF_wang2024frieren, audio_RF_liu2025flashaudio}, and video \citep{video_RF_chen2025goku, video_RF_liu2025improving}. 
Its effectiveness arises from a deterministic ordinary differential equation (ODE) based sampler and the ability of reflow to learn nearly straight transport paths, resulting in faster sampling. This has naturally sparked theoretical interests \cite{bansal2025wasserstein, guan2025mirror, zhou2025an} to understand the generation quality of RF from a convergence viewpoint under a suitable metric. However, these recent results suffer from \textit{curse of dimensionality}. Therefore, they fail to explain the impressive performance of RF for \textit{high-dimensional} data, e.g., image data, even though they reside in a low-dimensional space \cite{ansuini2019intrinsic, pope2021intrinsic}. In contrast, recent works \citep{potaptchik2024linear, azangulov2024convergence,huang2024denoising,li2024adapting,liang2025low} have established low-dimensional adaptation of DDPM and denoising diffusion implicit models (DDIM)  during sampling. This explains the accelerated and high-quality generative capabilities of DDPM and DDIM in practice, even for high-dimensional data.
To the best of our knowledge such results are not currently present for RF models.
To this end, we make three key contributions in this paper: 
\begin{enumerate}
    \item We design a novel U-shaped time-discretization that provably allows RF to \textit{quickly achieve improved generation quality} in the \textit{first reflow} process by adapting to the intrinsic low-dimensionality of the target distribution. 
    This aligns with the recent work
    by \citet{lee2024improvingtrainingrectifiedflows}, which has empirically shown that a certain
U-shaped time-discretization improves the generative quality of the first reflow model. 
    In particular, the RF sampler enjoys $O(k/\varepsilon)$ (upto log factors) iteration complexity, where $\varepsilon$ is the precision of the generated samples in total variation distance and $k$ is the intrinsic dimension of the target distribution. This is particularly useful because it can eliminate repeated reflow steps, which are computationally costly and often cause model collapse~\citep{model_collapse_zhu2024analyzing, reflow_bad_kim2025balanced}.

 \item 
 Using a novel stochastic localization \cite{eldan2013thin,eldan2020taming} perspective, we demonstrate that DDPM is equivalent to the stochastic version of RF (which we refer to as STOC-RF) under a proper time change. Stochastic localization has been recently shown to be provide a unifying and effective framework for representing and analyzing diffusion-based sampling processes \citep{benton2023nearly, montanari2023sampling, el2022information}. 
 This equivalence is convenient as it allows us to view STOC-RF through the lens of well-understood DDPM processes, which are known to adapt to the low-dimensionality of the target distribution. In fact, our analysis shows that stochastic localization can be leveraged to provide a unified and convenient representation of a variety of sampling methods. 

 \item Using the connection between DDPM and STOC-RF, we further design a stochastic RF sampler that provably also adapts to the low-dimensional structure of the target distributions under less stringent assumptions on the learned model. We illustrate the superior performance of this sampler with various simulations. This also addresses one of the themes discussed in the ICML tutorial talk by \citet{liu2025flowing}.
\end{enumerate}
\textbf{Notation.} {Throughout the paper, with some abuse of notations, we will identify  a probability distribution with its Lebesgue density, which we will assume exists unless otherwise noted.} 
For two  distributions $P$ and $Q$ on $\mathbb{R}^d$ with Lebesgue densities $p$ and $q$ respectively, their total variation distance is denoted by $\TV(p,q):= \sup_{A} \vert{P(A) - Q(A)}\vert = 1/2 \int_{\mathbb{R}^d} |p(x) - q(x)| dx $ , where the supremum is over all Borel subsets of $\mathbb{R}^d$.

\section{Background and Preliminaries}
\label{sec: background}

\subsection{Rectified Flow}
\label{sec: Rf}
In this section, we briefly introduce the basics of Rectified flow, a generative model that transitions between two distributions $p_0$ and $p_1$ by solving ODEs. We refer the reader to \citet{Liu2024_FlowNotes,liu2022rectified} for an exhaustive treatment.  
We let $p_1 := \Law(X_1)$ be the target data distribution with its support $\cX \subseteq \bbR^d$, the data generating distribution from which we would like to draw samples. The linear-interpolation process is defined by by
\begin{equation}
\label{eq: RF linear process}
X_t = t X_1 + (1-t) X_0, \quad 0\leq t \leq 1
\end{equation}
where $p_0$ is the \textit{standard Gaussian} distribution. We will assume an independent initial coupling,  i.e., $(X_0, X_1) \sim p_0 \otimes p_1$ and let  $p_t = \Law(X_t)$, for any $t \in [0,1]$.
The RF procedure is a deterministic sampler arising from an ODE with {\it drift (or velocity) function} $v:\bbR^d \times [0, 1] \rightarrow \bbR^d$ defined as the solution to the optimization problem 
\begin{align}
    v &= \argmin_{f} \int \bbE{\Norm{X_1 - X_0 - f(X_t, t)}_2^2} dt , \label{drift_obj}
\end{align}
where the minimization is over all functions $f:\bbR^d \times [0, 1] \rightarrow \bbR^d$. 
For each $t$, The above objective in (\ref{drift_obj}) is minimized at
$
 (x,t) \mapsto   v_t(x) := v(x, t) = \bbE[X_1-X_0 \mid X_t = x]. 
$
 Under appropriate regularity conditions \citep[investigated in][]{bansal2025wasserstein, mena2025statisticalpropertiesrectifiedflow}, the ODE
\begin{align}
    dZ_t = v_t(Z_t)\ dt, \quad Z_0 \sim N(0, I_d) \label{eq: ode-true}
\end{align}
 satisfies the marginal preserving property,
 i.e., $\Law(Z_t) = \Law(X_t) = p_t$, owing to the Fokker-Planck equation
 \begin{equation}
     \label{eq: RF-FP}
     \frac{\partial p_t}{\partial t} + \nabla. (v_t p_t) = 0.
 \end{equation}
 Hence, ODE \eqref{eq: ode-true} can be used for sampling. 

To implement this procedure, one faces two key challenges. 
First, given a sample from the target distribution $p_1$, one has to estimate the drift function, which is naturally done by empirically minimizing the objective in (\ref{drift_obj}) over a large and expressive function class (e.g., U-Net). 
Secondly, for sampling, one must rely on a time-discretization scheme of the ODE \eqref{eq: ode-true}. One such scheme is the Eulerian update rule, which, for a given discretization of the time course in $N$ intervals, with $0 = t_0 < t_1 < \dots < t_N =1$, evaluates 
\begin{align}
\label{eq: emp-ode-disc}
    Y_{t_{i+1}} =  Y_{t_{i}} + \eta_i\widehat v_{t_{i}}(Y_{t_{i}}); \; \eta_i :=  t_{i+1}-t_{i}, 0 \le i \le N-1,
\end{align}
where $Y_0 \sim N(0, I_d)$ and $\widehat v_t(\cdot)$ is the estimated velocity at time $t$. Let $\widehat p_1 := \Law(Y_1)$ denote the distribution of the final step. Other types of time discretized schemes of the ODE \eqref{eq: ode-true} can be chosen for the sampling step, and the resulting discretization errors impact the sampling fidelity differently. This has been the focus of a recent line of works \citep{benton2023nearly, chen2022sampling, li2024towards, li2024sharp, huang2024denoising, liang2025low} aimed at identifying time-discretized versions of DDPM and DDIM samplers to achieve tight convergence rates.

\paragraph{The velocity function.} 
Before proceeding, we illustrate the connection between the velocity $v_t(x)$ and the score function $s_t(x):= \nabla \log p_t(x)$. Recall that $X_0 \sim N(0, I_d)$ and independent of $X_1\sim p_1$. Therefore, Tweedie's formula \citep{efron2011tweedie, meng2021estimating} immediately yields $v_t(x) =  \frac{x}{t} + \frac{(1-t)}{t} s_t(x)$. Thus, estimating $v_t$ is essentially equivalent to estimating the score $s_t$ for $t \in (0,1)$. 


\subsection{Stochastic rectified flow (STOC-RF)}
Since the work of DDIM \cite{song2021denoising} and DDPM \cite{song2021scorebased}, it is well known that SDEs can be converted to ODEs to obtain deterministic samplers. Conversely, it is also possible to convert ODEs to its equivalent SDE form to obtain stochastic samplers. In fact, prior works \cite{albergo2023stochastic, xue2024unifying, Hu_2025_ICCV} have shown that stochastic samplers typically enjoy better generation quality. One reason behind the inferior performance of deterministic samplers is the possibility of error accumulation during the sampling step along the discrete trajectories stemming from inaccurate velocity estimates. In other words, if $Y_{t_i}$ differs significantly from $Z_{t_i}$ for some $t_i$, then the Euler step \eqref{eq: emp-ode-disc} accumulates error due to inaccurate evaluation of $\widehat v_{t_i}(Y_{t_i})$. Therefore, this can further reinforce the deviation in evaluating $Y_{t_{i+1}}$.
A way to mitigate the compounding error from successive Euler
steps is to introduce a stochastic Langevin correction at each step. Empirically, \citet{hu2025amo} showed that the resulting stochastic RF samplers   enjoy better text rendering quality over  deterministic samplers. We further illustrate this point with Figure \ref{fig: ODE vs SDE 2-GMM}, showing that the ODE-based RF trajectories appear to generate more outliers compared to the SDE-based STOC-RF sampler. 
\begin{figure}[h]
  \centering
  \begin{subfigure}[b]{0.45\textwidth}
    \centering
    \includegraphics[width=\textwidth]{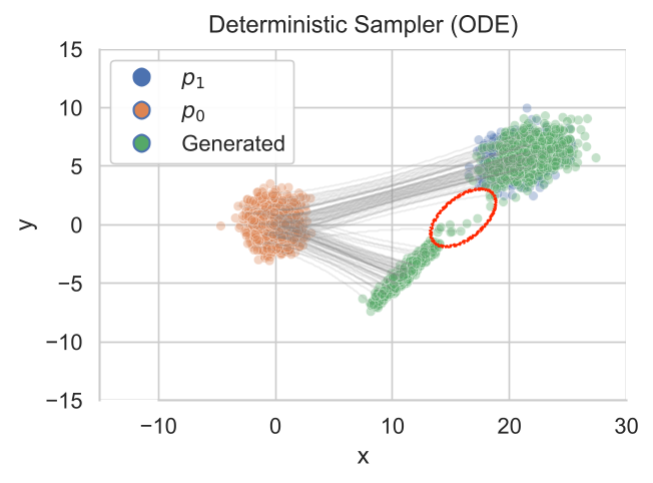}
    \label{fig:sub1}
  \end{subfigure}\hfill
  \begin{subfigure}[b]{0.45\textwidth}
    \centering
    \includegraphics[width=\textwidth]{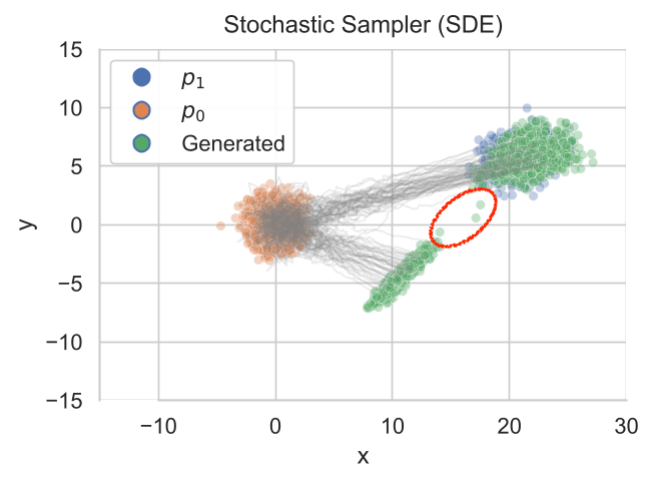}
    \label{fig:sub2}
  \end{subfigure}
  \caption{Trajectories of RF and STOC-RF samplers for a mixture of 2-Gaussian target distribution.}
  \label{fig: ODE vs SDE 2-GMM}
\end{figure}

Following e.g. \citet[Chapter 5]{Liu2024_FlowNotes}, the stochastic version of RF is specified by the SDE 
\begin{equation}
\label{eq: RF-SDE}
    d\tilde Z_t = v_t(\tilde Z_t)dt + \underbrace{\gamma_t s_t(\tilde Z_t)dt + \sqrt{2 \gamma_t} d \tilde B_t}_{\text{Langevin correction}}; \; \tilde Z_0 = Z_0,
\end{equation}
for $t \in [0,1]$. This is obtained from adding a Langevin correction to the original RF ODE \eqref{eq: ode-true}.
Above, $\gamma_t\geq  0$ is a possibly time-varying diffusion coefficient, and $\{\tilde B_t\}_{t\ge 0}$ is the standard Brownian motion in $\bbR^d$. 
One can check (see, e.g.,  \citet[Section 5.2]{Liu2024_FlowNotes} or \citet{song2021scorebased}) that the marginal $\tilde p_t := \Law(\tilde Z_t)$ satisfies  the continuity equation 
\begin{equation}
    \label{eq: RF-SDE-FP}
    \frac{\partial \tilde p_t}{\partial t} + \nabla. (\tilde v_t \tilde p_t) = 0,
\end{equation}
where $\tilde v_t(x) = v_t(x) + \gamma_t s_t(x) - \gamma_t \nabla \log \tilde p_t(x)$.   Since the continuity equations  \eqref{eq: RF-FP} and \eqref{eq: RF-SDE-FP} coincide, $\tilde p_t = p_t$ , i.e., $\Law(\tilde Z_t) = \Law(Z_t)$, for all $t$, demonstrating that the SDE \eqref{eq: RF-SDE} can indeed be used for sampling from $p_1$. In practice, this is accomplished via an appropriate time discretization scheme
\begin{equation}
    \label{eq: RF-SDE-disc}
    \tilde Z_{t_{i+1}} = \tilde Z_{t_i} + \tilde \eta_i \bc{v_{t_i}(\tilde Z_{t_i}) +  \gamma_{t_i} s_{t_i}(\tilde Z_{t_i})} + \sqrt{2 \tilde \eta_i \gamma_{t_i}} \xi_{t_i},
\end{equation}
where $\bc{t_i}_{i=0}^N$ is an appropriate schedule of times, the $\xi_{t_i}$'s are i.i.d. draws from $N(0, I_d)$, and the $\tilde \eta_i$'s are (typically small) step-sizes. The Langevin term essentially acts as a course-corrector to the trajectory of \eqref{eq: RF-SDE-disc} to adjust the trajectory distribution closer to $p_t$. Finally, one has freedom to choose the diffusion coefficients $\gamma_t$. In this paper we will restrict ourselves to the choice $\gamma_t = (1-t)/t$, which, as we will see below, is the \textit{appropriate noise scaling that yields equivalence between the STOC-RF SDE \eqref{eq: RF-SDE} and DDPM through the lens of Stochastic localization}. This also addresses an question raised during the ICML tutorial talk  \citet{liu2025flowing} regarding the motivation of such choice.

\section{Stochastic localization, RF and DDPM}
\label{sec: SL}
\paragraph{Stochastic localization (SL).} 
SL  is a probabilistic technique to study the properties of high-dimensional distributions due to \citet{eldan2013thin,eldan2020taming} that provides a powerful, unifying framework for understanding, analyzing, and improving score-based generative models and sampling algorithms. See  \citet{el2022information}, \citet{montanari2023sampling} and the survey by \citet{shi2025perspectivesstochasticlocalization}. To sample from the target data distribution $p_1$, SL focuses on an appropriate measured-valued stochastic process $\{\nu_s\}_{s\ge 0}$ such that $\nu_s$ ``localizes'' towards $p_1$ as $s \to \infty$. A straightforward way to formulate this  process is by letting, for any $s \geq 0$, $\nu_s := \Law (X_1 \mid U_s)$, where $X_1 \sim p_1$ and 
\begin{equation}
\label{eq: SL process}
    U_s = s X_1 + B_s,
\end{equation}
with $\{B_s\}_{s \ge 0}$  a standard Brownian motion in $\bbR^d$ \citep{montanari2023sampling}, independent of $X_1$. Thus, $\nu_s$ is a random measure depending on $U_s$. Note that $U_s/s \to X_1$ almost surely as $s \to \infty$, i.e., $\nu_s$ does indeed localize around $p_1$. 

While enlightening, the construction \eqref{eq: SL process} does not yield a sampler from $p_1$, as it uses $X_1 \sim p_1$ in the definition of $U_s$. Therefore, it is instead convenient to consider the alternative process $\{\tilde U_s\}_{s \geq 0}$ defined as the solution of the  SDE
\begin{equation}
    \label{eq: SL SDE}
    d\tilde U_s = a_s(\tilde U_s) + d \tilde B_s,
\end{equation}
where $\tilde B_s$ is an independent Brownian motion and, for any $s \geq 0$,  $a_s(u) = \bbE[X_1 \mid U_s = u]$. By classical SDE results -- see \citet[Theorem 8.4.3 ]{oksendal2003stochastic} or \citet[Theorem 7.12 ]{liptser2013statistics} -- $\{U_s\}_{s\ge 0}$ and $\{\tilde U_s\}_{s\ge 0}$ are equivalent in law. Therefore, \eqref{eq: SL SDE} can be used for sampling from $p_1$ as long as we have access to the regression functions $\{ a_s(\cdot), s \geq 0\}$, which can be estimated from the data. We also define the posterior covariance as $\bA_s = \Cov(X_1 \mid U_s)$. The next lemma from \citet{eldan2020taming} allows us to control the growth of the expected posterior covariance.
\begin{lemma}
    \label{lemma: eldan ddt cov lemma}
    For all $s\ge 0$, $\frac{d}{ds} \bbE\bs{\bA_s} = - \bbE\bs{\bA_s^2}$.
\end{lemma}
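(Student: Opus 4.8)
The plan is to reduce the statement to the classical nonlinear-filtering identity for the posterior mean, combined with the law of total covariance. First I would make the posterior explicit. Since $U_s \mid \{X_1 = x\} \sim N(sx, s I_d)$, Bayes' rule shows that $\nu_s$ is an exponential tilt of $p_1$, namely $\nu_s(\d x) \propto p_1(x)\exp\!\big(\langle U_s, x\rangle - \tfrac{s}{2}\Norm{x}_2^2\big)\,\d x$. Accordingly, set $F_s(u) := \log \int_{\bbR^d} p_1(x)\exp\!\big(\langle u, x\rangle - \tfrac{s}{2}\Norm{x}_2^2\big)\,\d x$. Differentiating under the integral sign — legitimate once $X_1$ has, say, sub-Gaussian tails or four finite moments, which we assume — gives the three identities $\nabla_u F_s(u) = \bbE[X_1 \mid U_s = u] =: a_s(u)$, $\nabla^2_u F_s(u) = \Cov(X_1 \mid U_s = u) = \bA_s(u)$, and $\partial_s F_s(u) = -\tfrac12\bbE\big[\Norm{X_1}_2^2 \mid U_s = u\big] = -\tfrac12\big(\Delta_u F_s(u) + \Norm{\nabla_u F_s(u)}_2^2\big)$, where the last step uses $\bbE[\Norm{X_1}_2^2\mid U_s = u] = \mathrm{tr}(\bA_s(u)) + \Norm{a_s(u)}_2^2$.

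Next I would derive the SDE for the posterior mean $m_s := a_s(U_s)$. By the classical result quoted in the excerpt, $\{U_s\}$ solves $\d U_s = a_s(U_s)\,\d s + \d \bar B_s$, where $\bar B$ is a Brownian motion with respect to the filtration $\cF_s$ generated by $\{U_r : r \le s\}$. Applying Itô's formula to $u \mapsto \nabla_u F_s(u)$ along $U_s$, and substituting $\partial_s \nabla_u F_s = \nabla_u \partial_s F_s = -\tfrac12 \nabla_u \Delta_u F_s - (\nabla^2_u F_s)(\nabla_u F_s)$, one finds that the second-order Itô term (which equals $\tfrac12\nabla_u\Delta_u F_s\,\d s$) together with the $a_s\,\d s$ contribution from $\d U_s$ cancels the two remaining drift pieces exactly, leaving $\d m_s = \nabla^2_u F_s(U_s)\,\big(\d U_s - a_s(U_s)\,\d s\big) = \bA_s\,\d \bar B_s$. (Alternatively this is simply the Kalman–Bucy / Kushner–Stratonovich filtering formula and may be quoted directly.) In particular $m_s$ is a martingale — as it must be, since $m_s = \bbE[X_1 \mid \cF_s]$.

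Finally I would combine the two facts. The tower property gives, for every $s$, $\bbE[X_1 X_1^\top] = \bbE\big[\bbE[X_1 X_1^\top \mid \cF_s]\big] = \bbE[\bA_s] + \bbE[m_s m_s^\top]$, and the left-hand side does not depend on $s$. Meanwhile, Itô's product rule applied to $m_s m_s^\top$ together with $\d m_s = \bA_s\,\d\bar B_s$ and the symmetry of $\bA_s$ gives $\d(m_s m_s^\top) = (\text{local martingale}) + \bA_s \bA_s^\top\,\d s = (\text{local martingale}) + \bA_s^2\,\d s$; taking expectations (the local-martingale term is a genuine martingale under the moment assumption) yields $\tfrac{\d}{\d s}\bbE[m_s m_s^\top] = \bbE[\bA_s^2]$. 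Differentiating the identity $\bbE[\bA_s] + \bbE[m_s m_s^\top] = \bbE[X_1 X_1^\top]$ in $s$ then gives $\tfrac{\d}{\d s}\bbE[\bA_s] = -\bbE[\bA_s^2]$, as claimed.

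The main obstacle is not any single computation but the bookkeeping that makes them rigorous: justifying differentiation under the integral in the definition of $F_s$, the applicability of Itô's formula to the smooth-but-unbounded map $u \mapsto \nabla_u F_s(u)$, and the genuine-martingale (rather than merely local-martingale) status of the stochastic integrals, so that expectations may be taken term by term. All of these follow from a moment/tail hypothesis on $p_1$ (sub-Gaussianity suffices), which is presumably available among the paper's standing assumptions.
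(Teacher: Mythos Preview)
Your argument is correct. The paper does not give its own proof of this lemma; it simply attributes the result to \citet{eldan2020taming}. The closest thing to a proof in the paper appears later (in the proof of Proposition~\ref{prop: Main cov bound via trace}), where the pathwise SDE $\d\bA_s = -\bA_s^2\,\d s + \cM_s^{(3)}\,\d B_s$ is quoted from Eldan; taking expectations of that identity immediately yields the lemma.

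Your route is genuinely different and, for this statement alone, more economical. You never compute the dynamics of $\bA_s$ itself; instead you derive only the first-moment filtering equation $\d m_s = \bA_s\,\d\bar B_s$ and then pull the covariance evolution out of the constant decomposition $\bbE[X_1X_1^\top] = \bbE[\bA_s] + \bbE[m_s m_s^\top]$ via It\^o on $m_sm_s^\top$. This avoids the third-moment tensor $\cM_s^{(3)}$ entirely. The trade-off is that Eldan's pathwise SDE for $\bA_s$ carries strictly more information---it is exactly what the paper needs later to control $\Tr(\bA_s^2)$ pathwise in the proof of Proposition~\ref{prop: Main cov bound via trace}---whereas your argument gives only the averaged identity. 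Your caveats about moment hypotheses and the local-versus-true martingale distinction are on point; the paper's standing Assumption~\ref{assumption: bounded support} (bounded support of $p_1$) makes all of those issues trivial here.
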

This is a crucial result for controlling the discretization error of ODE and SDE samplers of RF.

\paragraph{SL and DDPM.}
DDPMs are a class of diffusion-based sampling mechanism for generative AI introduced by \citet{ho2020denoising}.  DDPM first starts with a \textit{forward process} $(Y'_\tau)_{\tau\ge 0}$ starting from $Y_0' = X_1 \sim p_1$ and  evolving according to the SDE
\begin{equation}
    \label{eq: forward process}
    dY_\tau' = - \beta(\tau) Y_\tau' d\tau + \sqrt{2 \beta(\tau)} dB^\prime_\tau, \; Y_0' \sim p_1,
\end{equation}
where $\beta(\tau)$ is the diffusion coefficient and $\bc{B^\prime_\tau}_{\tau\ge 0}$ is standard Brownian motion. If $\beta(\tau) = 1$, then the SDE \eqref{eq: forward process} reduces to the classical Ornstein-Ulhenbeck (OU) process.
Typically, The forwards process is continued till a large time $\tau = N$ such that $Y_N'$ is approximately Gaussian. Let $q_\tau(y_\tau')$ be the marginals of $y_\tau'$ following \eqref{eq: forward process}. Then in order to sample from the target distribution $p_1$, a \emph{reverse process} $\{\overleftarrow Y_\tau\}_{\tau \in [0,N]} $ is formulated, starting at $\overleftarrow Y_0 = Y_N'$ and evolving according to the reverse-time SDE  
\begin{equation}
\label{eq: reverse OU process}
\begin{aligned}
  d\overleftarrow Y_{\tau} =  & \{\overleftarrow Y_\tau + 2 \nabla \log q_{N - \tau}(\overleftarrow y_\tau)\} \beta(N-\tau)d\tau\\
  & + \sqrt{2 \beta(N-\tau)} d \overleftarrow B_\tau,
  \end{aligned}
\end{equation}
where $\{\overleftarrow B_\tau\}_{\tau \ge 0}$ is a standard Brownian motion.
Classical results in SDE \cite{anderson1982reverse, haussmann1986time} show that $\overleftarrow Y_\tau \overset{d}{=} Y_{N - \tau}'$ which allows us to generate samples from $p_1$ via simulating the backward SDE \eqref{eq: reverse OU process}.

It is known that DDPMs and stochastic localization are equivalent \citep{montanari2023sampling}. 
In particular, \citet{benton2023nearly}  showed that the forward process \eqref{eq: forward process} is \textit{equivalent in law} to the SL process \eqref{eq: SL process} under an appropriate time change. In this section, we restate this result in a slightly different fashion that will allow us to easily see the connection between DDPM and RF.   
To this end, we define  $\omega_{\tau} := \exp(-2 \int_0^\tau \beta(u) du)$. Then the following lemma shows that the the \textit{forward process \eqref{eq: forward process} and the SL process \eqref{eq: SL process} are equivalent:}
\begin{lemma}
\label{lemma: time change DDPM-SL}
    Let $\{\tau(s)\}_{s \ge 0}$ be sequence such that $\frac{1 -\omega_{\tau(s)}}{\omega_{\tau(s)}} = \frac{1}{s}$. Then, we have 
    $\bc{\frac{Y'_{\tau(s)}}{\sqrt{\omega_{\tau(s)}}}}_{s\ge 0} \overset{d}{= } \bc{\frac{U_s}{s}}_{s \ge 0}.$
\end{lemma}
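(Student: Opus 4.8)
The plan is to solve the linear forward SDE \eqref{eq: forward process} in closed form and recognize the result as $X_1$ plus a time-changed Brownian motion. Introduce the integrating factor $m_\tau := \exp\br{\int_0^\tau \beta(u)\,du}$, so that $\omega_\tau = m_\tau^{-2}$ and $\sqrt{\omega_\tau} = m_\tau^{-1}$. Applying It\^o's formula to $m_\tau Y'_\tau$ (the $m_\tau$ part has finite variation, so there is no It\^o correction) gives $d(m_\tau Y'_\tau) = m_\tau\sqrt{2\beta(\tau)}\,dB'_\tau$, hence
\[
Y'_\tau = \sqrt{\omega_\tau}\,\br{X_1 + G_\tau}, \qquad G_\tau := \int_0^\tau m_u \sqrt{2\beta(u)}\,dB'_u,
\]
using $Y'_0 = X_1$. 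Since $G$ is a Wiener integral against $B'$ and $B' \perp X_1$, the process $G$ is independent of $X_1$; dividing through by $\sqrt{\omega_\tau}$ yields $Y'_\tau/\sqrt{\omega_\tau} = X_1 + G_\tau$.

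Next I would identify the law of $G$. It is a centered (coordinatewise independent) Gaussian process, and by the It\^o isometry together with the identity $\frac{d}{du} m_u^2 = 2\beta(u) m_u^2$,
\[
\Cov(G_\tau, G_{\tau'}) = \br{\int_0^{\tau\wedge\tau'} 2\beta(u) m_u^2\,du}\, I_d = \br{m_{\tau\wedge\tau'}^2 - 1}\,I_d = \frac{1-\omega_{\tau\wedge\tau'}}{\omega_{\tau\wedge\tau'}}\,I_d =: \psi(\tau\wedge\tau')\, I_d .
\]
On the other side, $U_s/s = X_1 + B_s/s$, and $\{B_s/s\}_{s>0}$ is a centered Gaussian process with $\Cov(B_s/s, B_{s'}/s') = \frac{1}{s\vee s'}\,I_d$. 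Now impose the time change: since (assuming $\beta > 0$ and $\int_0^\infty \beta = \infty$, so that $\psi$ is a continuous strictly increasing bijection) $\psi(\tau(s)) = 1/s$, the map $s \mapsto \tau(s)$ is strictly decreasing, hence for $s \le s'$ we have $\tau(s)\wedge\tau(s') = \tau(s') = \tau(s\vee s')$ and therefore $\Cov\br{G_{\tau(s)}, G_{\tau(s')}} = \psi(\tau(s\vee s'))\, I_d = \frac{1}{s\vee s'}\,I_d$. Thus $\{G_{\tau(s)}\}_{s>0}$ and $\{B_s/s\}_{s>0}$ are centered Gaussian processes with the same covariance function, hence equal in law. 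Adding the independent copy of $X_1$ to each side (same marginal law, same independence structure) gives $\bc{Y'_{\tau(s)}/\sqrt{\omega_{\tau(s)}}}_{s>0} = \bc{X_1 + G_{\tau(s)}}_{s>0} \overset{d}{=} \bc{X_1 + B_s/s}_{s>0} = \bc{U_s/s}_{s>0}$, which is the claim (the endpoint $s=0$ is interpreted as the obvious limit, both sides localizing at $X_1$).

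The step requiring the most care is the passage from equality of one-dimensional marginals (which is just matching variances) to equality of the full laws of the two processes in $s$; this is resolved by tracking the covariance kernel through the orientation-reversing reparametrization $\tau(\cdot)$, as above, and noting that for Gaussian processes matching the mean and covariance kernel suffices. An equivalent route is to invoke the Dambis--Dubins--Schwarz theorem to write $G_\tau = \tilde B_{\psi(\tau)}$ for a standard Brownian motion $\tilde B \perp X_1$, and then apply the time-inversion property of Brownian motion (namely $r \mapsto r B_{1/r}$ is again a standard Brownian motion) to get $\{B_s/s\}_{s>0} \overset{d}{=} \{\tilde B_{1/s}\}_{s>0}$; both descriptions reduce each side to ``$X_1$ plus a standard Brownian motion evaluated at time $1/s$.'' The only other technical point is the mild regularity on $\beta$ needed for $\psi$ (and hence $\tau(\cdot)$) to be a genuine monotone reparametrization with the correct range, which holds under the standard assumptions on the diffusion schedule.
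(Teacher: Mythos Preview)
Your proof is correct and follows essentially the same approach as the paper: solve the linear forward SDE explicitly via the integrating factor to obtain $Y'_\tau/\sqrt{\omega_\tau}=X_1+G_\tau$ with $G$ a Gaussian martingale of quadratic variation $\psi(\tau)=(1-\omega_\tau)/\omega_\tau$, and then identify the law after the time change. The only cosmetic difference is that your primary argument matches the Gaussian covariance kernels directly (exploiting that $s\mapsto\tau(s)$ is decreasing), whereas the paper invokes Dambis--Dubins--Schwarz plus the Brownian time-inversion $\{sB_{1/s}\}\overset{d}{=}\{B_s\}$; you already note this route as an equivalent alternative, so the two arguments coincide.
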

If $\beta(\tau) = 1$ for all $\tau$ (OU process), then the time transformation in Lemma \ref{lemma: time change DDPM-SL} simplifies to be $\tau(s)  = \frac{1}{2}\log(1 + s^{-1})$, and we get $\bc{\frac{Y'_{\tau(s)}}{e^{- \tau(s)}}}_{s\ge 0} \overset{d}{= } \bc{\frac{U_s}{s}}_{s \ge 0}$. Note, this matches the findings in \citet[Section 1.2]{benton2023nearly}. The proof is deferred to Appendix \ref{app: SL = RF = DDPM}.

\paragraph{SL, RF and STOC-RF.} Next, we will also show that the probability path of RF is also equivalent to the SL process \eqref{eq: SL process}. Recall that that RF starts by constructing the linear process $X_t = t X_1 + (1-t)X_0$, where $X_1 \sim p_1, X_0 \sim N(0, I_d)$ and $t \in [0,1]$. To this end, we consider the process 
\begin{equation}
    \label{eq: RF linear stoc process}
    \tilde X_t = t X_1 +  t W_{(1-t)^2/t^2}, \quad t \in [0,1]
\end{equation}
where $\{W_\tau\}_{\tau \ge 0}$ is a Brownian motion. Note that $\Law(\tilde X_t) = \Law (X_t)$, for all $t \in [0,1]$. In fact, as the next result show, the entire process \eqref{eq: RF linear stoc process} has the same law of the SL process \eqref{eq: SL process}, up to time-change.
\begin{lemma}
\label{lemma: SL and RF are equivalent}
    Let $\{t(s)\}_{s\ge0}$ be a sequence such that $\br{\frac{1 - t(s)}{t(s)}}^2 = \frac{1}{s}$. Then, we have 
    $
        \bc{\frac{\tilde X_{t(s)}}{t(s)}}_{s \ge 0} \overset{d}{=} \bc{\frac{U_s}{s}}_{s \ge 0}.
    $
\end{lemma}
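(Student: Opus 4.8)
The plan is to strip the common factor $X_1$ from both processes and recognize the remaining Gaussian parts as two representations of the same Brownian motion via time inversion. First I would rewrite each side. Dividing \eqref{eq: RF linear stoc process} by $t(s)$ gives $\tilde X_{t(s)}/t(s) = X_1 + W_{(1-t(s))^2/t(s)^2}$, and by the defining relation $\br{(1-t(s))/t(s)}^2 = 1/s$ this equals $X_1 + W_{1/s}$. On the other hand, dividing \eqref{eq: SL process} by $s$ gives $U_s/s = X_1 + B_s/s$. In both expressions $X_1 \sim p_1$ appears with the same law and is independent of the driving Brownian motion ($W$ in the first case, $B$ in the second). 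Hence it suffices to prove the process-level identity $\bc{W_{1/s}}_{s > 0} \overset{d}{=} \bc{B_s/s}_{s > 0}$ and then couple on $X_1$.

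Next I would invoke the classical time-inversion property of Brownian motion: if $(B_s)_{s \ge 0}$ is a standard $d$-dimensional Brownian motion, then the process $(C_r)_{r > 0}$ defined by $C_r := r\, B_{1/r}$, together with $C_0 := 0$, is again a standard Brownian motion. This is verified by checking that $C$ is a centered Gaussian process with covariance $\bbE[C_r C_{r'}^\top] = (r \wedge r') I_d$ and has a.s.\ continuous paths, the only delicate point being continuity at the origin, $C_r \to 0$ as $r \to 0$, which is exactly the strong law of large numbers for Brownian motion, $B_u/u \to 0$ as $u \to \infty$. Writing $s = 1/r$, we get $B_s/s = r\, B_{1/r} = C_{1/s}$, so $\bc{B_s/s}_{s > 0} = \bc{C_{1/s}}_{s > 0}$ with $C$ a standard Brownian motion; since $W$ is also a standard Brownian motion and $s \mapsto 1/s$ is a fixed deterministic reparametrization, $\bc{W_{1/s}}_{s > 0}$ and $\bc{C_{1/s}}_{s > 0}$ have the same law. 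Combining this with the first paragraph, and using that $X_1$ is shared and independent of the Brownian motions, yields $\bc{\tilde X_{t(s)}/t(s)}_{s > 0} \overset{d}{=} \bc{U_s/s}_{s > 0}$; the boundary point $s = 0$ is handled by the natural limiting convention (which is the only range relevant for the localization $\nu_s \to p_1$).

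I do not expect a serious obstacle here: the sole mathematical content is Brownian time inversion, and the remainder is bookkeeping. The one point that warrants a little care is that we need equality of the \emph{entire processes} indexed by $s$, i.e.\ agreement of all finite-dimensional marginals jointly together with path continuity, not merely of the one-dimensional marginals at each fixed $s$ (the latter being just the already-noted fact $\Law(\tilde X_t) = \Law(X_t)$). This stronger statement is immediate once each side is written as $X_1$ plus an independent standard Brownian motion run under the same deterministic time change $s \mapsto 1/s$, since then matching finite-dimensional distributions reduces to matching those of a single standard Brownian motion.
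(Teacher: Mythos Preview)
Your proof is correct and follows essentially the same approach as the paper: both reduce the claim to the classical time-inversion property of Brownian motion. The paper multiplies through by $s$ and observes that $\{s\,W_{1/s}\}_{s\ge 0}$ is a standard Brownian motion, whereas you stay with the $1/s$-scaled version and apply time inversion to $B_s/s$; these are trivially equivalent, and your write-up is in fact more careful about the process-level equality and the boundary behavior at $s=0$ than the paper's.
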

The proof is in Appendix \ref{app: SL = RF = DDPM}. Crucially, \Cref{lemma: SL and RF are equivalent} yields the following Corollary to \Cref{lemma: eldan ddt cov lemma} which is used to control the discretization error of the RF sampler.
\begin{corollary}
    \label{cor: my ddt cov version}
    Let $\bSigma_t := \Cov(X_1\mid X_t)$ for $t \in [0,1)$. Then $\frac{d}{dt} \bbE (\bSigma_t) = - \frac{2t}{(1-t)^3} \bbE(\bSigma_t^2)$.
\end{corollary}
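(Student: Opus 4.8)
The plan is to transport the identity of \Cref{lemma: eldan ddt cov lemma}, which lives on the stochastic-localization clock $s$, back onto the rectified-flow clock $t$ using the explicit time change of \Cref{lemma: SL and RF are equivalent}, and then to finish with the chain rule.

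\emph{Step 1 (invert the time change).} First I would invert the defining relation $\big(\tfrac{1-t(s)}{t(s)}\big)^2 = \tfrac{1}{s}$ of \Cref{lemma: SL and RF are equivalent} to obtain the map $s(t) = \tfrac{t^2}{(1-t)^2}$, which is $C^\infty$ and strictly increasing on $[0,1)$; a one-line quotient-rule computation gives $s'(t) = \tfrac{2t}{(1-t)^3}$.

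\emph{Step 2 (identify the posterior covariances).} Next I would show $\bSigma_t \overset{d}{=} \bA_{s(t)}$, hence $\bbE[\bSigma_t] = \bbE[\bA_{s(t)}]$ and $\bbE[\bSigma_t^2] = \bbE[\bA_{s(t)}^2]$. Since $X_t = t X_1 + (1-t) X_0$ with $X_0 \sim N(0,I_d)$ independent of $X_1$, conditioning on $X_t$ is the same as conditioning on $X_t/t = X_1 + \tfrac{1-t}{t} X_0$; likewise $U_s = s X_1 + B_s$ generates the same $\sigma$-algebra as $U_s/s = X_1 + B_s/s$. For $s = s(t)$ the additive Gaussian noises have equal covariances $\tfrac{(1-t)^2}{t^2} I_d = \tfrac{1}{s} I_d$, so $(X_1, X_t/t) \overset{d}{=} (X_1, U_{s(t)}/s(t))$ and therefore $\bSigma_t = \Cov(X_1 \mid X_t/t) \overset{d}{=} \Cov(X_1 \mid U_{s(t)}/s(t)) = \bA_{s(t)}$. (Equivalently, this is \Cref{lemma: SL and RF are equivalent} read jointly with the common variable $X_1$, together with $\Law(X_1,\tilde X_t) = \Law(X_1,X_t)$.)

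\emph{Step 3 (chain rule).} Combining Steps 1--2 with \Cref{lemma: eldan ddt cov lemma},
\[
\frac{d}{dt}\,\bbE[\bSigma_t] \;=\; \frac{d}{dt}\,\bbE[\bA_{s(t)}] \;=\; \Big(\tfrac{d}{ds}\bbE[\bA_s]\Big)\Big|_{s=s(t)}\, s'(t) \;=\; -\,\bbE\big[\bA_{s(t)}^2\big]\, s'(t) \;=\; -\,\frac{2t}{(1-t)^3}\,\bbE\big[\bSigma_t^2\big],
\]
which is the claimed identity; differentiability of $t \mapsto \bbE[\bA_{s(t)}]$ is inherited from that of $s \mapsto \bbE[\bA_s]$ (guaranteed by \Cref{lemma: eldan ddt cov lemma}) together with the $C^\infty$ regularity of $s(t)$ on $[0,1)$. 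The main obstacle is Step 2 — making sure the distributional identity $\bSigma_t \overset{d}{=} \bA_{s(t)}$ is legitimate, i.e., that rescaling by a nonzero constant leaves the conditioning $\sigma$-algebra unchanged and that the joint law of $(X_1,X_t)$ transports correctly onto $(X_1,U_{s(t)})$ — but this is bookkeeping rather than genuine analysis, and once it is in place the corollary follows immediately.
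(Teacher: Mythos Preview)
Your proposal is correct and follows essentially the same route as the paper: establish $\bSigma_t \overset{d}{=} \bA_{s(t)}$ via the time change $s(t)=t^2/(1-t)^2$, compute $s'(t)=2t/(1-t)^3$, and apply the chain rule together with \Cref{lemma: eldan ddt cov lemma}. The only cosmetic difference is that the paper routes Step~2 through the intermediate process $\tilde X_t$ of \eqref{eq: RF linear stoc process} and \Cref{lemma: SL and RF are equivalent}, whereas you argue the distributional identity directly by matching the additive Gaussian noise in $X_t/t$ and $U_{s(t)}/s(t)$; both arrive at the same identification.
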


See \Cref{sec: RF covariance} for complete proof. Additionally, \Cref{lemma: time change DDPM-SL} and \Cref{lemma: SL and RF are equivalent} leads to the following:
\begin{equation}
    \label{eq: DDPM and RF equivalent}
    \bc{\frac{Y^{'}_{\tau}}{\sqrt{\omega_{\tau}}}}_{\tau \ge 0} \overset{d}{=} \bc{\frac{\tilde X_{t(\tau)}}{t(\tau)}}_{\tau \ge 0},
\end{equation}
where $t(\tau) = \frac{\sqrt{\omega_\tau}}{\sqrt{\omega_\tau} + \sqrt{1 - \omega_{\tau}}}$. In fact, under the same time-change the STOC-RF process \eqref{eq: RF-SDE} and the backward process \eqref{eq: reverse OU process} are equivalent.
\begin{proposition}
    \label{prop: RF-SDE and backwrd process are equivalent}
    Let $t(\tau) = \frac{\sqrt{\omega_\tau}}{\sqrt{\omega_\tau} + \sqrt{1 - \omega_{\tau}}}$, and $\{\tilde Z_t\}_{t\ge 0}$ be a solution to the SDE \eqref{eq: RF-SDE}. Then,  
    $\bc{\frac{\sqrt{\omega_{N-\tau}}\tilde Z_{t(N - \tau)}}{t(N - \tau)}}_{\tau\ge 0}$ is a solution to the SDE \eqref{eq: reverse OU process}. Conversely, if $\overleftarrow Y_\tau$ is a solution of SDE \eqref{eq: reverse OU process}, then $\bc{\frac{ \overleftarrow Y_{\tau(t)}}{\sqrt{\omega_{N-\tau(t)}} + \sqrt{1 - \omega_{N - \tau(t)}}}}_{\tau\ge 0}$ is a solution of SDE \eqref{eq: RF-SDE}, where $\tau(t)$ is the unique solution of the integral equation $\int_{0}^{N-\tau} \beta(u) du = \log\br{\sqrt{1  + \frac{(1-t)^2}{t^2}}}$.
 \end{proposition}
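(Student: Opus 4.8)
The plan is to verify by a direct change of variables — a spatial rescaling together with a deterministic time change — that the transformed STOC-RF process satisfies the reverse SDE \eqref{eq: reverse OU process}, and symmetrically for the converse. Following the statement, write $\omega_r := \exp(-2\int_0^r\beta(u)\,du)$ and $t(r) := \sqrt{\omega_r}/(\sqrt{\omega_r}+\sqrt{1-\omega_r})$, and abbreviate $t_\tau := t(N-\tau)$ and $c_\tau := \sqrt{\omega_{N-\tau}}+\sqrt{1-\omega_{N-\tau}} = \sqrt{\omega_{N-\tau}}/t_\tau$, so that the claimed transformation is exactly $G_\tau := c_\tau\,\tilde Z_{t_\tau}$. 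Since $\beta>0$, $\omega_r$ is strictly decreasing, hence $\tau\mapsto t_\tau$ is a strictly increasing $C^1$ bijection of $[0,N]$ onto $[t_0,1]$; its inverse $\tau(\cdot)$ is pinned down by $\omega_{N-\tau(t)} = t^2/(t^2+(1-t)^2)$, which after taking logarithms is precisely the integral equation $\int_0^{N-\tau}\beta(u)\,du = \log\sqrt{1+(1-t)^2/t^2}$ appearing in the statement. I will also use $\dot\omega_\tau := \tfrac{d}{d\tau}\omega_{N-\tau} = 2\beta(N-\tau)\,\omega_{N-\tau}$ and the identity $\gamma_{t_\tau} = (1-t_\tau)/t_\tau = \sqrt{1-\omega_{N-\tau}}/\sqrt{\omega_{N-\tau}}$, which follows from the prescribed diffusion coefficient $\gamma_t = (1-t)/t$ together with the definition of $c_\tau$.

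The next ingredient is a ``score dictionary'' relating the DDPM score $\nabla\log q_{N-\tau}$ to the RF score $s_t$. By \eqref{eq: DDPM and RF equivalent} (equivalently, by chaining \Cref{lemma: time change DDPM-SL} and \Cref{lemma: SL and RF are equivalent}), the marginal $q_r = \Law(Y'_r)$ is the law of $(\sqrt{\omega_r}/t(r))\,X_{t(r)}$ with $X_{t(r)}\sim p_{t(r)}$; writing out the associated density transformation, which rescales both the argument and the Jacobian, yields $\nabla\log q_{N-\tau}(y) = c_\tau^{-1}\, s_{t_\tau}(y/c_\tau)$, hence along the trajectory $\nabla\log q_{N-\tau}(G_\tau) = c_\tau^{-1}\, s_{t_\tau}(\tilde Z_{t_\tau})$. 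Together with Tweedie's identity $v_t(x) = x/t + \gamma_t s_t(x)$ from the excerpt, the STOC-RF drift collapses to $v_{t}(\tilde Z_{t}) + \gamma_{t} s_{t}(\tilde Z_{t}) = \tilde Z_{t}/t + 2\gamma_{t} s_{t}(\tilde Z_{t})$.

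Then I would apply Itô's formula for the deterministic $C^1$ time change to $G_\tau = c_\tau\,\tilde Z_{t_\tau}$, using that the martingale part $\int_0^{t_\tau}\sqrt{2\gamma_s}\,d\tilde B_s$ of $\tilde Z$ becomes $\int_0^\tau\sqrt{2\gamma_{t_u}\,\dot t_u}\,d\hat B_u$ for a Brownian motion $\hat B$ in $\tau$-time (here $\dot t_\tau>0$), with no Itô correction from the deterministic factor $c_\tau$:
\[
dG_\tau = \bs{\dot c_\tau\,\tilde Z_{t_\tau} + c_\tau\dot t_\tau\br{v_{t_\tau}(\tilde Z_{t_\tau}) + \gamma_{t_\tau}s_{t_\tau}(\tilde Z_{t_\tau})}}d\tau + c_\tau\sqrt{2\gamma_{t_\tau}\dot t_\tau}\;d\hat B_\tau.
\]
Substituting the score dictionary and $\tilde Z_{t_\tau} = G_\tau/c_\tau$ and matching against \eqref{eq: reverse OU process}, the claim reduces to two scalar identities: $c_\tau^2\,\dot t_\tau\,\gamma_{t_\tau} = \beta(N-\tau)$ (arising from the diffusion coefficient, and equivalently from the $s_{t_\tau}$-term of the drift) and $\dot c_\tau/c_\tau + \dot t_\tau/t_\tau = \beta(N-\tau)$ (from the linear $G_\tau$-term). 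The second is immediate from $\tfrac{d}{d\tau}\log\omega_{N-\tau} = 2\beta(N-\tau)$ and $\log t_\tau = \tfrac12\log\omega_{N-\tau} - \log c_\tau$, which give $\dot t_\tau/t_\tau = \beta(N-\tau) - \dot c_\tau/c_\tau$; the first is a short algebraic computation using $\dot\omega_\tau = 2\beta(N-\tau)\omega_{N-\tau}$ and the identity $c_\tau\sqrt{1-\omega_{N-\tau}} = \sqrt{\omega_{N-\tau}(1-\omega_{N-\tau})} + (1-\omega_{N-\tau})$. Having matched drift and diffusion coefficients, the displayed SDE for $\{G_\tau\}$ is exactly \eqref{eq: reverse OU process}, which is the claim. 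For the converse, set $\tilde Z_t := \overleftarrow Y_{\tau(t)}/c_{\tau(t)}$ and run the same Itô/time-change computation in the variable $t$ — legitimate since $\tau(\cdot)$ is a $C^1$ bijection and the identities above are equalities — to recover \eqref{eq: RF-SDE} with $\gamma_t = (1-t)/t$.

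I expect the main obstacle to be bookkeeping rather than conceptual depth: keeping the $(N-\tau)$-versus-$\tau$ reparametrization, the sign of $\dot t_\tau$, and the Brownian-motion rescaling mutually consistent, and deriving the score dictionary carefully from the law-level equivalence \eqref{eq: DDPM and RF equivalent} (one must track that the rescaling acts on both the argument of $s_t$ and the Jacobian of the density, so that the gradient picks up exactly one factor of $c_\tau^{-1}$). Once $\nabla\log q_{N-\tau}(\cdot) = c_\tau^{-1}s_{t_\tau}(\cdot/c_\tau)$ is in hand, the two scalar identities are mechanical, so the substance of the argument lies in the time-change setup and the score translation.
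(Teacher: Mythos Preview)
Your proposal is correct and follows essentially the same route as the paper's proof: both arguments apply the product rule/It\^o formula to the rescaled, time-changed process, invoke the score dictionary $\nabla\log q_{N-\tau}(y)=c_\tau^{-1}s_{t_\tau}(y/c_\tau)$ derived from the law-level equivalence \eqref{eq: DDPM and RF equivalent}, and verify that the resulting drift and diffusion match those of \eqref{eq: reverse OU process}. The only difference is organizational --- the paper expands the computation line by line in the variable $\sigma_{t_\tau}^{-1}\tilde Z_{t_\tau}$ (your $c_\tau\tilde Z_{t_\tau}$), whereas you distill the matching step into the two scalar identities $c_\tau^2\dot t_\tau\gamma_{t_\tau}=\beta(N-\tau)$ and $\dot c_\tau/c_\tau+\dot t_\tau/t_\tau=\beta(N-\tau)$, which is a cleaner bookkeeping but not a different method.
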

The above proposition is very useful in analyzing the convergence rate of STOC-RF. Essentially, Proposition \ref{prop: RF-SDE and backwrd process are equivalent} allows uso to transit between DDPM and STOC-RF under simple time change and scaling. Therefore, convergence properties of one process can be carried over to the other one with the same transformation. This is the key insight that we will use to prove our convergence result of STOC-RF in later sections. The detailed proof is deferred to Appendix \ref{app: equivalence of backward and stoc-RF}.

\paragraph{SL and Stochastic Interpolants.}
\label{sec: RF and deterministic interpolants}
Lemma  \ref{lemma: SL and RF are equivalent} and the equivalence result in \eqref{eq: DDPM and RF equivalent} can be extended to cover general stochastic interpolation \cite{albergo2023building, albergo2023stochastic} schemes beyond RF. 
In detail, instead of a linear interpolation, one can consider a general deterministic affine interpolation process $I_\theta = a_\theta X_1 + b_\theta X_0$, where $a_{(\cdot)}$ and $b_{(\cdot)}$ are smooth functions of $\theta \in [0,1]$, with $a_0 = b_1 = 0$, and $a_1 = b_0 = 1$. When $X_0 \sim N(0,I_d)$, the associated stochastic interpolant is the process $(\tilde I_\theta)_{\theta \in [0,1]}$, where $\tilde I_\theta = a_\theta X_1 + a_\theta W_{r^2_{\theta}}$, with $r_\theta = b_\theta/a_\theta$ and $\bc{W_t}_{t\ge 0}$  a Brownian motion. It is easy to see that $\tilde I_\theta \overset{d}{=} I_\theta$, for each $\theta$. Moreover, each stochastic interpolant can be rescaled and time-transformed to return the SL process, in a sense made precise by the next result. 
\begin{proposition}
\label{prop: SL and general interpolant are equivalent}
  $\bc{\frac{\tilde I_{\theta(s)}}{a_{\theta(s)}}}_{s\ge 0 } \overset{d}{=} \bc{\frac{U_s}{s}}_{s\ge 0}$, with $r^2_{\theta(s)} = \frac{1}{s}$.  
\end{proposition}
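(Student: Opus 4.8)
The plan is to reduce the statement to the time–inversion invariance of Brownian motion, exactly as in the proof of \Cref{lemma: SL and RF are equivalent} (which is the special case $a_\theta=\theta$, $b_\theta=1-\theta$, so that $r_\theta=(1-\theta)/\theta$). First I would fix the reparametrization: under the stated hypotheses ($a_{(\cdot)},b_{(\cdot)}$ smooth, $a_0=b_1=0$, $a_1=b_0=1$), the ratio $r_\theta=b_\theta/a_\theta$ decreases continuously from $r_0=+\infty$ to $r_1=0$, and in particular $a_\theta>0$ for $\theta\in(0,1]$. Hence for every $s>0$ there is a unique $\theta(s)\in(0,1)$ with $r_{\theta(s)}^2=1/s$, and $s\mapsto\theta(s)$ is a continuous increasing bijection of $(0,\infty)$ onto $(0,1)$, with $\theta(s)\to 0$ as $s\to 0^+$ and $\theta(s)\to 1$ as $s\to\infty$.

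Next I would divide out $a_\theta$. By definition $\tilde I_\theta=a_\theta X_1+a_\theta W_{r_\theta^2}=a_\theta\bigl(X_1+W_{r_\theta^2}\bigr)$, so $\tilde I_{\theta(s)}/a_{\theta(s)}=X_1+W_{r_{\theta(s)}^2}=X_1+W_{1/s}$ for all $s>0$, and $\to X_1$ as $s\to 0^+$ by continuity of $W$ at $0$. Now invoke the standard fact that if $\{W_\tau\}_{\tau\ge 0}$ is a standard $d$‑dimensional Brownian motion, then the process $\hat B$ defined by $\hat B_0:=0$ and $\hat B_s:=sW_{1/s}$ for $s>0$ is again a standard Brownian motion (continuity at $s=0$ being the usual law of large numbers for $W$). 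Moreover $\hat B$ is a measurable functional of $W$, hence independent of $X_1$ since $W\perp X_1$ in the construction of the stochastic interpolant.

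Finally I would identify the two processes. Set $U_s:=sX_1+\hat B_s$; by the previous paragraph this has exactly the law of the SL process \eqref{eq: SL process}, i.e.\ $X_1\sim p_1$ independent of a standard Brownian motion. Then $U_s/s=X_1+\hat B_s/s=X_1+W_{1/s}=\tilde I_{\theta(s)}/a_{\theta(s)}$ for every $s>0$, and both sides equal $X_1$ at $s=0$. Since the two $s$‑indexed processes $\{\tilde I_{\theta(s)}/a_{\theta(s)}\}_{s\ge 0}$ and $\{U_s/s\}_{s\ge 0}$ are obtained from the same pair $(X_1,W)$ by the same pointwise‑in‑$s$ maps, their laws on path space coincide, which is the claim.

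\textbf{Main obstacle.} Everything after the reparametrization is a direct transcription of the Brownian time‑inversion identity already used for \Cref{lemma: SL and RF are equivalent}; the only point I would treat with care is the first step, namely verifying that the endpoint/smoothness conditions on $a_\theta,b_\theta$ genuinely force $r_\theta$ to be a strictly monotone bijection of $[0,1]$ onto $[0,+\infty]$, so that $\theta(s)$ is well defined for \emph{every} $s>0$. If one only assumes $a,b$ smooth with the stated endpoint values this need not hold verbatim (e.g.\ $r_\theta$ could be non‑monotone in the interior), so I would either add monotonicity of $r_\theta$ as an explicit hypothesis — it holds for all interpolants used in practice, including the linear one — or state the equivalence on the range of $s\mapsto 1/r_\theta^2$.
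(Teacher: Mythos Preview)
Your proposal is correct and follows essentially the same route as the paper: divide by $a_{\theta(s)}$ to reduce to $X_1+W_{1/s}$, then invoke the Brownian time-inversion $\hat B_s:=sW_{1/s}$ to match the SL process $U_s=sX_1+\hat B_s$. The paper's proof is in fact terser than yours and does not discuss the well-definedness of $\theta(s)$ or the monotonicity of $r_\theta$ at all; your ``main obstacle'' is a genuine regularity point that the paper leaves implicit, so your extra care there is warranted rather than superfluous.
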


In particular, in virtue of Lemma \ref{lemma: SL and RF are equivalent}, RF is equal in law to any interpolant process $I_\theta$ under the time change $t(\theta) = \frac{a_\theta}{a_\theta + b_\theta}$. This equivalence among stochastic interpolants is not new, as it has also been shown in \citet{Liu2024_FlowNotes}  by direct arguments. Connection between diffusion and flow matching has also been studied in \citet{albergo2023stochastic, albergo2022building,gao2025diffusionmeetsflow, ma2024sit, rout2025semantic} through different perspectives.
By leveraging the representation properties of the SL process, we establish this result in a more unified manner, which has the added benefit of directly applying to both stochastic interpolants and SDEs. The proof of \Cref{prop: SL and general interpolant are equivalent} is deferred to Appendix \ref{app: proof of SL and general interpolants equivalence}.

\section{Main Results}
\label{sec: main results}
In this section, we present our main convergence results for both deterministic and stochastic samplers. We  begin by introducing and commenting on our key assumptions. Throughout, $\bc{t_i}_{i=0}^N$ refers to the time schedule for both the RF and STOC-RF procedures, as in  \eqref{eq: emp-ode-disc} and \eqref{eq: RF-SDE-disc}, respectively, and $N$ is the number of time steps.

\subsection{Low-dimensionality of target distribution}
To formalize the notion of low-dimensionality of $p_1$, we turn to the geometric complexity of its support $\cX$ as measured by its metric entropy. 

\begin{assumption}[Low-dimensionality]
\label{assumption: intrinsic dimension}
    Let $\epsilon_0 = N^{-c_{\epsilon_0}}$ for some sufficiently large $c_{\epsilon_0}>0$. The $\epsilon_0$ covering number of $\cX$ with respect to the Euclidean norm, $\cN(\cX, \Norm{\cdot}_2, \epsilon_0)$, satisfies
    $
    \log \cN(\cX, \Norm{\cdot}_2, \epsilon_0) \le C_{\rm cover} k \log N,
    $
    for some constant $C_{\rm cover}>0$ and integer $k$. The quantity $k$ is referred as the intrinsic-dimension of $\cX$.
\end{assumption}
The above assumption was introduced in recent works by \citet{huang2024denoising,li2024adapting,liang2025low} on adaptation of diffusion methods to intrinsic low dimensionality of the target function. Notably, by increasing the number of steps $N$, the assumption is less and less stringer.  Informally, Assumption \ref{assumption: intrinsic dimension} requires the support of $p_1$ to be concentrated \textit{on or near} a $k$-dimensional set.
The above definition of intrinsic dimension is fairly general, as it covers a variety of important low-dimensional structures. Examples satisfying Assumption \ref{assumption: intrinsic dimension} include $k$-dimensional linear subspace in $\bbR^d$ and $k$-dimensional non-linear manifolds (provided $\cX$ is polynomially bounded as in Assumption \ref{assumption: bounded support}). Structures with doubling-dimension $k$ \citep{dasgupta2008random, kpotufe2012tree} also satisfies Assumption \ref{assumption: intrinsic dimension}. More detailed discussion on this can be found in Section 4.1.1 of \citet{huang2024denoising}. Moreover, in practice, target distribution (e.g., image data) is supported on a lower dimensional manifold and the intrinsic dimension $k \ll d$ \citep{facco2017estimating, ansuini2019intrinsic, pope2021intrinsic}.
\begin{assumption}[Bounded support]
    \label{assumption: bounded support}
    There exists a universal constant $c_R>0$ such that 
    $
    \sup_{x \in \cX} \Norm{x}_2 \le R, \text{where $R = N^{c_R}$}.
    $
\end{assumption}
The bounded support assumption is not as stringent as it may appear at first, as the diameter of $\cX$ is allowed to scale polynomially (with arbitrarily large degree) in the
number of iterations of the sampler, which accommodates a wide range of practical applications, such as
image generation.
\subsection{Assumptions on velocity approximation error}
Our next set of assumptions are concerned with the approximation error of the estimated velocity field $\hat v_{t}$. 
 
\begin{assumption}[Velocity approximation]
\label{assumption: velocity approximation}
    Define $(\evel_i)^2:= \bbE \Norm{v_{t_i}(X_{t_i}) - \widehat{v}_{t_i}(X_{t_i})}_2^2$. Then, 
    $\frac{1}{N} \sum_{i =1}^N (\evel_i)^2 \le \meanevel^2.$
\end{assumption}
The above assumption essentially requires a good control on the mean-squared error of the estimated velocity field. Similar assumptions have been considered in \citet{bansal2025wasserstein, guan2025mirror} to prove Wasserstein convergence of RF. In the context of score-matching algorithms, similar assumptions related to the score-approximation error have also appeared in prior works such as \citet{benton2023nearly, chen2022sampling, li2024sharp, liang2025low}. 


\begin{assumption}[Higher-order approximations]
    \label{assumption: higher-order approximation error}
   $\;$
    \begin{enumerate}[label=(\alph*)]
        \item \label{assumption: Jone} Let $(\evelJone_i)^2 := \bbE \Norm{\nabla v_{t_i}(X_{t_i}) - \nabla \widehat v_{t_i}(X_{t_i})}_F^2$. Then, $N^{-1} \sum_{i =1}^N (\evelJone_i)^2 \le \meanevelJone^2$. Also, we assume $\sup_{x \in \bbR^d} \eta_i \Norm{\nabla v_{t_i}(x) - \nabla \widehat v_{t_i}(x)}_{\op} \le 1/8$ for all $0 \le i \le N-2$.

        \item \label{assumption: Jtwo} Let $(\evelJtwo_i)^2 := \bbE [\Tr(\nabla v_{t_i}(X_{t_i}) - \nabla \widehat v_{t_i}(X_{t_i}))^2]$. Then, $N^{-1} \sum_{i =1}^N (\evelJtwo_i)^2 \le \meanevelJtwo^2$.

        \item \label{assumption: Hessian} Let $(\evelH_i)^2 := \bbE \Norm{\nabla \Tr(\nabla v_{t_i}(X_{t_i}) - \nabla \widehat v_{t_i}(X_{t_i}))}_2^2$. Then, $N^{-1} \sum_{i =1}^N (\evelJtwo_i)^2 \le \meanevelH^2$.
    \end{enumerate}
\end{assumption}
Nearly identical assumptions have been used in \citet{liang2025low} in the context of DDIM. Intuitively, an adequate control of the higher-order approximation terms mitigates the propagation of  error during sampling, as the deterministic samplers are unable to self-correct the path of the flow. 
\subsection{Convergence rate of rectified flow}
\paragraph{Time partition.} Our convergence guarantees for the RF sampler \eqref{eq: emp-ode-disc} depend crucially on a carefully constructed U-shaped time schedule $\{t_i\}_{i = 0}^N$, which we describe next.
For given, user-specified values $h\in (0,1)$ and $\delta \ge 0$, we consider the non-uniform time partition 
\begin{equation}
\label{eq: gemoetric time discretization}
t_j =
\begin{cases}
0, & j = 0, \\[4pt]
\delta, & j = 1, \\[6pt]
(1+h)\, t_{j-1}, & 2 \le j \le \dfrac{N}{2}, \\[8pt]
1 - (1+h)\,(1 - t_{j+1}), & \dfrac{N}{2} < j \le N-2, \\[6pt]
1 - \delta, & j = N-1, \\[6pt]
1, & j = N .
\end{cases}
\end{equation}
 We set $h$ such that $t_{N/2} = \delta (1+h)^{(N-2)/2} = 1/2$. Using $N-2 > N/2$ (for $N>4$), we arrive at 
\begin{equation}
\label{eq: value of h}
\frac{h}{2} \le \log(1+h) = \frac{2\log (\frac{1}{2 \delta})}{N-2} \Rightarrow  h \le \frac{8 \log (\frac{1}{2 \delta})}{N}.
\end{equation}
 We remark that the proposed time schedule is practicable, as it does not depend on any properties of $p_1$.
Typically, $\delta$ is chosen to be small, as
this give rise to a U-shaped time discretization as shown in Figure \ref{fig: hist of geom timegrid}.  This type of choice is not unusual \cite{lee2024improvingtrainingrectifiedflows}, as it yields more accurate estimates  $\widehat v_t$ during training when $t \approx 0$ and $t \approx 1$. Prior empirical work by 
non-uniform time discretizations have also been considered by \citet{huang2024denoising, li2024adapting, liang2025low} in the context DDPM and DDIM to establish low-dimensional adaptation of the models. It is important to emphasize that the specific form of the time schedule \eqref{eq: gemoetric time discretization}, while conforming to best empirical practices, is theoretically grounded, and it was derived directly as a result of our technical analysis of the RF dynamics (see Step 7 of the proof of Theorem \ref{thm: RF - convergence of Eulerian scheme}, Appendix \ref{app: RF convergence}). We now present the main convergence result for the RF sampler. 
\begin{figure}
    \centering
    \includegraphics[width=0.45\textwidth]{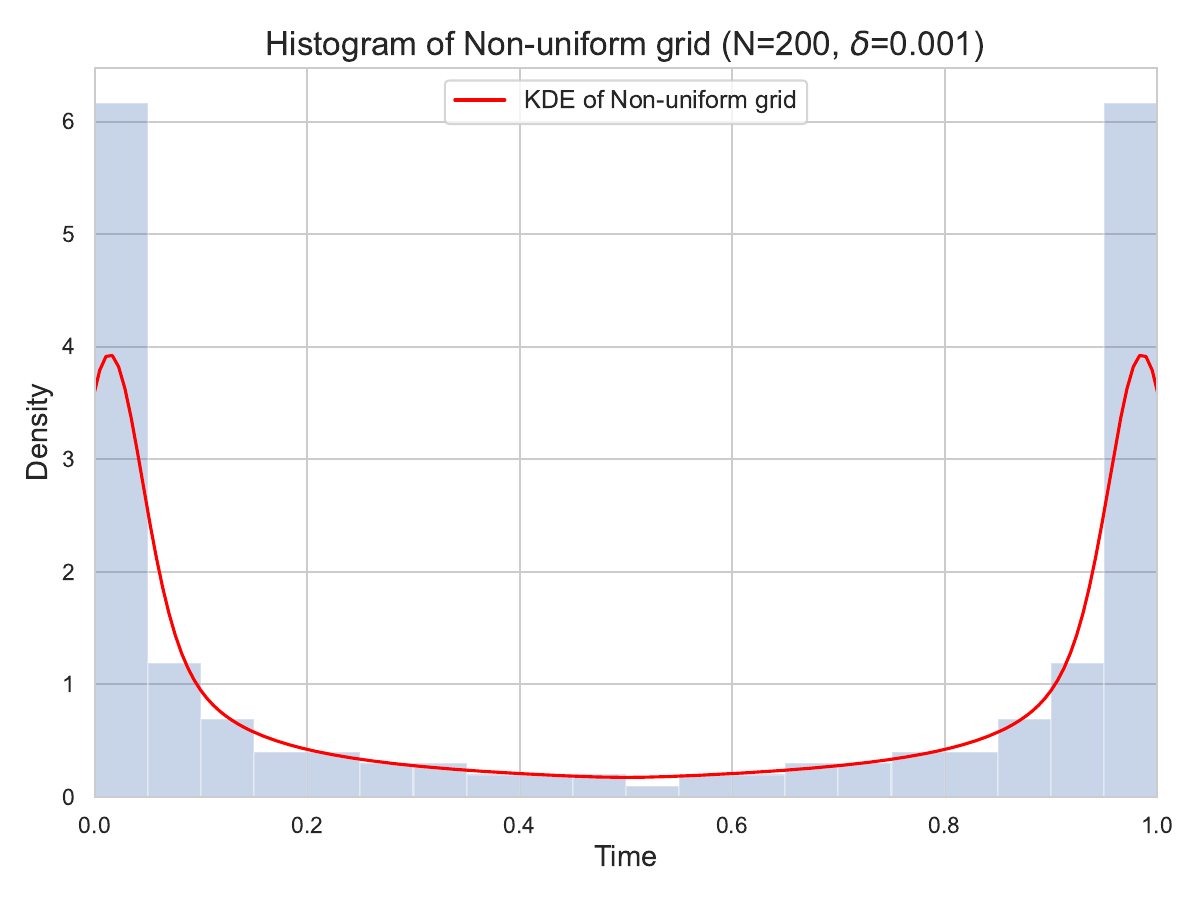}
    \caption{Histogram and kernel density estimation (KDE) plot of time-grid \eqref{eq: gemoetric time discretization} showing U-shaped distribution.}
    \label{fig: hist of geom timegrid}
\end{figure}
\begin{theorem}
\label{thm: RF - convergence of Eulerian scheme}
   Let Assumptions \ref{assumption: intrinsic dimension}-\ref{assumption: higher-order approximation error} hold, and write $\mu_1 = \bbE_{X \sim p_1}[X]$. Let $\{Y_{t_i}\}_{i\ge0}$ denote the RF updates of \eqref{eq: emp-ode-disc} with time schedule \eqref{eq: gemoetric time discretization}. Then, for $\delta = 1/(N \vee d)$, we have
   \begin{equation}
   \label{eq: RF convergence Euler scheme}
    \begin{aligned}
    & \TV(p_{X_{t_{N-1}}}, p_{Y_{t_{N-1}}})\\
    & \lesssim   \frac{k \log^3 (\frac{1}{\delta})  }{N} +   \frac{\log^2 (\frac{1}{\delta})}{N}   \meanevelJone^2 +  \frac{\log^2 (\frac{1}{\delta})}{N} \meanevel^2 \\
    & \quad +  \log \br{1/\delta} (\meanevel + \meanevelJone + \meanevelJtwo + \meanevelH) \\
     &\quad +    \delta^2 \bbE \Norm{X_1 - \mu_1}_2^2
     +  \delta^2 (\evel_0)^2  \\
     &\quad  +  \delta^2 (\evelJone_0)^2 +  (\evel_0 + \evelJone_0 + \evelJtwo_0 + \evelH_0) +  \delta^{-2}N^{-10} .  
\end{aligned}
\end{equation}
\end{theorem}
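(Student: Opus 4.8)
I will bound $\TV(p_{X_{t_{N-1}}}, p_{Y_{t_{N-1}}})$ by a telescoping decomposition into one-step errors, then control the accumulated error with the posterior-covariance identity of \Cref{cor: my ddt cov version}. For $0\le i\le N-1$ let $\pi_i$ be the law obtained by running the Euler recursion \eqref{eq: emp-ode-disc} up to time $t_i$ and then the exact ODE \eqref{eq: ode-true} from $t_i$ to $t_{N-1}$, so $\pi_0 = p_{X_{t_{N-1}}}$ (since $Y_0 = Z_0$) and $\pi_{N-1} = p_{Y_{t_{N-1}}}$; hence $\TV(p_{X_{t_{N-1}}}, p_{Y_{t_{N-1}}}) \le \sum_{i=0}^{N-2}\TV(\pi_i,\pi_{i+1})$. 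Because the exact flow is a diffeomorphism and total variation is invariant under bijections, $\TV(\pi_i,\pi_{i+1})$ equals the total variation between the one-step exact-flow pushforward of the discrete marginal $\hat p_{t_i} := \Law(Y_{t_i})$ and its one Euler-step pushforward. Inserting the exact-velocity Euler map $y\mapsto y+\eta_i v_{t_i}(y)$ between the exact flow map $\Phi_i$ and the estimated Euler map $y\mapsto y+\eta_i\widehat v_{t_i}(y)$ splits each such term into a \emph{discretization} error and an \emph{estimation} error. For both I will use the elementary estimate that, for $C^1$-diffeomorphisms with $S_2 = S_1 + g$ and $g$ small (as ensured near the relevant region by Assumption~\ref{assumption: higher-order approximation error}), one has, for any $\mu$, $\TV(S_{1\#}\mu, S_{2\#}\mu) \lesssim \bbE_\mu\big[\,|\langle \nabla\log\mu, g\rangle| + |\nabla\cdot g|\,\big]$ plus a second-order remainder in $\|g\|_2$ and $\|\nabla g\|_\op$, obtained by change of variables and a first-order expansion of $\log\det(I+\nabla g)$.

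\textbf{Discretization error and accumulation (the crux).} For the discretization term I Taylor-expand the exact flow, $\Phi_i(y) = y + \eta_i v_{t_i}(y) + \tfrac{\eta_i^2}{2}\,\alpha_{t_i}(y) + O(\eta_i^3)$, with acceleration field $\alpha_t = \partial_t v_t + (\nabla v_t)v_t$, so $g = \tfrac{\eta_i^2}{2}\alpha_{t_i}$ up to a cubic remainder. Using the Tweedie identity $v_t(x) = x/t + \tfrac{1-t}{t}s_t(x)$ from Section~\ref{sec: Rf} together with the identities expressing $s_t$, $\nabla s_t$, $\nabla^2 s_t$ through the posterior mean $\bbE[X_1\mid X_t]$ and posterior covariance $\bSigma_t = \Cov(X_1\mid X_t)$ (and, after a separate argument, bounding the discrete-marginal score $\nabla\log\hat p_{t_i}$ by the true score $s_{t_i}$ on a high-probability ball), the quantities $\alpha_t$, $\nabla\cdot\alpha_t$ and $\langle\nabla\log\hat p_t,\alpha_t\rangle$ are re-expressed through $\bSigma_t$ and rational functions of $t$. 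The accumulated discretization error is then a sum of terms like $\eta_i^2\cdot(\text{rational in }t_i)\cdot\bbE[\Tr(\bSigma_{t_i}^2)]$ and $\eta_i^2\cdot(\cdots)\cdot\bbE\|\bSigma_{t_i}\|_\op^2$; summing over $i$ and combining \Cref{cor: my ddt cov version}, which telescopes $\sum_i \tfrac{2 t_i\eta_i}{(1-t_i)^3}\bbE[\Tr(\bSigma_{t_i}^2)]$ to a bounded quantity, with the metric-entropy bound of Assumption~\ref{assumption: intrinsic dimension} (which injects the factor $k$, exactly as in the DDIM analysis of \citet{liang2025low}) collapses this to $O(k\log^3(1/\delta)/N)$. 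This is precisely where the U-shaped schedule \eqref{eq: gemoetric time discretization} is forced: the geometric spacing near $t=0$ and $t=1$ keeps $\eta_i/t_i$ and $\eta_i/(1-t_i)$ uniformly of order $h\lesssim\log(1/\delta)/N$ (see \eqref{eq: value of h}), which is exactly what turns $\sum_i \eta_i^2\,\tfrac{t_i}{(1-t_i)^3}\bbE[\Tr(\bSigma_{t_i}^2)]$-type sums into $1/N$-rate bounds; the grid is in fact reverse-engineered from this step.

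\textbf{Estimation error, first step, and truncations.} For the estimation term, $g = \eta_i(v_{t_i}-\widehat v_{t_i})$, so the one-step estimate gives $\eta_i\,\bbE_{\hat p_{t_i}}|\langle\nabla\log\hat p_{t_i}, v_{t_i}-\widehat v_{t_i}\rangle|$, $\eta_i\,\bbE|\Tr(\nabla v_{t_i}-\nabla\widehat v_{t_i})|$, and a second-order remainder; passing to the second-order expansion of $\Phi_i$ (which the deterministic sampler cannot self-correct) further brings in $(\evelJtwo_i)^2 = \bbE[\Tr(\nabla v_{t_i}-\nabla\widehat v_{t_i})^2]$ and $(\evelH_i)^2 = \bbE\|\nabla\Tr(\nabla v_{t_i}-\nabla\widehat v_{t_i})\|_2^2$. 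Applying Cauchy--Schwarz, a bound on $\bbE\|s_{t_i}(X_{t_i})\|_2^2$ near the endpoints, Assumptions~\ref{assumption: velocity approximation}--\ref{assumption: higher-order approximation error}, and the identities $\sum_i\eta_i=1$, $\sum_i\eta_i/(1-t_i)\lesssim\log(1/\delta)$, the accumulation produces the linear terms $\log(1/\delta)(\meanevel+\meanevelJone+\meanevelJtwo+\meanevelH)$ and the quadratic terms $\tfrac{\log^2(1/\delta)}{N}(\meanevel^2+\meanevelJone^2)$. The step $i=0$ is treated separately since $v_0(x) = \mu_1 - x$ is affine: the exact flow over $[0,\delta]$ matches its Euler step up to $O(\delta^2\,\bbE\|X_1-\mu_1\|_2^2)$, and the remaining $i=0$ terms $\delta^2(\evel_0)^2$, $\delta^2(\evelJone_0)^2$, $(\evel_0+\evelJone_0+\evelJtwo_0+\evelH_0)$ come from the $i=0$ velocity, Jacobian, and Hessian errors. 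All density manipulations are carried out after restricting to a Euclidean ball of radius $\asymp R = N^{c_R}$ (Assumption~\ref{assumption: bounded support}) and covering $\cX$ at scale $\epsilon_0 = N^{-c_{\epsilon_0}}$ (Assumption~\ref{assumption: intrinsic dimension}); Gaussian and polynomial tail bounds contribute the negligible $\delta^{-2}N^{-10}$ term. Taking $\delta = 1/(N\vee d)$ yields \eqref{eq: RF convergence Euler scheme}.

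\textbf{Main obstacle.} The hard part is the crux step: translating the second-order flow remainder and the divergence terms into quantities that (i) depend on the \emph{intrinsic} dimension $k$ rather than the ambient $d$ --- which needs a careful combination of the stochastic-localization identities for $\bSigma_t$ (via \Cref{lemma: eldan ddt cov lemma} and \Cref{cor: my ddt cov version}) with the covering-number structure of Assumption~\ref{assumption: intrinsic dimension} --- and (ii) telescope cleanly once the grid \eqref{eq: gemoetric time discretization} is substituted, including the delicate control of the discrete-marginal score $\nabla\log\hat p_{t_i}$, which is a pushforward density and hence a priori irregular, by the true score $s_{t_i}$ on the high-probability region.
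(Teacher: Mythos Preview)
Your telescoping decomposition differs from the paper's in a way that creates the very obstacle you flag at the end. You set $\pi_i$ to be the law of Euler-to-$t_i$ followed by exact-flow-to-$t_{N-1}$, so each one-step error $\TV(\pi_i,\pi_{i+1})$ is a comparison of two pushforwards of the \emph{discrete} marginal $\hat p_{t_i} = \Law(Y_{t_i})$. Your one-step TV estimate then requires $\nabla\log\hat p_{t_i}$, and you propose to bound it by the true score $s_{t_i}$ ``on the high-probability region.'' But this is circular: controlling $\nabla\log\hat p_{t_i}$ by $s_{t_i}$ would need $\hat p_{t_i}/p_{t_i}$ to be close to $1$ in a $C^1$ sense, which is strictly stronger than the TV bound you are trying to prove. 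The same problem contaminates the estimation terms: Assumptions~\ref{assumption: velocity approximation}--\ref{assumption: higher-order approximation error} define $(\evel_i)^2, (\evelJone_i)^2, \ldots$ as expectations under the \emph{true} marginal $p_{X_{t_i}}$, so invoking them with base measure $\hat p_{t_i}$ again needs a change-of-measure argument you have not supplied, and the covariance telescoping via \Cref{cor: my ddt cov version} is likewise an identity about $\bbE[\bSigma_t]$ under the true law, not under $\hat p_{t_i}$.

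The paper sidesteps this entirely with a different recursion:
\[
\TV(p_{X_{i+1}}, p_{Y_{i+1}}) \;\le\; \TV\bigl(p_{X_{i+1}},\, p_{\Phi_i(X_i)}\bigr) + \TV(p_{X_i}, p_{Y_i}),
\]
where $\Phi_i(x) = x + \eta_i \widehat v_{t_i}(x)$ is the single Euler map with the \emph{estimated} velocity. The point is that the one-step term $\TV(p_{X_{i+1}}, p_{\Phi_i(X_i)})$ involves only the true process $X_i$, whose conditional law given $X_1$ is explicit Gaussian. The paper then writes the density ratio $p_{\Phi_i(X_i)}/p_{X_{i+1}}$ directly from this Gaussian structure---no Taylor expansion of the exact flow map, no separate ``discretization vs.\ estimation'' split---and isolates a conditionally mean-zero piece plus a remainder $W(x_i)$ involving $\Tr(\Cov_{1\mid t_i})$, $\|\Cov_{1\mid t_i}\|_F^2$, and the error fields $\varepsilon_i^{\rm v}, \varepsilon_i^{J}$. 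The $k$-dependence and the telescoping via \Cref{cor: my ddt cov version} then enter just as you anticipated, but all expectations are under $p_{X_{t_i}}$, so the assumptions apply verbatim and the discrete-marginal score never appears.
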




\textbf{Adaptation to low-dimensionality and accelerated convergence.}  The rate in Theorem \ref{thm: RF - convergence of Eulerian scheme} does not contain a direct dependence on the ambient dimension $d$ and instead scales linearly in the intrinsic dimension $k$ of $\cX$, indicating that the RF sampler adapts to the low-dimensional structure of the target distribution. Thus, in principle, assuming perfect knowledge of the velocity drift (i.e. $\widehat v_t = v_t$ and the higher approximation terms of \Cref{assumption: higher-order approximation error} are all zero), the RF sampler \eqref{eq: emp-ode-disc} needs $\tilde O(k/\epsilon)$ iterations to achieve the guarantee $\TV(p_{X_{t_{N-1}}}, p_{Y_{t_{N-1}}}) < \epsilon$. If the intrinsic dimension $k \ll d$, then the RF sampler automatically accelerates without any prior knowledge of the low-dimensionality of $p_1$. To the best of our knowledge, this is the first work that provides a convergence guarantee of RF that adapts to the low-dimensionality of $p_1$. 

\textbf{The dependence on $\bbE \Norm{X_1 - \mu_1}_2^2$.} 
The upper bound in Theorem \ref{thm: RF - convergence of Eulerian scheme} contains a term involving $\bbE\Norm{X_1 - \mu_1}_2^2$, which even for data supported on manifolds could scale linearly in the ambient dimension $d$. However, in practice one can choose the free parameter $\delta$ to be e.g. $O(d^{-1})$, so that the term becomes negligible. With this choice of $\delta$, last term in \eqref{eq: RF convergence Euler scheme} scales as $d^2 N^{-10}$ which is also negligible in practice. 

\textbf{Guarantee on perturbed data.} The \TV~guarantee in Theorem \ref{thm: RF - convergence of Eulerian scheme} is on the penultimate update $Y_{t_{N-1}}$ of \eqref{eq: emp-ode-disc}, and not on $Y_{t_N} = Y_1$. It is worthwhile to mention that $\TV(p_{X_1}, p_{Y_{t_N}})$ might not be a useful or even meaningful quantity, as $X_1$ and $Y_1$  may have  supports of different dimensions. For example, $X_1$ could be supported on a low-dimensional space, whereas $Y_{t_N}$ would typically have full dimensional support, thereby rendering $\TV (X_1, Y_{t_N}) =1 $. However, note that for any $\delta > 0$, $X_{\delta} \overset{d}{=} (1 - \delta) X_1 + \delta X_0$ has full-dimensional  support due to the slight Gaussian perturbation. Therefore, the \TV~distance in Theorem \ref{thm: RF - convergence of Eulerian scheme} is well defined.

\textbf{Higher-order approximation error terms.} The upper bound in Theorem \ref{thm: RF - convergence of Eulerian scheme} is heavily dependent on the higher-order approximation errors of $\widehat v_t$. These terms are required to be small as deterministic samplers are unable to self-correct their trajectories. In addition, at $t = 0$, the true drift $v_0(x) = \mu_1 - x$ can be estimated by $\widehat v_0(x) = n^{-1} \sum_{i \in [n]} {X_1^{(i)}} - x$, where $\{X_1^{(i)}\}_{i\in [n]} \overset{i.i.d.}{\sim } p_1$. With this choice of $\widehat v_0$, we have $\evelJone_0 = \evelJtwo_0 = \evelH_0 = 0$, and $\evel_0 = \sqrt{\bbE \Norm{X_1 - \mu_1}_2^2/n} $.

\subsection{Convergence rate of STOC-RF}
In this section, we will leverage the equivalence between DDPM and STOC-RF to design an SDE-based sampler that also adapts to the low-dimensionality of $p_1$. We begin with the solution $\bc{Y^\prime}_{\tau\ge 0}$ to the forward process \eqref{eq: forward process}. 
We also consider the time-change map
$
\tau \in [0, \infty) \mapsto t(\tau) = \frac{\sqrt{\omega_\tau}}{\sqrt{\omega_\tau} + \sqrt{1 - \omega_\tau}}.
$
In light of \eqref{eq: DDPM and RF equivalent} and Proposition \ref{prop: RF-SDE and backwrd process are equivalent}, we note that $\bc{Y^\prime_\tau}_{\tau \ge 0}$ and $\{\tilde Z_t\}_{t \in [0,1]}$ (recall \ref{eq: RF-SDE}) are equivalent under the time change $\tau \mapsto t(\tau)$, and we have $\frac{Y^\prime_\tau}{\sqrt{\omega_\tau}} \overset{d}{=} \frac{\tilde Z_{t(\tau)}}{t(\tau)}$. 
In practice, we simulate the forward process by discretizing $\tau$ to \textit{positive integers} and setting specific values for $\omega_\tau$. Here, we let $\omega_\tau = \prod_{j =1}^\tau \alpha_j$ with $\alpha_j$'s defined implicitly as 
\begin{equation}
    \label{eq: DDPM time scheduling}
    \begin{aligned}
        & \beta_1 := 1 - \alpha_1 = 1/N^{c_0},\\
        & \beta_{\tau+1} := 1 - \alpha_{\tau+1}\\
        &  = \frac{c_1 \log N}{N} \min \bc{\beta_1 \br{1 + \frac{c_1 \log N}{N}}^\tau, 1}, \quad 1 \le \tau \le N,
    \end{aligned}
\end{equation}
for a user-specified choice of the parameters $c_0, c_1 >0$.
The time schedule above is very similar to the ones considered by \citet{potaptchik2024linear, li2024sharp, li2024adapting, huang2024denoising,liang2025low} to analyze the convergence properties of DDPM. 
Next, we recall the DDPM sampler considered in \citet{liang2025low, huang2024denoising, li2024adapting}:  
\begin{equation}
    \label{eq: DDPM sampler liang2025}
    \begin{aligned}
    & \hat Y^\prime_{\tau -1} = \frac{1}{\sqrt{\alpha_\tau}} \bc{ \hat Y^\prime_\tau + \delta_\tau \widehat s_{Y^\prime_\tau}(\hat Y^\prime_\tau) + \nu_\tau \xi_\tau}, \hat Y^\prime_N \sim N(0, I_d)\\
    & \delta_\tau = 1 - \alpha_\tau,  \nu_\tau = \sqrt{\frac{(\alpha_\tau - \omega_\tau)(1 - \alpha_\tau)}{1 - \omega_\tau}},\\
    & \tau   = N , N-1, \ldots, 1, \quad \{\xi_\tau\}_{\tau \ge 1} \overset{i.i.d.}{\sim} N(0,I_d),
    \end{aligned}
\end{equation}
where $ \widehat s_{Y^\prime_\tau}$ is an approximate score of $Y^\prime_\tau$.
To provide some intuition behind this sampler, we recall that $1 - \alpha_\tau = \beta_\tau \approx 0$. Hence, $1/\sqrt{\alpha_\tau}  \approx 1 + \beta_\tau/2, \delta_\tau/ \sqrt{\alpha_\tau} \approx \beta_\tau / 2$, and $\nu_\tau/ \sqrt{\alpha_\tau} \approx \sqrt{\beta_\tau }$. Substituting this in \eqref{eq: DDPM sampler liang2025}, we get 
\begin{equation}
\label{eq: vanilla SDE sampler}
\hat Y^\prime_{\tau -1}  
 \approx \hat Y^\prime_\tau + \frac{\beta_\tau}{2} \bc{\hat Y^\prime_\tau  + 2 \widehat s_{Y^\prime_\tau} (\hat Y^\prime_\tau)} + \sqrt{2 \br{\frac{\beta_{\tau}}{2}}} \xi_\tau,
 \end{equation}
 which is essentially the discretized version of \eqref{eq: reverse OU process} with step-size $\beta_\tau/ 2$. Therefore, the sampler \eqref{eq: DDPM sampler liang2025} roughly approximates the DDPM backward process. 
 
 Next, we convert \eqref{eq: DDPM sampler liang2025} to a STOC-RF sampler using the equivalence between $Y^\prime_\tau$ and $\tilde Z_{t(\tau)}$.  To this end, we let, for any $t \in [0,1]$, $\sigma_t^2 = (1 -t)^2 + t^2$, so that 
 \begin{equation*}
 \sigma_{t(\tau)}^2 = \frac{1}{(\sqrt{\omega(\tau)} + \sqrt{1 - \omega(\tau)})^2} = \frac{t^2(\tau)}{\omega_\tau}.
 \end{equation*}
 Therefore, we have $Y^\prime _\tau \overset{d}{ = } \tilde Z_{t(\tau)}/ \sigma_{t(\tau)} $, and $s_{Y^\prime_\tau}(y) = \sigma_{t(\tau)} s_{t(\tau)}(\sigma_{t(\tau)} y)$. We also define $t_i := t(N - i)$ for $i = 0 , 1, \ldots, N-1$, and $R_i = t_i/ \sigma_{t_i}$.
 Collecting everything above, we arrive at an equivalent reformulation of \eqref{eq: DDPM sampler liang2025} for RF sampling of the form 
 \begin{equation}
     \label{eq: stoc-RF sampler non-Eulerian}
     \begin{aligned}
      &\frac{Y_{t_{i+1}}}{\sigma_{t_{i+1}}} = \frac{R_{i+1}}{R_i} \bc{ \frac{Y_{t_i}}{\sigma_{t_i}} + \eta_i \sigma_{t_i} \widehat s_{t_i}(Y_{t_i}) + \sqrt{\psi_i} W_{t_i}},\\
      &  \frac{Y_{t_0}}{\sigma_{t_0}} \sim N(0,  I_d), \quad \{W_{t_i}\}_{i \ge 0}\overset{i.i.d.}{\sim} N(0,I_d),\\
      & i = 0, 1, \ldots, N-1,
     \end{aligned}
 \end{equation}
 where $ \eta_i = 1- \frac{R_i^2}{R_{i+1}^2},   \psi_i = \frac{R_i^2}{R_{i+1}^2} \frac{ 1 - R_{i+1}^2}{1  - R_i^2} \br{1 - \frac{R_i^2}{R_{i+1}^2}}$, and $\widehat s_{t_i}(x) = \frac{t_i \widehat v_{t_i}(x) - x}{1 - t_i}$. See Appendix \ref{app: Stoc-RF convergence} for details. Lastly, we impose an assumption on the approximation error of $\widehat s_{t_i}$.
 \begin{assumption}
     \label{assumption: score approximation error stoc-RF}
     Define $(\escore_i)^2:= \bbE \Norm{s_{t_i}(X_{t_i}) - \widehat{s}_{t_i}(X_{t_i})}_2^2$. Then, 
    $\frac{1}{N} \sum_{i =1}^N (\escore_i)^2 \le \meanescore^2.$
 \end{assumption}

 Leveraging the equivalence between \eqref{eq: DDPM sampler liang2025} and \eqref{eq: stoc-RF sampler non-Eulerian}, we obtain convergence guarantee for STOC-RF, which the second main result of this paper.
 \begin{theorem}
     \label{thm: stoc-RF sampler non-Eulerian}
     Under Assumption \ref{assumption: intrinsic dimension}, \ref{assumption: bounded support} and \ref{assumption: score approximation error stoc-RF}, the stoc-RF sampler \eqref{eq: stoc-RF sampler non-Eulerian} satisfies 
     \begin{align*}
     & \TV(Y_{t_{N-1}}, X_{t_{N-1}}) \lesssim \frac{k \log^3 N}{N} +  \meanescore \sqrt{\log N}.
     \end{align*}
 \end{theorem}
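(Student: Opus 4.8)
The proof reduces the claim to an established convergence guarantee for DDPM, exploiting the fact that the STOC-RF sampler \eqref{eq: stoc-RF sampler non-Eulerian} is \emph{not} an independent discretization but rather an exact algebraic rewriting of the DDPM sampler \eqref{eq: DDPM sampler liang2025} under the deterministic, invertible rescaling $y \mapsto y/\sigma_{t(\tau)}$ together with the time relabeling $t_i = t(N-i)$. Since total variation distance is invariant under invertible transformations, it is enough to transport the DDPM bound through this correspondence. The plan is therefore: (i) record the exact distributional identities relating the STOC-RF and DDPM trajectories (sampled and ``true''); (ii) translate Assumption \ref{assumption: score approximation error stoc-RF} into the score-error quantity used by the DDPM analysis; (iii) invoke the low-dimensional DDPM guarantee and check the remainder terms are lower order.

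For step (i): from Proposition \ref{prop: RF-SDE and backwrd process are equivalent} and \eqref{eq: DDPM and RF equivalent}, together with the algebraic identities $\sigma_{t(\tau)} = t(\tau)/\sqrt{\omega_\tau}$ and $1 - t(\tau) = \sqrt{1-\omega_\tau}\,\sigma_{t(\tau)}$ already used to derive \eqref{eq: stoc-RF sampler non-Eulerian}, one gets $\widehat Y^\prime_{N-i} \overset{d}{=} Y_{t_i}/\sigma_{t_i}$ for the sampled trajectories, and, writing $V_\tau := \sqrt{\omega_\tau}X_1 + \sqrt{1-\omega_\tau}X_0$ (so that $\Law(V_\tau)=q_\tau$, the DDPM forward marginal), $V_{N-i} \overset{d}{=} X_{t_i}/\sigma_{t_i}$ for the true marginals. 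Taking $i = N-1$ (i.e. $\tau = 1$) and using scale-invariance of $\TV$, $\TV(Y_{t_{N-1}}, X_{t_{N-1}}) = \TV(\sigma_{t_{N-1}}\widehat Y^\prime_1, \sigma_{t_{N-1}} V_1) = \TV(\widehat Y^\prime_1, q_1)$. For step (ii): from $s_{Y^\prime_\tau}(y) = \sigma_{t(\tau)} s_{t(\tau)}(\sigma_{t(\tau)} y)$ (and likewise for $\widehat s$) and $\sigma_{t(\tau)} Y^\prime_\tau \overset{d}{=} X_{t(\tau)}$, the DDPM-side squared score error at step $\tau = N-i$ equals $\sigma_{t_i}^2 (\escore_i)^2$; since $\sigma_t^2 = (1-t)^2 + t^2 \in [1/2,1]$, the averaged DDPM score error is within a constant of $\meanescore^2$. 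Moreover the schedule used in \eqref{eq: stoc-RF sampler non-Eulerian} is exactly \eqref{eq: DDPM time scheduling}, which (up to the constants $c_0,c_1$) is the schedule under which the DDPM guarantees of \citet{huang2024denoising, li2024adapting, liang2025low} are established.

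For step (iii): apply the DDPM TV-convergence theorem of \citet{liang2025low} (also \citet{huang2024denoising, li2024adapting}) under Assumptions \ref{assumption: intrinsic dimension} and \ref{assumption: bounded support} on $\cX$ and the averaged score error, giving $\TV(\widehat Y^\prime_1, q_1) \lesssim \frac{k\log^3 N}{N} + \meanescore\sqrt{\log N}$, where the initialization error $\TV(N(0,I_d), q_N)$ and any $\mathrm{poly}(d)\cdot N^{-\Omega(1)}$ remainders are, by the choice of schedule (in particular $\omega_N \le N^{-\Omega(1)}$), of lower order and absorbed into the first term. Combining with the identity of step (i) yields the stated bound; one also checks $t_{N-1} = t(1) = \sqrt{\omega_1}/(\sqrt{\omega_1}+\sqrt{1-\omega_1}) = 1 - O(N^{-c_0/2})$, so $X_{t_{N-1}}$ is a vanishingly small perturbation of $p_1$ and the $\TV$ quantity is well-defined (cf.\ the discussion after Theorem \ref{thm: RF - convergence of Eulerian scheme}).

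The main obstacle is not the flow-to-diffusion bookkeeping, which is exact by construction, but verifying that the off-the-shelf DDPM result applies with precisely the quantities at hand: that the score-error normalization in those references coincides with $\escore_i$ up to the $\Theta(1)$ factor $\sigma_{t_i}^2$ (and, if their bound uses a \emph{weighted} score error, that the weights are benign), that their hypotheses on $\{\alpha_\tau\}$ and the terminal time are met by \eqref{eq: DDPM time scheduling}, and that the early-stopping/initialization contributions are genuinely $N^{-\Omega(1)}$ so no extra ambient-dimension dependence leaks into the final rate. A secondary point requiring care is that the continuous-time equivalence of Proposition \ref{prop: RF-SDE and backwrd process are equivalent} descends to an \emph{exact} equality in law of the two \emph{discretized} samplers at the chosen grid points — which holds because \eqref{eq: stoc-RF sampler non-Eulerian} is obtained from \eqref{eq: DDPM sampler liang2025} by the exact rescaling/relabeling above, not by re-discretizing the limiting SDE.
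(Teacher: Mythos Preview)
Your proposal is correct and follows essentially the same approach as the paper's own proof: both establish that the STOC-RF sampler \eqref{eq: stoc-RF sampler non-Eulerian} is an exact rescaling/relabeling of the DDPM sampler \eqref{eq: DDPM sampler liang2025}, translate the score error via $(\varepsilon^s_\tau)^2 = \sigma_{t_i}^2(\escore_i)^2 \le (\escore_i)^2$, and then invoke the DDPM analysis of \citet{liang2025low}. The only cosmetic difference is that the paper, rather than citing the final DDPM theorem as a black box, explicitly reintroduces the auxiliary processes $\bar Y_\tau,\widetilde Y_\tau$ from \citet[Section C.1]{liang2025low} and points to the specific intermediate KL bound (their Equation (107)) before concluding; your step (iii) compresses this into a single citation, which is fine since the schedule \eqref{eq: DDPM time scheduling} and the hypotheses match.
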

 Note that $t_{N-1} = \frac{\sqrt{\omega_1}}{\sqrt{\omega_1} + \sqrt{1 - \omega_1}} \approx 1$. Therefore, $\Law(X_{t_{N-1}}) \approx p_1$. In comparison with Theorem \ref{thm: RF - convergence of Eulerian scheme}, STOC-RF sampler does not require any conditions on higher-order derivatives (\Cref{assumption: higher-order approximation error}) of the score or velocity field. That is, STOC-RF sampler can achieve similar performance under less stringent requirements. This is primarily due to the Gaussian perturbation term in \eqref{eq: stoc-RF sampler non-Eulerian} which acts as a corrector step. The proof is in Appendix \ref{app: Stoc-RF convergence}.

\begin{remark}
\label{remark: new RF sampler}
Using the equivalence relation \eqref{eq: DDPM and RF equivalent}, it is also possible to directly relate the deterministic RF sampler to the DDIM sampler studied by \citet[Theorem 1]{liang2025low} (Appendix \ref{app: new RF sampler}) via the same time change transformation considered in this section. The resulting time schedule also follows a U-shaped distribution and the sampler will also adapts to the low-dimensionality of $p_1$.
\end{remark}

\section{Experiments}
We perform extensive simulations to corroborate our theoretical findings for both synthetic data and T2I experiments. Codes are available in \href{https://github.com/roysaptaumich/Low-dim-RF}{https://github.com/roysaptaumich/Low-dim-RF}. 
\begin{figure*}[t!]
  \centering
  \begin{subfigure}[b]{0.24\textwidth}
    \centering
    \includegraphics[width=\textwidth]{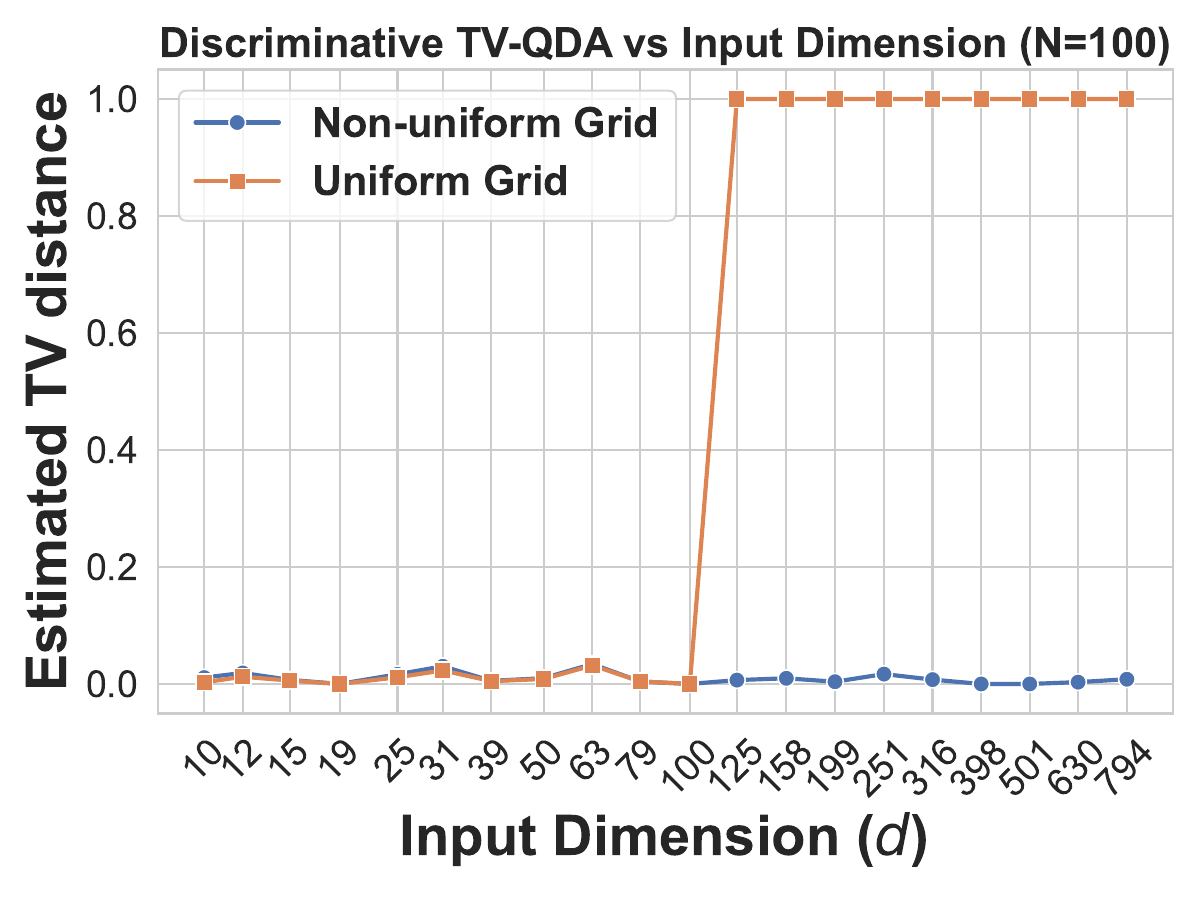}
    \caption{}
    \label{fig:sub1}
  \end{subfigure}\hfill
  \begin{subfigure}[b]{0.24\textwidth}
    \centering
    \includegraphics[width=\textwidth]{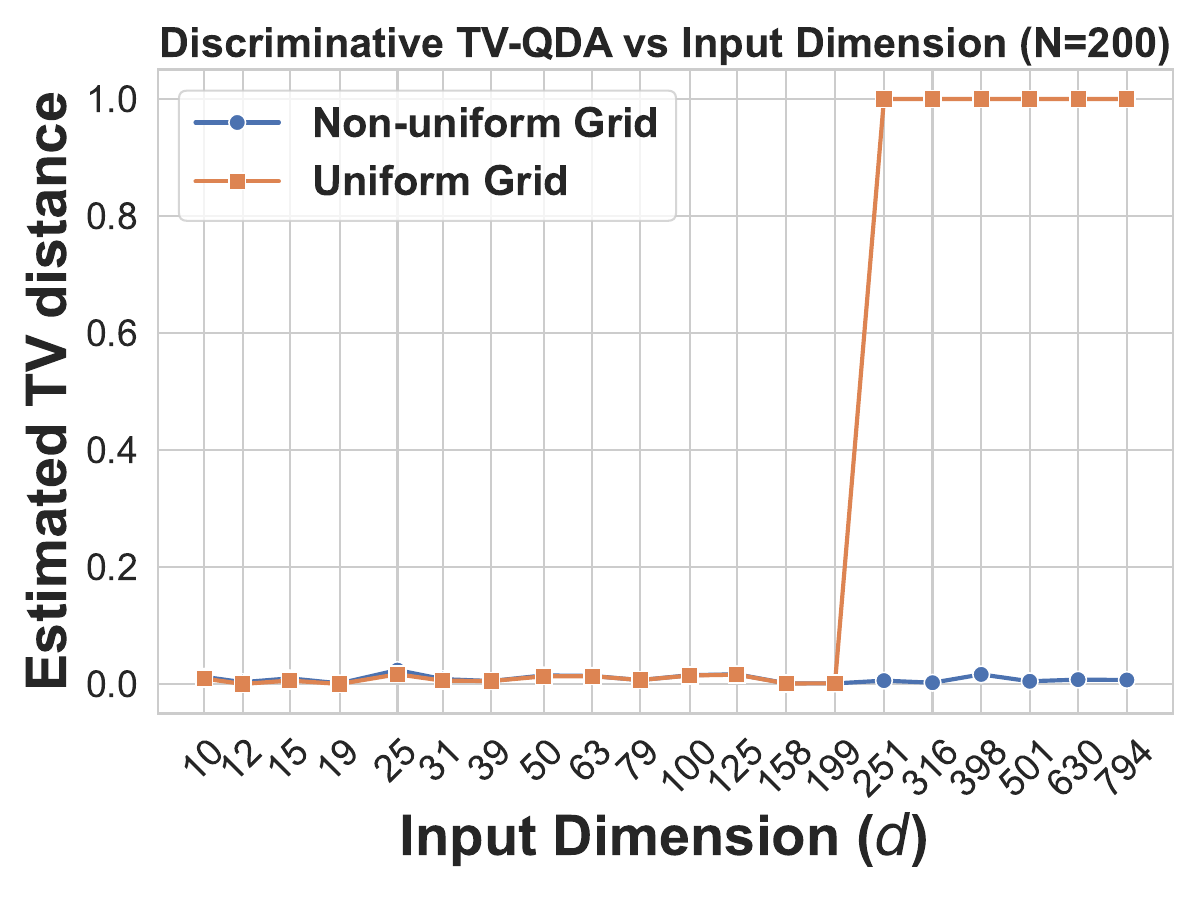}
    \caption{}
    \label{fig:sub2}
  \end{subfigure}\hfill
  \begin{subfigure}[b]{0.24\textwidth}
    \centering
    \includegraphics[width=\textwidth]{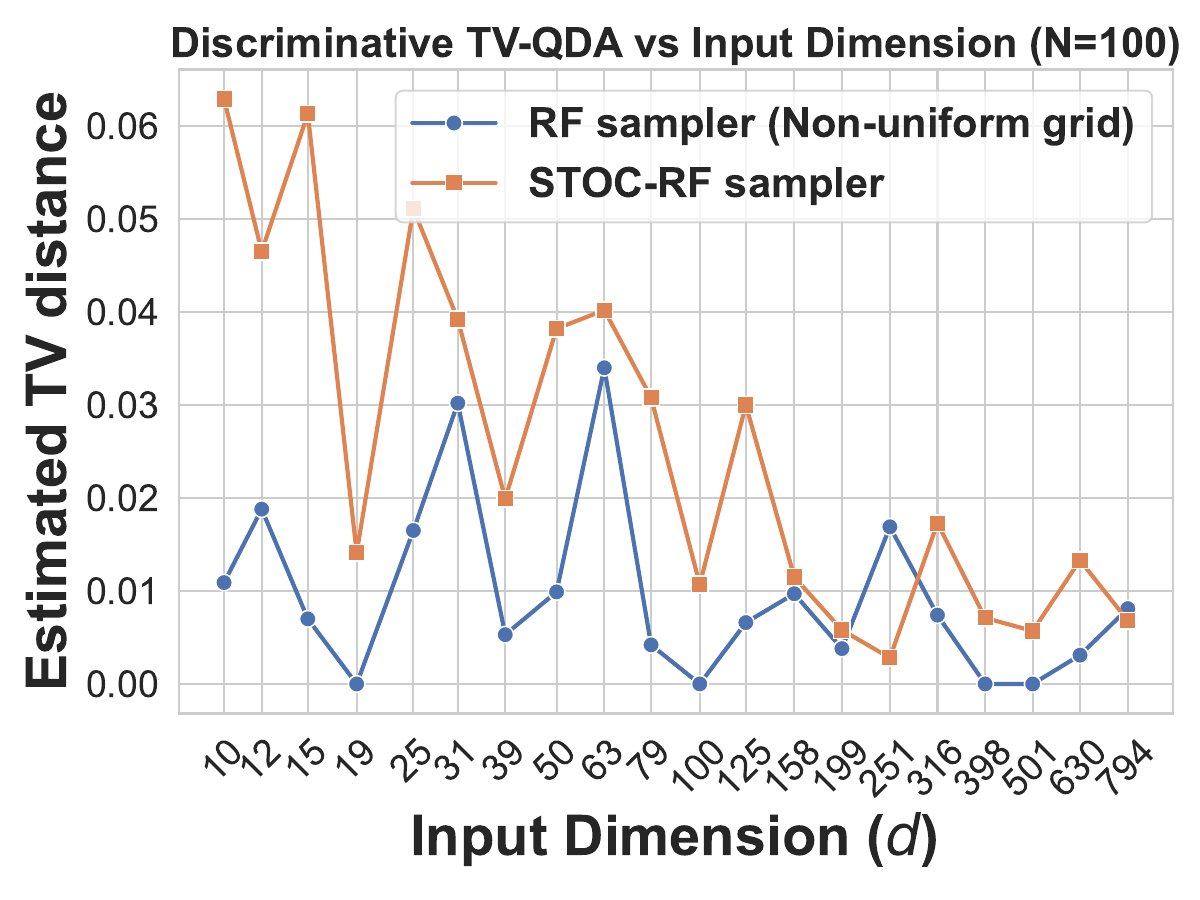}
    \caption{}
    \label{fig:sub3}
    \end{subfigure}\hfill
  \begin{subfigure}[b]{0.24\textwidth}
    \centering
    \includegraphics[width=\textwidth]{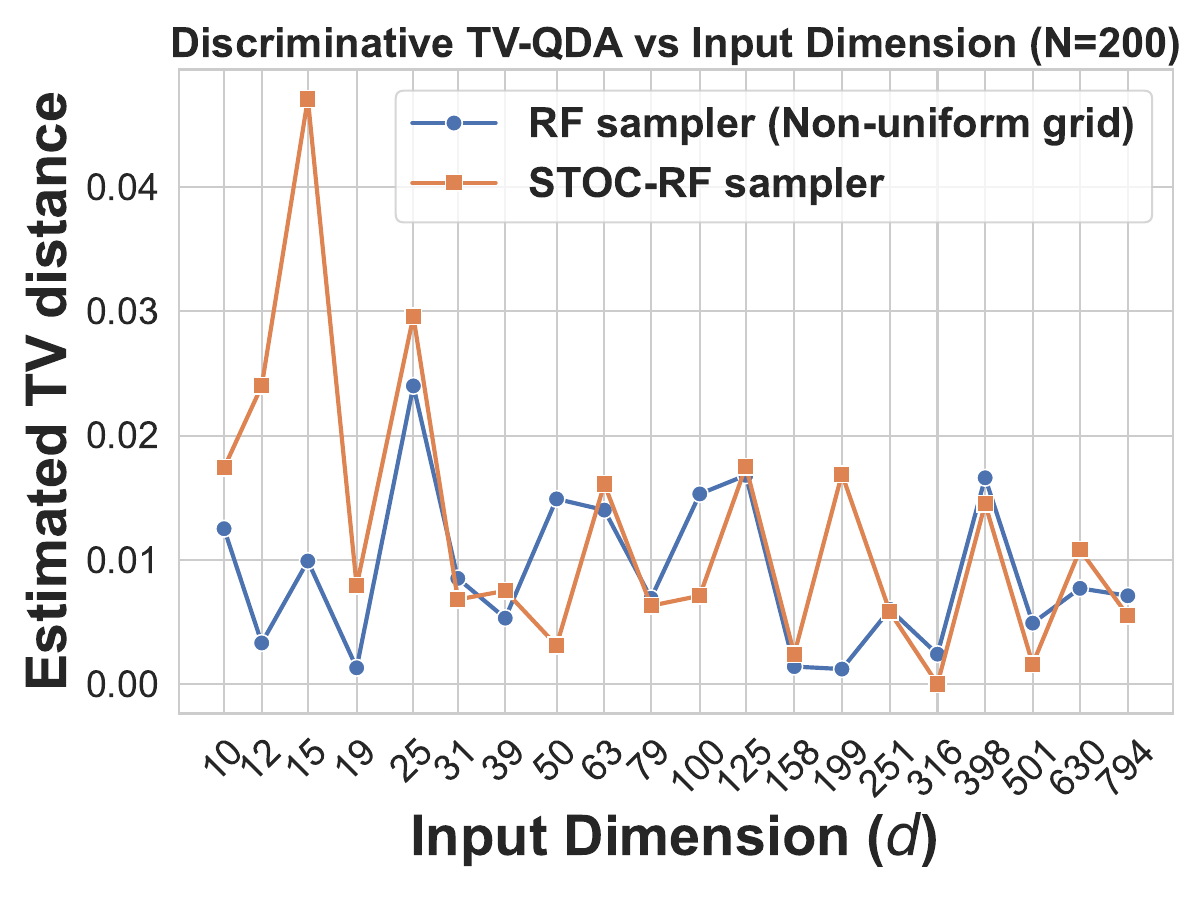}
    \caption{}
    \label{fig:sub4}
    \end{subfigure}
  \caption{(a)-(b) \textit{Estimated} \TV~ distance between \textit{blurred} low-rank Gaussian distribution and the generated samples by the RF sampler \eqref{eq: emp-ode-disc} under uniform and non-uniform time-grid under varying $d$. (c)-(d) \textit{Estimated} \TV~ distance between \textit{blurred} low-rank Gaussian distribution and the generated samples by the RF and STOC-RF samplers under uniform and non-uniform time-grid under varying $d$.}
  \label{fig: TV distance comparison low-rank Gaussian}
\end{figure*}

\begin{figure*}[h]
    \centering
    
    \begin{subfigure}[t]{0.3\textwidth}
        \centering
        \includegraphics[width=\textwidth]{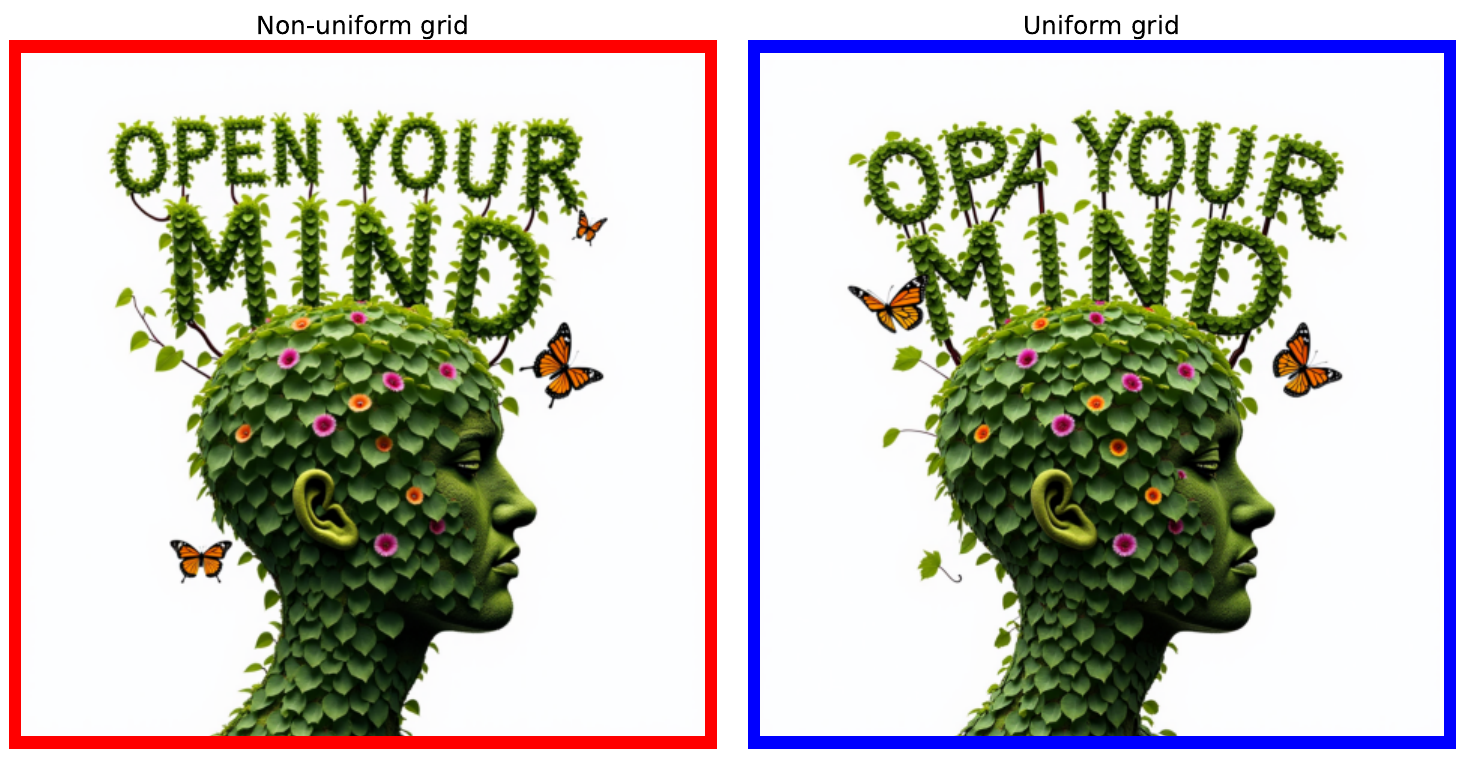}
        \caption{\scriptsize\sf \textbf{Prompt:} Grape vines in the shape of text \orange{`open your mind’} sprouting out of a head with flowers and butterflies. DSLR photo. (\underline{Better text rendering}, $N = 50$)}
    \end{subfigure}\hfill
    \begin{subfigure}[t]{0.3\textwidth}
        \centering
        \includegraphics[width=\textwidth]{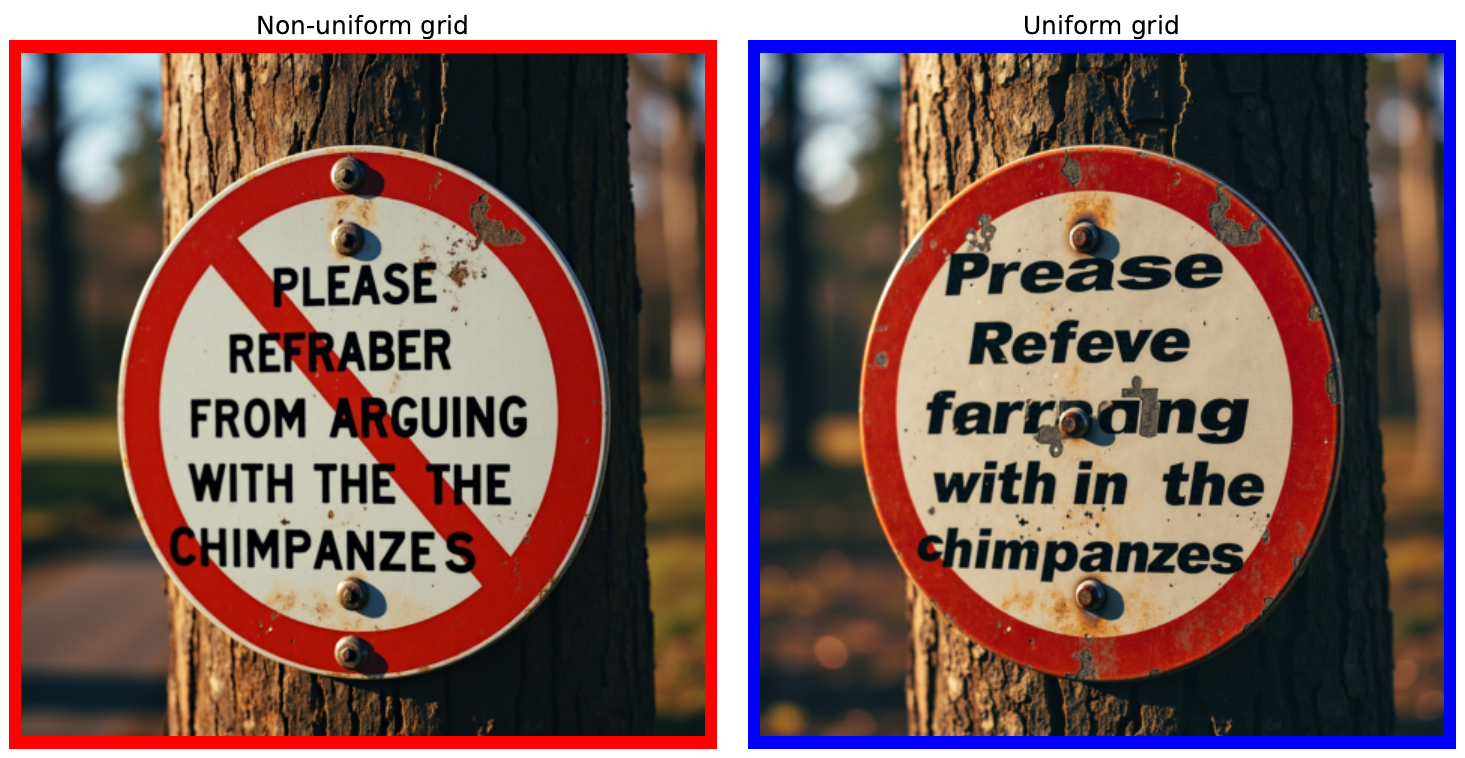}
        \caption{\scriptsize \sf \textbf{Prompt:} A sign that says \orange{"Please refrain from arguing with the
chimpanzees"}. (\underline{Minimal mistakes}, $N = 50$)}
    \end{subfigure}\hfill
    \begin{subfigure}[t]{0.3\textwidth}
        \centering
        \includegraphics[width=\textwidth]{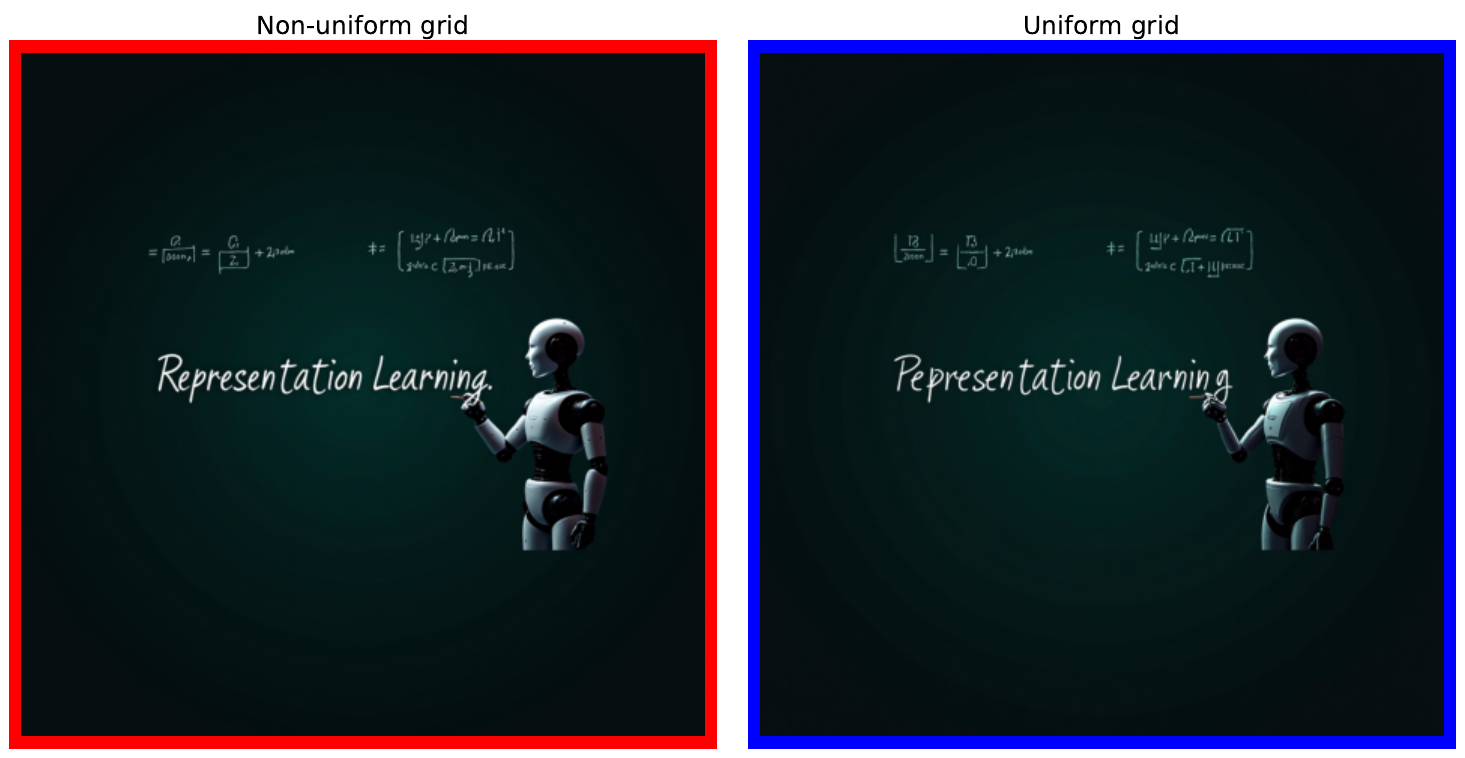}
        \caption{\scriptsize \sf \textbf{Prompt:} Photo of a robot lecturer writing the words \orange{`Representation Learning'} in cursive on a blackboard, with math formulas and diagrams. (\underline{Correct spelling}, $N = 50$)}
    \end{subfigure}

    \caption{Generated images using Flux for specific prompts via Euler's scheme \eqref{eq: emp-ode-disc} under \red{non-uniform} time-grid \eqref{eq: gemoetric time discretization} (\red{Red frame}) and \blue{uniform} time-grid (\blue{Blue frame}). Generation quality of RF is better under \red{non-uniform} time-grid \eqref{eq: gemoetric time discretization}: less hallucinations, enjoys better text rendering and achieves better semantic completeness.}
    \label{fig: Flux experiments mini}
\end{figure*}

\textbf{Synthetic data.}
In this section we demonstrate the usefulness of non-uniform time-grid for generation task from low-dimensional distribution.
We set the intrinsic dimension $k =8$, and set our target distribution to be $p_1 = N(\mu, \Sigma)$, where $\mu = (8, 8, \ldots, 8)^\top \in \bbR^d$  and $\Sigma$ is a block-diagonal matrix with blocks $I_{k}$ and $ \boldsymbol{0}_{(d-k) \times (d-k)}$. 
Tweedie's formula (Section \ref{sec: Rf}) yields
\begin{equation}
\label{eq: low-dim velocity field}
\begin{aligned}
 v_{t_i}(x) 
 = \begin{bmatrix}
    \frac{2t_i-1}{\sigma_{t_i}^2} I_k & \boldsymbol{0}_{k \times (d-k)}\\
    \boldsymbol{0}_{(d-k) \times k} & - \frac{1}{1-t_i}I_{d-k}
\end{bmatrix} x
+ \begin{bmatrix}
    \frac{1-t_i}{\sigma_{t_i}^2} I_k  & \boldsymbol{0}_{k \times (d-k)}\\
    \boldsymbol{0}_{(d-k) \times k} & \frac{1}{1 - t_i} I_{d-k}
\end{bmatrix} \mu.
\end{aligned}
\end{equation}


We use \eqref{eq: low-dim velocity field} for generating new samples from $p_1$. We vary $d$ between 10 and 800, and set $N \in \bc{100, 200}$. We use both a non-uniform time-grid \eqref{eq: gemoetric time discretization} (with $\delta = \min\{1/N, 1/d\}$) and a uniform time-grid for the generation task. We generate 2000 samples using \eqref{eq: gemoetric time discretization} and measure the \TV~distance with respect to another batch of 2000 samples independently generated from the slightly blurred target distribution $\Law((1 - \delta) X_1 + \delta Z)$, where $Z \sim N(0, I_d)$. We use the discriminative approach proposed in \citet{tao2024discriminative} to estimate the \TV~distance between the distributions via 10 independent Monte Carlo rounds. Figure \ref{fig: TV distance comparison low-rank Gaussian}(a)-(b) show that nonuniform discretization has better adaptation to the low-dimensional structure of $p_1$ for large $d$.
For the SDE sampler \eqref{eq: stoc-RF sampler non-Eulerian}, we also compare its \TV~convergence with that of the deterministic sampler \eqref{eq: emp-ode-disc} with the non-uniform time grid \eqref{eq: gemoetric time discretization}. Figure \ref{fig: TV distance comparison low-rank Gaussian}(c)-(d) show that both samplers have similar performance across varying dimensions.

\textbf{T2I experiments.}~We perform experiments using Flux \citep{flux2024, labs2025flux1kontextflowmatching} to demonstrate the usefulness of the U-shaped time schedule \eqref{eq: gemoetric time discretization}. 
Figure \ref{fig: Flux experiments mini} shows that generation quality of RF is superior under a non-uniform time-grid \eqref{eq: gemoetric time discretization} compared to uniform time-grid. Figure \ref{fig: Flux SDE/RF experiments mini} compares the generation quality of STOC-RF sampler \eqref{eq: stoc-RF sampler non-Eulerian} and the vanilla SDE sampler \eqref{eq: vanilla SDE sampler} and ODE samplers. We see that the generation quality is generally better for STOC-RF. Details are deferred to \Cref{sec:appendix:exp}.


\begin{figure}[t!]
    \centering

    \begin{subfigure}[t]{0.7\textwidth}
        \centering
        \includegraphics[width=\textwidth]{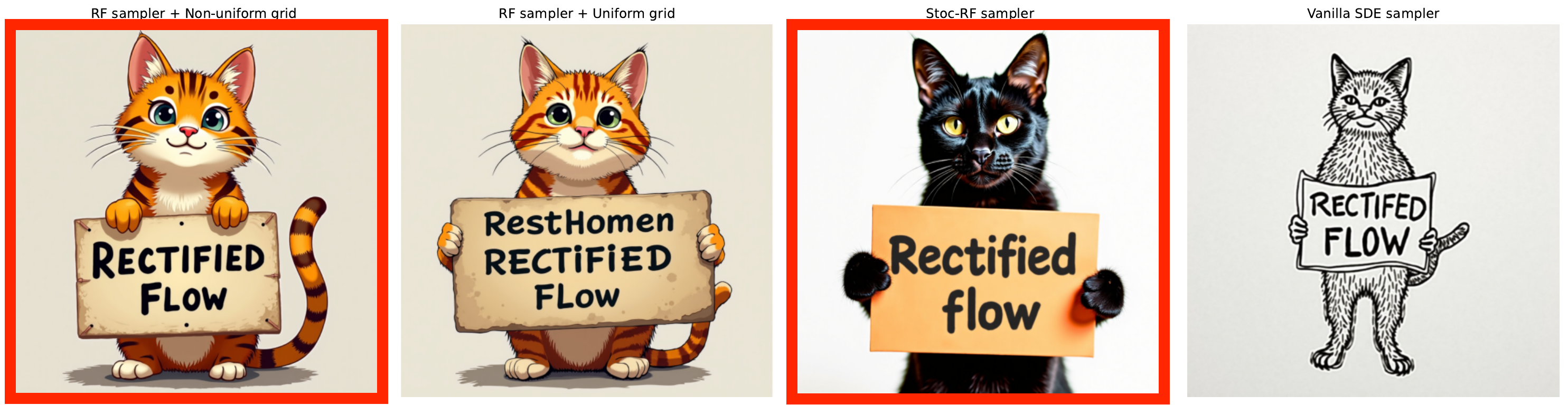}
        \caption{\sf \textbf{Prompt:} A photo of a cat holding a sign \orange{Rectified flow}. ($N = 50$)}
    \end{subfigure}\hfill
    \begin{subfigure}[t]{0.7\textwidth}
        \centering
        \includegraphics[width=\textwidth]{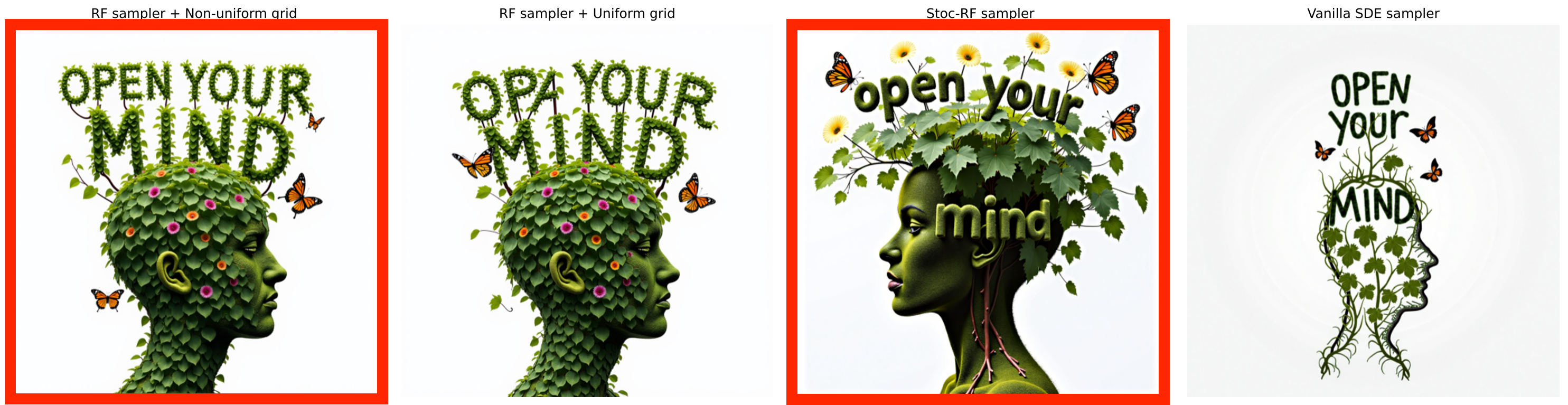}
        \caption{\sf \textbf{Prompt:} Grape vines in the shape of text \orange{`open your mind’} sprouting out of a head with flowers and butterflies. DSLR photo. ($N = 50$)}
    \end{subfigure}
    \vspace{1em}
    \begin{subfigure}[t]{0.7\textwidth}
        \centering
        \includegraphics[width=\textwidth]{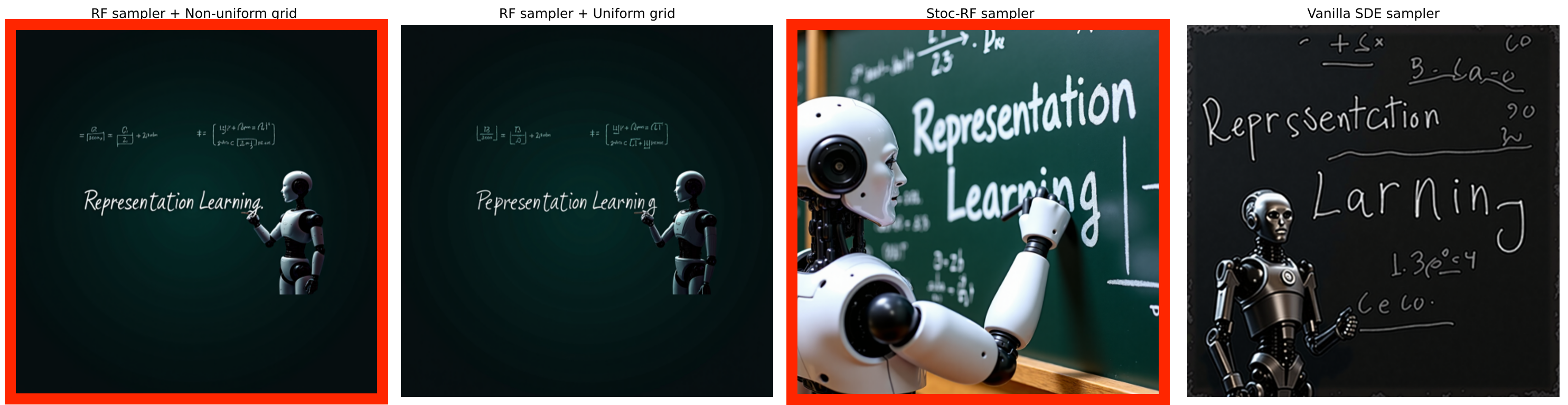}
        \caption{\sf \textbf{Prompt:} Photo of a robot lecturer writing the words \orange{`Representation Learning'} in cursive on a blackboard, with math formulas and diagrams. ($N = 50$)}
    \end{subfigure}
    \vspace{1em}
    \begin{subfigure}[t]{0.7\textwidth}
        \centering
        \includegraphics[width=\textwidth]{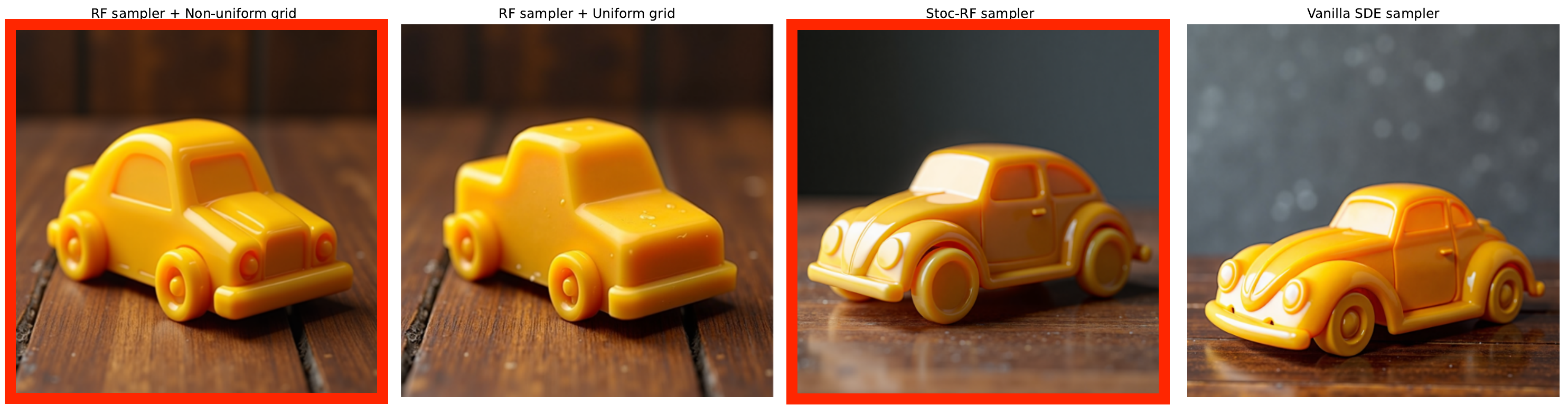}
        \caption{\sf \textbf{Prompt:} A \orange{car made of cheese} on a wooden table. ($N = 80$)}
    \end{subfigure}

    \caption{Generated images using Flux for specific prompts via Euler's scheme \eqref{eq: emp-ode-disc} under non-uniform time-grid \eqref{eq: gemoetric time discretization} (\red{Columns 1}),  uniform time-grid (Column 2), STOC-RF sampler (\red{Column 3}), and vanilla SDE sampler (Column 4). 
    }
    \label{fig: Flux SDE/RF experiments mini}
\end{figure}

\textbf{Caution:}~The \TV~convergence rates in both \Cref{thm: RF - convergence of Eulerian scheme} and \Cref{thm: stoc-RF sampler non-Eulerian} depend highly on the approximation errors. 
Difficulty may arise in estimating $v_{t}(\cdot)$ along the direction of low-dimensionality when $t \approx 1$ due to non-smoothness (see \eqref{eq: low-dim velocity field}). Such non-smoothness is usually hard to capture via standard neural networks (e.g., MLP, U-Net) that are typically smooth, and necessitate the adoption of more expressive architecture. We defer this to future research.

\section{Conclusions}
In this paper, we design a new time-discretization scheme that allows any off-the-shelf pretrained RF sampler to achieve accelerated convergence by adapting to the low-dimensional structure of the target distribution. This allows the RF model to achieve better sampling quality in the first reflow step itself and also corroborates the empirical findings in \citet{lee2024improvingtrainingrectifiedflows}. Next, we show that the DDPM is equivalent to a stochastic version of RF by establishing a novel connection between these processes and the stochastic localization. Building on this connection, we further design a stochastic RF sampler that also adapts to
the low-dimensionality of the target distribution under less stringent assumptions on the learned RF model.

\section*{Acknowledgements}
We thank Prof. Eric Vanden-Eijnden\textsuperscript{\orcidlink{0000-0001-7982-5854}} at NYU, Prof. Sanjay Shakkottai\textsuperscript{\orcidlink{0000-0002-4325-9050}} at UT Austin, and Dr. Christian Rau\textsuperscript{\orcidlink{0000-0003-0852-5015}} for helpful discussions and pointing out important references. AR, PS and SR gratefully acknowledge the NSF grants CCF-2019844 and CCF-2505865. PS and SR gratefully acknowledges NSF grant 2217069.


\bibliography{icml}
\bibliographystyle{icml2026}

\newpage
\appendix
\onecolumn
The appendix is organized as follows.
\begin{enumerate}
    \item Section~\ref{sec:appendix:exp} contains additional experimental results.
    \item Section~\ref{app: proof of main results} contains detailed proofs from Sections~\ref{sec: main results} and the description of the RF sampler.
    \item Section~\ref{app: SL lemmas} contains detailed proofs of results about Section~\ref{sec: SL}.
    \item Finally, Section~\ref{app: auxiliary results} contains all other helpful lemmas and their proofs.
\end{enumerate}

\section{Additional Experiments}\label{sec:appendix:exp}
We have performed various experiments to demonstrate the usefulness of the U-shaped time schedule \eqref{eq: gemoetric time discretization} in the context of T2I. We use the Flux model \citep{flux2024, labs2025flux1kontextflowmatching} which is a pretrained RF model. For our experiments, we use the prompts designed by
\citet{liu2023character, hu2025amo} to test and assess the generation quality. In these examples, we always set $\delta = 1/N$  for a specified choices of $N$. In Figure \ref{fig: Flux experiments} we see that generation quality of RF is superior under a non-uniform time-grid \eqref{eq: gemoetric time discretization} compared to uniform time-grid. In particular, the experiments indicate that a non-uniform grid helps to reduce hallucinations, enjoys better text rendering and achieves better semantic completeness.

We also compare the generation quality of STOC-RF sampler \eqref{eq: stoc-RF sampler non-Eulerian} and the vanilla SDE sampler \eqref{eq: vanilla SDE sampler} in Figure \ref{fig: Flux SDE/RF experiments}. The generated plots show that the quality is generally better for STOC-RF sampler \eqref{eq: stoc-RF sampler non-Eulerian} compared to the RF sampler and the vanilla SDE sampler \eqref{eq: vanilla SDE sampler}.

\begin{figure}[h!]
    \centering
    
    \begin{subfigure}[t]{0.3\textwidth}
        \centering
        \includegraphics[width=\textwidth]{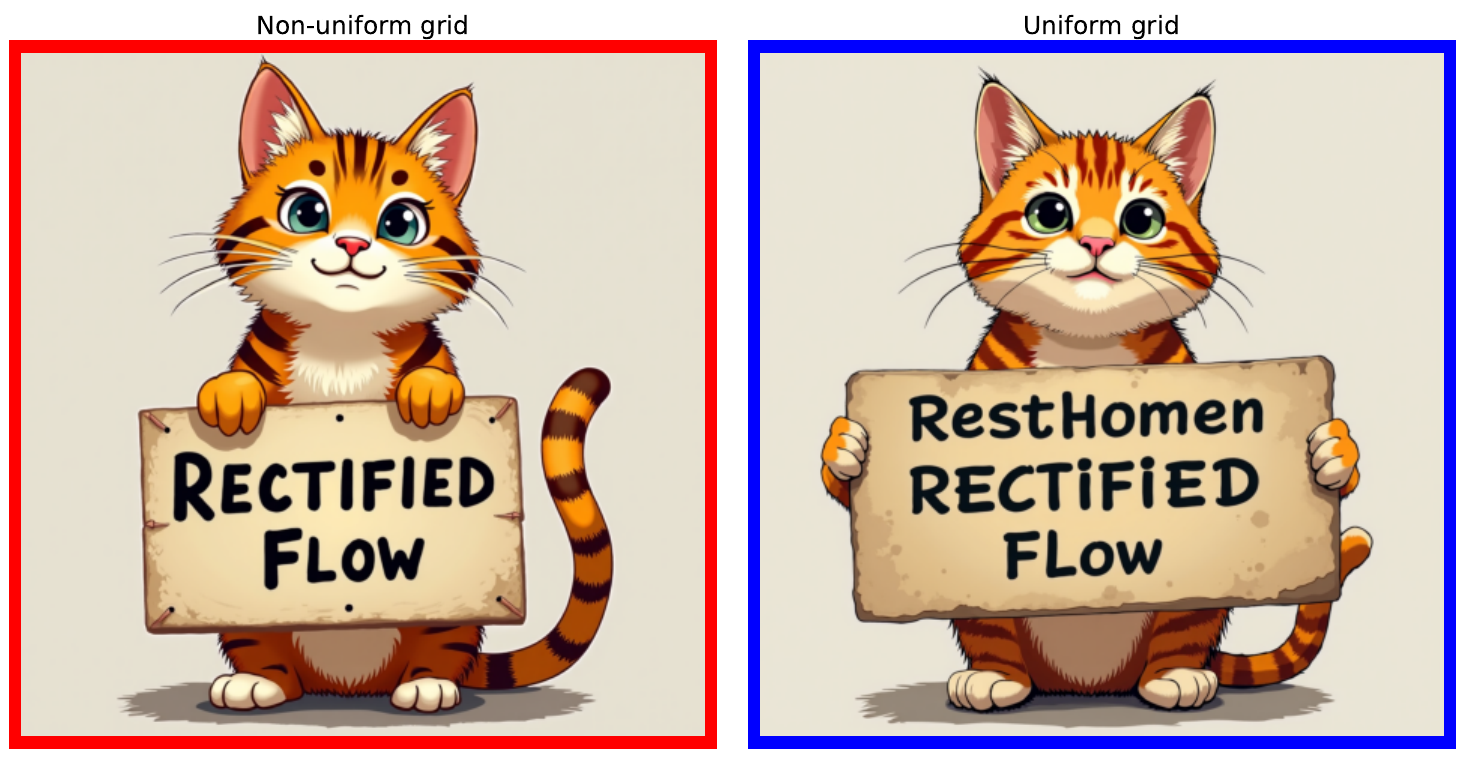}
        \caption{\scriptsize\sf \textbf{Prompt:} A photo of a cat holding a sign \orange{Rectified flow}. (\underline{Less hallucination}, $N = 50$)}
        \label{fig:sub1}
    \end{subfigure}\hfill
    \begin{subfigure}[t]{0.3\textwidth}
        \centering
        \includegraphics[width=\textwidth]{figures/flux_figures/grape_vine_N_50.pdf}
        \caption{\scriptsize\sf \textbf{Prompt:} Grape vines in the shape of text \orange{`open your mind’} sprouting out of a head with flowers and butterflies. DSLR photo. (\underline{Better text rendering}, $N = 50$)}
        \label{fig:sub2}
    \end{subfigure}\hfill
    \begin{subfigure}[t]{0.3\textwidth}
        \centering
        \includegraphics[width=\textwidth]{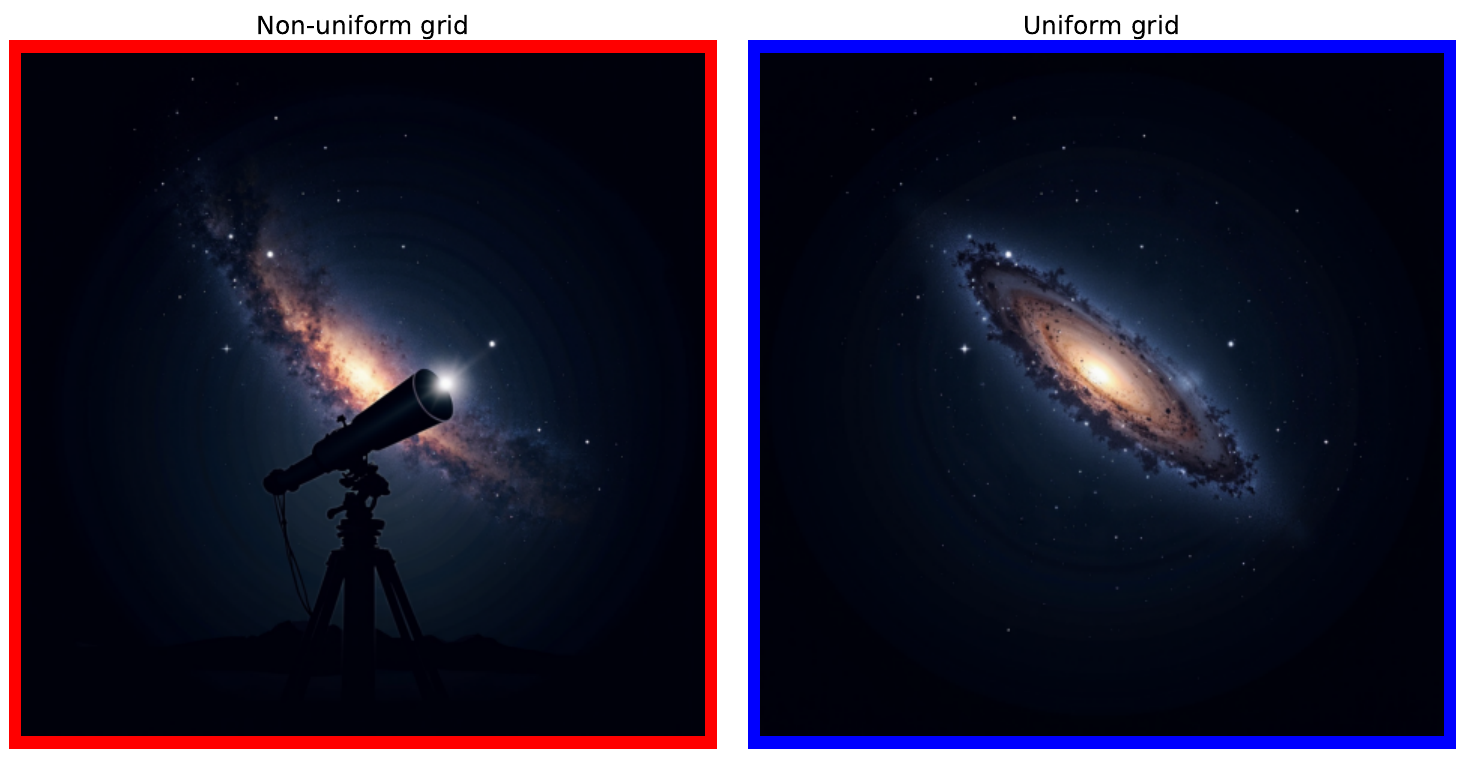}
        \caption{\scriptsize \sf \textbf{Prompt:} the \orange{hubble telescope} and the milky way, with the text \orange{``the universe is a mystery, but we are here to solve it"} (\underline{Better fidelity/ prompt adherence}, $N = 50$)}
        \label{fig:sub3}
    \end{subfigure}
    \vspace{1em} 
    \begin{subfigure}[t]{0.3\textwidth}
        \centering
        \includegraphics[width=\textwidth]{figures/flux_figures/chimp_arguing_N_50.pdf}
        \caption{\scriptsize \sf \textbf{Prompt:} A sign that says \orange{"Please refrain from arguing with the
chimpanzees"}. (\underline{Minimal mistakes}, $N = 50$)}
        \label{fig:sub4}
    \end{subfigure}\hfill
    \begin{subfigure}[t]{0.3\textwidth}
        \centering
        \includegraphics[width=\textwidth]{figures/flux_figures/robo_teacher_N_50.pdf}
        \caption{\scriptsize \sf \textbf{Prompt:} Photo of a robot lecturer writing the words \orange{`Representation Learning'} in cursive on a blackboard, with math formulas and diagrams. (\underline{Correct spelling}, $N = 50$)}
        \label{fig:sub5}
    \end{subfigure}\hfill 
    \begin{subfigure}[t]{0.3\textwidth}
        \centering
        \includegraphics[width=\textwidth]{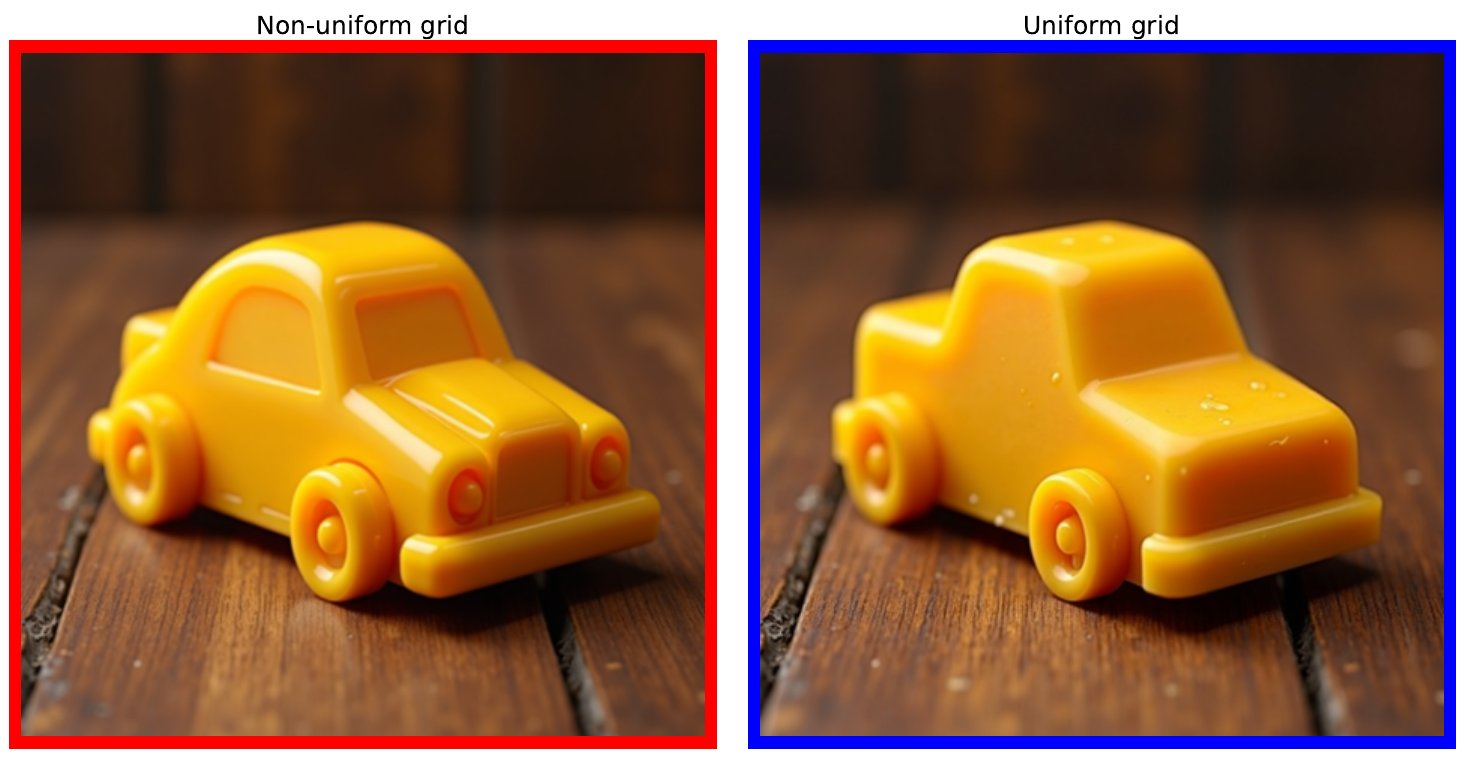}
        \caption{\scriptsize \sf \textbf{Prompt:} A \orange{car made of cheese} on a wooden table. (\underline{Semantic completeness}, $N = 80$)}
        \label{fig:sub6}
    \end{subfigure}

    \caption{Generated images using Flux for specific prompts via Euler's scheme \eqref{eq: emp-ode-disc} under non-uniform time-grid \eqref{eq: gemoetric time discretization} (\red{Red frame}) and uniform time-grid (\blue{Blue frame}).}
    \label{fig: Flux experiments}
\end{figure}

\newpage

\newpage
\section{Proof of main results}
\label{app: proof of main results}

\subsection{Velocity function and its first order identities}
\label{app: v first order identities}
Before proceeding, we illustrate the connection between the velocity $v_t(x)$ and the score function $s_t(x):= \nabla \log p_t(x)$. 
 To this end, we define two key quantities:
\[
\mu_{1\mid t}(x) = \bbE[X_1 \mid X_t = x], 
\; 
\Cov_{1 \mid t}(x) = \Cov(X_1 \mid X_t = x).
\]
With this, true drift be re-written as
\begin{equation}
\label{eq: v_tweedie}
v_t(x) = \bbE[X_1 - X_0 \mid X_t = x] = \frac{1}{1-t}\br{\mu_{1 \mid t}(x) - x}.
\end{equation}

By the first and second order Tweedie's formula \citep{efron2011tweedie, meng2021estimating}, it follows that, for any choice of $t \in (0,1]$ and $x \in \mathbb{R}^d$,
\[
\mu_{1 \mid t}(x) =   \frac{x}{t} + \frac{(1-t)^2}{t} s_t(x),\] 
\[
\Cov_{1 \mid t}(x) = \frac{(1-t)^2}{t^2}I_d + \frac{(1-t)^4}{t^2} \nabla s_t(x).
\]
Substituting this in \eqref{eq: v_tweedie}, we obtain that 
$$
v_t(x) = \frac{x}{t} + \frac{(1-t)}{t} s_t(x), \quad  \text{and} \quad \nabla v_t(x) = - \frac{I_d}{1-t} + \frac{t}{(1-t)^3} \Cov_{1 \mid t}(x).
$$

\subsection{Proof of convergence of RF sampler (Theorem \ref{thm: RF - convergence of Eulerian scheme})}
\label{app: RF convergence}
\paragraph{Notational conventions.}
We begin with some notational conventions for the ease of presentation. 

\begin{enumerate}
    \item We define transition map for the Euler's scheme as 
\[
\Phi_{t_i}(x):= x + \eta_{i} \widehat{v}_{t_i}(x) \quad \text{with $\eta_{i} = t_{i+1} - t_i$},
\]
where $\widehat v_{t}(x)$ is the velocity estimate. 

\item For notational brevity, we will use index $i$ to denote the time $t_i$. In this index notation we have $X_N \equiv X_1 \sim p_1$ 

\item With the above notations in mind, we have 
\begin{equation}
\label{eq: interp in modified index}
X_i = (1-t_i)X_0 + t_iX_N, \quad \text{and}\quad Y_{i+1}=\Phi_{i}\left(Y_{i}\right).
\end{equation}
\end{enumerate}

\paragraph{Step 1: TV recursion.}
We start by obtaining a recursion relation between the \TV distances between the updates $Y_i$ and the ground truth $X_i$. For any $i \le N-2$, we have
\begin{equation}
\label{eq: TV recursion}
\begin{aligned}
\TV\left(p_{X_{i+1}}, p_{Y_{i+1}}\right) & =\sup _{\mathcal{A} \subseteq \bbR^d}\left\{\mathbb{P}_{X_{i+1}}(\mathcal{A})-\mathbb{P}_{Y_{i+1}}(\mathcal{A})\right\}=\sup _{\mathcal{A} \subseteq \bbR^d}\left\{\mathbb{P}_{X_{i+1}}(\mathcal{A})-\mathbb{P}_{Y_{i}}\left(\Phi_{i}^{-1}(\mathcal{A})\right)\right\} \\
& \leq \sup _{\mathcal{A} \subseteq \bbR^d}\left\{\mathbb{P}_{X_{i+1}}(\mathcal{A})-\mathbb{P}_{X_{i}}\left(\Phi_{i}^{-1}(\mathcal{A})\right)\right\}+\sup _{\mathcal{A} \subseteq \bbR^d}\left\{\mathbb{P}_{X_{i}}\left(\Phi_{i}^{-1}(\mathcal{A})\right)-\mathbb{P}_{Y_{i}}\left(\Phi_{i}^{-1}(\mathcal{A})\right)\right\}  \\
& \leq \sup _{\mathcal{A} \subseteq \bbR^d}\left\{\mathbb{P}_{X_{i+1}}(\mathcal{A})-\mathbb{P}_{\Phi_{i}\left(X_{i}\right)}(\mathcal{A})\right\}+\TV\left(p_{X_{i}}, p_{Y_{i}}\right) \\
& =\TV\left(p_{X_{i+1}}, p_{\Phi_{i}\left(X_{i}\right)}\right)+\TV\left(p_{X_{i}}, p_{Y_{i}}\right)
\end{aligned}
\end{equation}

where the first identity arises from the basic property of the TV distance.

\paragraph{Step 2: Simplifying $\TV(p_{X_{i+1}}, p_{\Phi_i(X_i)})$.}


Next, we recall that $v_i(x) = \frac{x}{t_i} + \frac{1-t_i}{t_i} s_i(x)$. Therefore, by \citet[Lemma 13]{Liu2024_FlowNotes} we have 
\begin{equation}
\label{eq: partial v_x}
\frac{\partial v_i(x)}{\partial x} = -  \frac{1}{1-t_i} I + \frac{ t_i}{(1-t_i)^3} \Cov_{1 \mid t_i}(x_i)
\end{equation}
Further, for any vector $u \in \mathbb{R}^{d}$ with $\|u\|_{2}=1$ 
Assumption \ref{assumption: higher-order approximation error}\ref{assumption: Jone} tells us that for $i \le N-2$
$$
\begin{aligned}
& u^{\top} \frac{\partial \Phi_{i}(x)}{\partial x} u=u^{\top}\left(I+\eta_{i} \frac{\partial \widehat v_{i}(x)}{\partial x}\right) u=u^{\top}\left(I+\eta_{i} \frac{\partial v_{i}(x)}{\partial x}\right) u+\eta_{i} u^{\top} \varepsilon_{i}^{J}(x) u \\
& \geq u^{\top}\left(I+\eta_{i} \frac{\partial v_{i}(x)}{\partial x}\right) u-\frac{1}{8}\|u\|_{2}^{2}=u^{\top}\left\{\left(1-\frac{\eta_{i} }{1-t_i}\right) I+\frac{ \eta_{i} t_i}{(1-t_i)^3} \operatorname{Cov}_{1 \mid t_i}(x)\right\} u-\frac{1}{8}\|u\|_{2}^{2} \\
& \geq\left(1-h-\frac{1}{8}\right)\|u\|_{2}^{2}>\frac{3}{8} , \quad \text{if $h <  1/2$.}
\end{aligned}
$$

Here, the penultimate relation follows from the positive semi-definiteness of the covariance matrix $\operatorname{Cov}_{1 \mid t}(x)$ and the fact that $\frac{\eta_i}{1 - t_i} = \frac{h t_i}{1 - t_i} \ind\bc{t_i \le 1/2} + \frac{h(1-t_{i+1})}{1 - t_i} \ind\bc{t_i>1/2} \le h$ (see \Cref{lemma: properties of time-steps}). Here $\ind \bc{\cdot}$ denotes the indicator function.

\textbf{Invertibility of $\Phi_i(\cdot)$:} The above inequality implies that $\frac{\partial {\Phi}_{i}(x)}{\partial x}$ is positive definite uniformly over all $x$. This entails that $\Phi_i(\cdot)$ is \textit{proper}, i.e, $\lim_{\Norm{x}_2 \to \infty} \Norm{\Phi(x)}_2 = \infty$. To see this, we define $\phi(t):=  u^\top \Phi_i(t u)$ for a fixed \textit{unit vector} $u$.Then, we have $\phi^\prime(t) = u^\top \nabla \Phi_i(t u) u > 3/8$. This shows that 
\[
\phi(t) - \phi(0) = \int_{0}^t \phi^\prime(u) \; du  \ge 3t/8.
\]
An application Cauchy-Schwarz inequality yields
\[
\Norm{\Phi_i(tu)}_2 \ge u^\top \Phi_i(t u) \ge 3t/8 + u^\top \Phi_i(0) \ge 3t/8 - \Norm{\Phi_i(0)}_2.
\]
Setting $t = \Norm{x}_2$ and $u = x/\Norm{x}_2$, we have $\Norm{\Phi_i(x)}_2 \ge 3\Norm{x}_2/8 - \Norm{\Phi_i(0)}_2$. This implies that $\Phi_i(\cdot)$ is proper. Then, by Theorem 2.2 of \citet{ruzhansky2015global} shows that $\Phi_i(\cdot)$ is a diffeomorphism.
Therefore, we can find a unique $x_{i} $ such that ${\Phi}_{i}\left(x_{i}\right)=x_{i+1}$, which in turn allows us to derive
$$
p_{{\Phi}_{i}\left(X_{i}\right)}\left(x_{i+1}\right)=p_{X_{i}}\left({\Phi}_{i}^{-1}\left(x_{i}\right)\right) \cdot \operatorname{det}\left(\frac{\partial {\Phi}_{i}^{-1}\left(x_{i+1}\right)}{\partial x_{i+1}}\right)=p_{X_{i}}\left(x_{i}\right) \cdot \operatorname{det}\left(\frac{\partial x_{i}}{\partial x_{i+1}}\right)
$$

Consequently, we see that: for any $i \geq 1$ 
,
\begin{equation}
\label{eq: TV distance p_phi_i and p_i+1}
\begin{aligned}
p_{{\Phi}_{i}\left(X_{i}\right)}\left(x_{i+1}\right) & -p_{X_{i+1}}\left(x_{i+1}\right)=p_{X_{i}}\left(x_{i}\right) \operatorname{det}\left(\frac{\partial x_{i}}{\partial x_{i+1}}\right)-p_{X_{i+1}}\left(x_{i+1}\right) \\
& =\int\left\{p_{X_{t} \mid X_N}\left(x_{i} \mid x_N\right) \operatorname{det}\left(\frac{\partial x_{i}}{\partial x_{i+1}}\right)-p_{X_{i+1} \mid X_N}\left(x_{i+1} \mid x_N\right)\right\} p_{X_N}\left(x_N\right) \mathrm{d} x_N \\
& =\int \Big\{1-\underbrace{\frac{p_{X_{i+1} \mid X_N}\left(x_{i+1} \mid x_N\right)}{p_{X_{i} \mid X_N}\left(x_{i} \mid x_N\right)} \operatorname{det}\left(\frac{\partial x_{i+1}}{\partial x_{i}}\right)}_{=: \mathcal{T}\left(x_{i}, x_N\right)}\Big\} p_{X_{i}, X_N}\left(x_{i}, x_N\right) \operatorname{det}\left(\frac{\partial x_{i}}{\partial x_{i+1}}\right) \mathrm{d} x_N  \\
& =\int\left\{1-\mathcal{T}\left(x_{i}, x_N\right)\right\} p_{X_{i}, X_N}\left(x_{i}, x_N\right) \operatorname{det}\left(\frac{\partial x_{i}}{\partial x_{i+1}}\right) \mathrm{d} x_N
\end{aligned}
\end{equation}

\paragraph{Step 3: Analysis of $\cT(x_i, x_N)$.}
The next step is then to analyze $\mathcal{T}\left(x_{i}, x_N\right)$.
Controlling $\mathcal{T}\left(x_{i}, x_N\right)$ : Given how $X_{i+1}$ and $X_{i}$ are generated, we observe that

\begin{align*}
\mathcal{T}\left(x_{i+1}, x_N\right) & =\frac{p_{X_{i+1} \mid X_N}\left(x_{i+1} \mid x_N\right)}{p_{X_{i} \mid X_{N}}\left(x_{i} \mid x_N\right)} \operatorname{det}\left(\frac{\partial x_{i+1}}{\partial x_{i}}\right)=\frac{\left(\frac{1}{1-t_{i+1}}\right)^{d} \exp \left\{\frac{-\left\|x_{i+1}-t_{i+1}x_N\right\|_{2}^{2}}{2\left(1-t_{i+1}\right)^2}\right\}}{\left(\frac{1}{1-t_i}\right)^{d} \exp \left\{\frac{-\left\|x_{i}-t_i x_N\right\|_{2}^{2}}{2\left(1-t_i\right)^2}\right\}} \operatorname{det}\left(\frac{\partial x_{i+1}}{\partial x_{i}}\right) \\
& =\underbrace{\left(\frac{1-t_i}{1-t_{i+1}}\right)^{d} \operatorname{det}\left(\frac{\partial x_{i+1}}{\partial x_{i}}\right)}_{=: \mathcal{T}_{1}\left(x_{i}, x_N\right)} \underbrace{\exp \left\{\frac{\left\|x_{i}-t_i x_N\right\|_{2}^{2}}{2\left(1-t_i\right)^2}-\frac{\left\|x_{i+1}-t_{i+1}x_N\right\|_{2}^{2}}{2\left(1-t_{i+1}\right)^2}\right\}}_{=: \mathcal{T}_{2}\left(x_{t}, x_{0}\right)}
\end{align*}

leaving us with two terms to control.
Let us first study the term $\mathcal{T}_{1}\left(x_{t}, x_{0}\right)$, towards which we see that

\begin{align*}
\mathcal{T}_{1}\left(x_{t}, x_{0}\right) & =\left(\frac{1-t_i}{1-t_{i+1}}\right)^{d} \operatorname{det}\left(\frac{\partial (x_i + \eta_i \widehat v_i(x_i))}{\partial x_{i}}\right) \\
& =\left(\frac{1-t_i}{1-t_{i+1}}\right)^{d} \operatorname{det}\left(I+\eta_{t} \frac{\partial}{\partial x_{t}} v_i\left(x_{i}\right)+\eta_{i}\left(\frac{\partial}{\partial x_{i}} \widehat v_{i}\left(x_{i}\right)-\frac{\partial}{\partial x_{i}} v_{i}\left(x_{i}\right)\right)\right) \\
& \stackrel{(a)}{=}\left(\frac{1-t_i}{1-t_{i+1}}\right)^{d} \operatorname{det}\left(\left(1-\frac{\eta_{i}}{1-t_i}\right) I+\frac{ \eta_{i} t_i}{(1-t_i)^3} \operatorname{Cov}_{1 \mid t_i}(x)+\eta_{t} \varepsilon_i^{J}\left(x_{i}\right)\right)  \\
& =\operatorname{det}\left(\frac{1 - t_i - \eta_i}{1 - t_{i+1}} I+ \frac{\eta_i t_i}{(1-t_{i+1}) (1-t_i)^2} \operatorname{Cov}_{1 \mid t_i}\left(x_{i}\right)+ \frac{\eta_i(1-t_i)}{(1-t_{i+1})} \varepsilon_i^{J}\left(x_{t}\right)\right)
\\
& \stackrel{(b)}{=} \operatorname{det}\left( I+ \frac{\eta_i t_i}{(1-t_{i+1}) (1-t_i)^2} \operatorname{Cov}_{1 \mid t_i}\left(x_{i}\right)+ \frac{\eta_i(1-t_i)}{(1-t_{i+1})} \varepsilon_i^{J}\left(x_{i}\right)\right).
\end{align*}

 Here, (a) arises from Tweedie's formula, whereas (b) follows since

\begin{equation*}
\frac{1 - t_i - \eta_i}{1 - t_{i+1}} = \frac{1 -t_{i+1}}{1 -t_{i+1}} = 1.
\end{equation*}

Next, we turn attention to the term $\mathcal{T}_{2}\left(x_{t}, x_{0}\right)$, which satisfies

\begin{equation}
\label{eq: logT2}
\begin{aligned}
\log \mathcal{T}_{2}\left(x_{i}, x_{N}\right) & =\frac{\left\|x_{i}-t_i x_N\right\|_{2}^{2}}{2\left(1-t_i\right)^2}-\frac{\left\|x_{i+1}-t_{i+1}x_N\right\|_{2}^{2}}{2\left(1-t_{i+1}\right)^2}\\
& =\frac{\left\|x_{i}-t_i x_N\right\|_{2}^{2}}{2\left(1-t_i\right)^2}-\frac{\left\|x_{i} + \eta_i \widehat v_i(x_i)-t_{i+1}x_N\right\|_{2}^{2}}{2\left(1-t_{i+1}\right)^2}
\end{aligned}
\end{equation}
We focus on the term $x_{i} + \eta_i \widehat v_i(x_i)-t_{i+1}x_N$:
$$
\begin{aligned}
x_{i} + \eta_i \widehat v_i(x_i)-t_{i+1}x_N & =\left(1-\frac{\eta_{i}}{1-t_i}\right) x_{i}+\frac{ \eta_{i}}{1-t_i} \mu_{1 \mid t_i}\left(x_{i}\right) - (t_{i} + \eta_i) x_N + \eta_{i}\left(\widehat v_{i}\left(x_{i}\right)-v_{i}\left(x_{i}\right)\right)
\\
& =\left(1-\frac{\eta_{i}}{1-t_i}\right)\left(x_{i}- t_{i} x_N\right)+\frac{ \eta_{i}}{1-t_i} \mu_{1 \mid t_i}\left(x_{i}\right)-\frac{\eta_{i}}{1-t_i} t_{i} x_N - \eta_i x_N + \eta_{t}\left(\widehat v_{i}\left(x_{i}\right)-v_{i}\left(x_{i}\right)\right) 
\\
& =\left(\frac{1 - t_{i+1}}{1-t_i}\right)\left(x_{i}-t_i x_N\right)+\frac{\eta_i}{1-t_i}\left(\mu_{1 \mid t_i}\left(x_{i}\right)-x_{N}\right)
+
\eta_{i} \varepsilon_{i}^{\rm v}\left(x_{i}\right) 
\end{aligned}
$$

Substitution into \eqref{eq: logT2} yields

\begin{equation}
\label{eq: logT2_expanded}
\begin{aligned}
& \log \mathcal{T}_{2}\left(x_{i}, x_{N}\right)\\
& = -\frac{\eta_i}{(1-t_i)^2 (1-t_{i+1})}(x_i - t_i x_N)^\top(\mu_{1\mid t_i}(x_i) - x_N) - \frac{\eta_i}{(1-t_i)(1-t_{i+1})}(x_i - t_i \mu_{1 \mid t_i}(x_i))^\top \varepsilon_i^{\rm v}(x_i)
\\
& - \frac{\eta_i t_{i+1}}{(1-t_{i+1})^2} \left(\mu_{1\mid t_i}(x_i) - x_N\right) \varepsilon_i^{\rm v}(x_i)
 - \frac{\eta_i^2 \Norm{\mu_{1\mid t_i}(x_i) - x_N}_2^2 }{2(1-t_i)^2 (1-t_{i+1})^2} - \frac{\eta_i^2 \Norm{\varepsilon_i^{\rm v}(x_i)}_2^2}{2(1-t_{i+1})^2}.
\end{aligned}
\end{equation}
Now Define:
\begin{align*}
\xi(x_i, x_N)& := -\frac{\eta_i(x_i - t_i x_N)^\top(\mu_{1\mid t_i}(x_i) - x_N)}{(1-t_i)^2 (1-t_{i+1})} - \frac{\eta_i t_{i+1}\left(\mu_{1\mid t_i}(x_i) - x_N\right) \varepsilon_i^{\rm v}(x_i)}{ (1-t_{i+1})^2} \\
 & - \frac{\eta_i^2 \Norm{\mu_{1\mid t_i}(x_i) - x_N}_2^2 }{2(1-t_i)^2 (1-t_{i+1})^2} + \frac{\eta_i}{(1-t_i)^2 (1-t_{i+1})} \left(t_i + \frac{\eta_i}{2(1-t_{i+1})}\right) \Tr(\Cov_{1\mid t_i})
\end{align*}
It is easy to see that 
\[
\int \xi(x_i, x_N) p_{X_N\mid X_i = x_i}(x_N) \; dx_N = 0.
\]

Thus, based on \eqref{eq: logT2_expanded}, we can further simplify $\log \mathcal{T}_{2}\left(x_{i}, x_{N}\right)$ as follows:

\begin{align*}
\log \mathcal{T}_{2}\left(x_{i}, x_{N}\right)= & 
\xi(x_i, x_N)-\frac{\eta_i (x_i - t_i \mu_{1\mid t_i}(x_i))^\top \varepsilon_i^{\rm v}(x_i)}{(1-t_i)(1-t_{i+1})} - \frac{\eta_i^2 \Norm{\varepsilon_i^{\rm v}(x_i)}_2^2}{2(1-t_{i+1})^2}\\
& - \frac{\eta_i}{(1-t_i)^2 (1-t_{i+1})} \left(t_i + \frac{\eta_i}{2(1-t_{i+1})}\right) \Tr(\Cov_{1\mid t_i})
\end{align*}

Now, for ease of presentation, define
$$
\begin{aligned}
W\left(x_{i}\right) & :=\log \operatorname{det}\left( I+ \frac{\eta_i t_i}{(1-t_{i+1}) (1-t_i)^2} \operatorname{Cov}_{1 \mid t_i}\left(x_{i}\right)+ \frac{\eta_i(1-t_i)}{(1-t_{i+1})} \varepsilon_i^{J}\left(x_{i}\right)\right)
-  \frac{\eta_i^2 \Norm{\varepsilon_i^{\rm v}(x_i)}_2^2}{2(1-t_{i+1})^2} \\
& - \frac{\eta_i}{(1-t_i)^2 (1-t_{i+1})} \left(t_i + \frac{\eta_i}{2(1-t_{i+1})}\right) \Tr(\Cov_{1\mid t_i})
-
\frac{\eta_i (x_i - t_i \mu_{1\mid t_i}(x_i))^\top \varepsilon_i^{\rm v}(x_i)}{(1-t_i)(1-t_{i+1})}
\end{aligned}
$$

Then for any $x_{i} \in \mathcal{X}_{\text {data }}$, it holds that

\begin{equation}
\label{eq: integral inequality of W}
\int_{x_{N}}\left(1-e^{\xi\left(x_{t}, x_{N}\right)}\right) e^{W\left(x_{i}\right)} p_{X_{N} \mid X_{i}}\left(x_{N} \mid x_{i}\right) \mathrm{d} x_{N} \leq-e^{W\left(x_{i}\right)} \int_{x_{N}} \xi\left(x_{i}, x_{N}\right) p_{X_{N} \mid X_{i}}\left(x_{N} \mid x_{i}\right) \mathrm{d} x_{N}=0
\end{equation}

where the inequality results from the elementary inequality $1-e^{x} \leq-x$ for all $x \in \mathbb{R}$. Using the above quantities and \eqref{eq: TV distance p_phi_i and p_i+1} we get: for any $\mathcal{A} \subseteq \mathcal{X}_{\text {data }} \cap \mathcal{E}_{i}$,

\begin{align*}
&\mathbb{P}_{\Phi_{i}\left(X_{i}\right)}(\mathcal{A})-\mathbb{P}_{X_{i+1}}(\mathcal{A}) \\
& =\int_{\mathcal{A}}\left\{1-\mathcal{T}\left(x_{i}, x_N\right)\right\} p_{X_{i}, X_N}\left(x_{i}, x_N\right) \operatorname{det}\left(\frac{\partial x_{i}}{\partial x_{i+1}}\right) \mathrm{d} x_N \; \dx_{i+1}\\
& =\int_{\mathcal{A} \times \mathcal{X}_{\text {data }}}\left\{1-\mathcal{T}\left(x_{i}, x_{N}\right)\right\} p_{ X_{N} \mid X_i = x_i}\left( x_{N}\right) p_{X_i}(x_i)\operatorname{det}\left(\frac{\partial x_{i}}{\partial x_{i+1}}\right) \mathrm{d} x_{N} \;  \dx_{i+1} \\
& =\int_{\Phi_{i}^{-1}(\mathcal{A}) \times \mathcal{X}_{\text {data }}}\left\{1-\mathcal{T}\left(x_i, x_N\right)\right\} p_{ X_{N} \mid X_i = x_i}\left( x_{N}\right) p_{X_i}(x_i)\mathrm{d} x_N \mathrm{~d} x_{i} \\
& =\int_{\Phi_{i}^{-1}(\mathcal{A}) \times \mathcal{X}_{\text {data }}}\left\{1-e^{\xi\left(x_i, x_N\right)} \cdot e^{W\left(x_i\right)}\right\} p_{ X_{N} \mid X_i = x_i}\left( x_{N}\right) p_{X_i}(x_i) \mathrm{d} x_N \mathrm{~d} x_{i}  \\
& = \int_{\Phi_{i}^{-1}(\mathcal{A}) \times \mathcal{X}_{\text {data }}}\left\{\left(1-e^{\xi\left(x_i, x_N\right)}\right) e^{W\left(x_i\right)}+\left(1-e^{W\left(x_i\right)}\right)\right\} p_{ X_{N} \mid X_i = x_i}\left( x_{N}\right) p_{X_i}(x_i)\mathrm{d} x_N \mathrm{~d} x_i \\
& \stackrel{(a)}{\leq} \int_{x_i, x_N \in \Phi_{i}^{-1}(\mathcal{A}) \times \cX_{\rm data}}\left\{1-e^{W\left(x_i\right)}\right\} p_{X_i, X_N}\left(x_i, x_N\right) \mathrm{d} x_i\; \dx_N \\
& \leq \int_{x_i\in \Phi_{i}^{-1}(\mathcal{A}) }-W\left(x_i\right) p_{X_i}\left(x_i       \right) \mathrm{d} x_i, 
\end{align*}
where, ($a$) invokes \eqref{eq: integral inequality of W}, and the last inequality follows from the elementary inequality $1-e^{x} \leq-x$.

\paragraph{Step 4: Analyzing $W(x_i)$.}
First, we introduce the following lemma from \cite{liang2025low}.

\begin{lemma}[Lemma 5, \citet{liang2025low}]
\label{lemma: lower bound on log det}
    Let $A\in \bbR^{d \times d}$ be a positive-definite matrix, and $\Delta \in \bbR^{d \times d}$ be any square matrix. Suppose $\eta \Norm{\Delta}_2 \le 1/4$, where $ \eta \in (0,1)$. Then it holds that 
    \[
    \log \det (I + \eta A + \eta \Delta ) \ge \eta (\Tr(A) + \Tr(\Delta)) - 4 \eta^2 (\Norm{A}_F^2 + \Norm{\Delta}_F^2).
    \]
\end{lemma}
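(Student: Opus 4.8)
The obstruction to expanding $\log\det(I+\eta A+\eta\Delta)=\log\det(I+\eta(A+\Delta))$ as a power series is that $\eta A$ need not be small in operator norm; only the perturbation is controlled, via the hypothesis $\eta\Norm{\Delta}_2\le 1/4$. The plan is therefore to peel off the large positive part \emph{multiplicatively}. Since $A\succeq 0$, the matrix $I+\eta A$ is invertible, and
\[
I+\eta A+\eta\Delta=(I+\eta A)(I+B),\qquad B:=(I+\eta A)^{-1}\eta\Delta ,
\]
so that $\log\det(I+\eta A+\eta\Delta)=\log\det(I+\eta A)+\log\det(I+B)$; I will bound the two pieces separately.

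For the first piece I would diagonalize the symmetric positive semidefinite matrix $A$, with eigenvalues $\lambda_i\ge 0$, and invoke the elementary inequality $\log(1+x)\ge x-x^2/2$, valid for \emph{all} $x\ge 0$ (its derivative comparison gives $\frac{d}{dx}[\log(1+x)-x+x^2/2]=\frac{x^2}{1+x}\ge 0$, and equality holds at $x=0$). This yields $\log\det(I+\eta A)=\sum_i\log(1+\eta\lambda_i)\ge \eta\,\Tr(A)-\tfrac{\eta^2}{2}\Norm{A}_F^2$, using $\sum_i\lambda_i^2=\Norm{A}_F^2$.

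For the second piece, the key point is that $A\succeq 0$ forces $\Norm{(I+\eta A)^{-1}}_2\le 1$, hence $\Norm{B}_2\le\eta\Norm{\Delta}_2\le\tfrac14<1$ and $\Norm{B}_F\le\eta\Norm{\Delta}_F$. Consequently the principal matrix logarithm of $I+B$ is well-defined, $\log\det(I+B)=\Tr\log(I+B)=\Tr(B)+\sum_{k\ge 2}\tfrac{(-1)^{k+1}}{k}\Tr(B^k)$, and the tail is controlled by $|\Tr(B^k)|\le\rho(B)^{k-2}\sum_i|\mu_i(B)|^2\le\Norm{B}_2^{k-2}\Norm{B}_F^2$ (Schur's inequality plus $\rho(B)\le\Norm{B}_2$), so $\bigl|\sum_{k\ge 2}\tfrac{(-1)^{k+1}}{k}\Tr(B^k)\bigr|\le \Norm{B}_F^2/(1-\Norm{B}_2)\le\tfrac43\eta^2\Norm{\Delta}_F^2$. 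It remains to isolate the linear term inside $\Tr(B)$: using $(I+\eta A)^{-1}=I-\eta(I+\eta A)^{-1}A$ (the two factors commute),
\[
\Tr(B)=\eta\,\Tr(\Delta)-\eta^2\,\Tr\!\bigl((I+\eta A)^{-1}A\,\Delta\bigr),
\]
and Cauchy--Schwarz for the Frobenius inner product, together with $\Norm{(I+\eta A)^{-1}A}_F\le\Norm{A}_F$, bounds the last term by $\eta^2\Norm{A}_F\Norm{\Delta}_F\le\tfrac{\eta^2}{2}(\Norm{A}_F^2+\Norm{\Delta}_F^2)$.

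Adding the three estimates gives $\log\det(I+\eta A+\eta\Delta)\ge \eta(\Tr A+\Tr\Delta)-\eta^2\Norm{A}_F^2-(\tfrac12+\tfrac43)\eta^2\Norm{\Delta}_F^2$, which is stronger than the claimed bound since $1\le 4$ and $\tfrac{11}{6}\le 4$. The only genuinely delicate step is the treatment of $I+B$: one must justify convergence of the matrix-logarithm series (which hinges precisely on $\Norm{B}_2<1$, i.e.\ on $A\succeq 0$ and $\eta\Norm{\Delta}_2\le 1/4$) and cleanly separate $\eta\,\Tr(\Delta)$ from the interaction of $\Delta$ with the uncontrolled matrix $A$; the remaining estimates are routine bookkeeping.
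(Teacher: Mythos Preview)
The paper does not supply its own proof of this lemma; it is quoted as Lemma~5 of \citet{liang2025low} and invoked as a black box in Step~4 of the proof of Theorem~\ref{thm: RF - convergence of Eulerian scheme}. Your argument is correct and self-contained. The multiplicative splitting $I+\eta A+\eta\Delta=(I+\eta A)(I+B)$ with $B=(I+\eta A)^{-1}\eta\Delta$ is exactly the right move: it isolates the possibly large positive-semidefinite part, for which the scalar inequality $\log(1+x)\ge x-x^2/2$ requires only $x\ge 0$, from the genuinely small perturbation $B$, whose operator norm inherits the bound $1/4$ because $\Norm{(I+\eta A)^{-1}}_2\le 1$. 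The remaining steps---Schur's inequality $\sum_i|\mu_i(B)|^2\le\Norm{B}_F^2$ for the tail of the logarithm series, the resolvent identity $(I+\eta A)^{-1}=I-\eta(I+\eta A)^{-1}A$ to extract $\eta\,\Tr(\Delta)$ from $\Tr(B)$, and Cauchy--Schwarz plus $\Norm{(I+\eta A)^{-1}A}_F\le\Norm{A}_F$ for the cross term---are all clean. You in fact obtain the sharper constants $1$ and $11/6$ in place of the stated $4$, and your argument only uses $A\succeq 0$ rather than strict positive-definiteness.
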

Using the above lemma we have 
\begin{align*}
& - \log \operatorname{det}\left( I+ \frac{\eta_i t_i}{(1-t_{i+1}) (1-t_i)^2} \operatorname{Cov}_{1 \mid t_i}\left(x_{i}\right)+ \frac{\eta_i(1-t_i)}{(1-t_{i+1})} \varepsilon_i^{J}\left(x_{i}\right)\right) \\
& \le  \frac{4 \eta_i^2 t_i^2}{(1-t_{i+1})^2 (1-t_i)^4} \Norm{\Cov_{1 \mid t_i}}_F^2 + \frac{4 \eta_i^2 (1-t_i)^2}{(1-t_{i+1})^2} \Norm{\varepsilon_i^J(x_i)}_F^2\\
& \quad - \frac{\eta_i t_i}{(1-t_{i+1}) (1-t_i)^2} \Tr(\Cov_{1 \mid t_i}) - \frac{\eta_i (1-t_i)}{(1-t_{i+1})} \Tr(\varepsilon_i^J(x_i)),
\end{align*}
provided that the inequality

$$\frac{\eta_i (1-t_i)}{(1-t_{i+1})} \Norm{\varepsilon_i^J(x_i)}_2\le \frac{1}{4}
$$
holds for all $i$. This is indeed the case in our setting, due to Assumption \ref{assumption: higher-order approximation error}\ref{assumption: Jone},  and 
$$\frac{1-t_i}{1 - t_{i+1}} = \frac{1}{1 - \frac{\eta_i}{1 - t_i}} \le 2$$
due to the inequality $\frac{\eta_i}{1 - t_i} \le 2h \le 1/2$, where $h$ is defined in \eqref{eq: gemoetric time discretization}.
This yields 
\begin{equation}
\label{eq: bound on W}
\begin{aligned}
- W\left(x_{i}\right) & = - \log \operatorname{det}\left( I+ \frac{\eta_i t_i}{(1-t_{i+1}) (1-t_i)^2} \operatorname{Cov}_{1 \mid t_i}\left(x_{i}\right)
+ \frac{\eta_i(1-t_i)}{(1-t_{i+1})} \varepsilon_i^{J}\left(x_{i}\right)\right)
+  \frac{\eta_i^2 \Norm{\varepsilon_i^{\rm v}(x_i)}_2^2}{2(1-t_{i+1})^2} \\
& + \frac{\eta_i}{(1-t_i)^2 (1-t_{i+1})} \left(t_i + \frac{\eta_i}{2(1-t_{i+1})}\right) \Tr(\Cov_{1\mid t_i})
+
\frac{\eta_i (x_i - t_i \mu_{1\mid t_i}(x_i))^\top \varepsilon_i^{\rm v}(x_i)}{(1-t_i)(1-t_{i+1})}\\
& \le \frac{4 \eta_i^2 t_i^2}{(1-t_{i+1})^2 (1-t_i)^4} \Norm{\Cov_{1 \mid t_i}(x_i)}_F^2 + \frac{4 \eta_i^2 (1-t_i)^2}{(1-t_{i+1})^2} \Norm{\varepsilon_i^J(x_i)}_F^2 + \frac{\eta_i^2 \Norm{\varepsilon_i^{\rm v}(x_i)}_2^2}{2(1-t_{i+1})^2} \\
& \quad + \frac{\eta_i (x_i - t_i \mu_{1\mid t_i}(x_i))^\top \varepsilon_i^{\rm v}(x_i)}{(1-t_i)(1-t_{i+1})} +  \frac{\eta_i}{(1-t_i)^2 (1-t_{i+1})} \left(t_i + \frac{\eta_i}{2(1-t_{i+1})} -  t_i\right) \Tr(\Cov_{1\mid t_i}(x_i) )\\
& \quad - \frac{\eta_i^2 (1-t_i)}{(1-t_{i+1})} \Tr(\varepsilon_i^J(x_i))\\
& \le \frac{4 \eta_i^2 t_i^2}{(1-t_{i+1})^2 (1-t_i)^4} \Norm{\Cov_{1 \mid t_i}(x_i)}_F^2 + \frac{4 \eta_i^2 (1-t_i)^2}{(1-t_{i+1})^2} \Norm{\varepsilon_i^J(x_i)}_F^2 + \frac{\eta_i^2 \Norm{\varepsilon_i^{\rm v}(x_i)}_2^2}{2(1-t_{i+1})^2} \\
& \quad  +  \frac{\eta_i^2}{2(1-t_i)^2 (1-t_{i+1})^2} \Tr(\Cov_{1\mid t_i}(x_i) ) + \underbrace{\frac{\eta_i (x_i - t_i \mu_{1\mid t_i}(x_i))^\top \varepsilon_i^{\rm v}(x_i)}{(1-t_i)(1-t_{i+1})}}_{\Delta(\varepsilon_i^{\rm v}(x_i))}\\
& \quad - \frac{\eta_i^2 (1-t_i)}{(1-t_{i+1})} \Tr(\varepsilon_i^J(x_i))\\
& 
\le \frac{4 \eta_i^2 t_i^2}{(1-t_{i+1})^2 (1-t_i)^4} \Norm{\Cov_{1 \mid t_i}(x_i)}_F^2 + \frac{4 \eta_i^2 (1-t_i)^2}{(1-t_{i+1})^2} \Norm{\varepsilon_i^J(x_i)}_F^2 + \frac{\eta_i^2 \Norm{\varepsilon_i^{\rm v}(x_i)}_2^2}{2(1-t_{i+1})^2} \\
& \quad  +  \frac{\eta_i^2}{2(1-t_i)^2 (1-t_{i+1})^2} \Tr(\Cov_{1\mid t_i}(x_i) ) + \underbrace{\frac{\eta_i (1-t_i)}{(1-t_{i+1})}\frac{(x_i - t_i \mu_{1\mid t_i}(x_i))^\top \varepsilon_i^{\rm v}(x_i)}{(1-t_i)^2}}_{\Delta(\varepsilon_i^{\rm v}(x_i))}\\
& \quad - \frac{\eta_i^2 (1-t_i)}{(1-t_{i+1})} \Tr(\varepsilon_i^J(x_i))
\end{aligned}
\end{equation}

\paragraph{Step 5: Controlling the error arising from $\Delta(\varepsilon_i^{\rm v}(x_i))$.}
We start with the following integral quantity for any set $\cA$:
\begin{equation}
\label{eq: Delta(e_i) simplified}
\begin{aligned}
    & \int_{x_i \in \cA} \frac{1}{(1-t_i)^2}(x_i - t_i \mu_{1 \mid t_i}(x_i))^\top \varepsilon^{\rm v}_i(x_i)p_{X_i}(x_i) \; \dx_i\\
    & = \int_{x_i, x_N \in \cA \times \cX_{\rm data}} \frac{1}{(1-t_i)^2}(x_i - t_i \mu_{1 \mid t_i}(x_i))^\top \varepsilon^{\rm v}_i(x_i)p_{X_i, X_N}(x_i, x_N) \; \dx_i\; \dx_N\\
    &  = \int_{x_i, x_N \in \cA \times \cX_{\rm data}} \frac{1}{(1-t_i)^2}(x_i - t_i \mu_{1 \mid t_i}(x_i))^\top \varepsilon^{\rm v}_i(x_i)
    p_{X_i \mid X_N = x_N}(x_i) p_{X_N}(x_N) \; \dx_i\; \dx_N
    \\
    & = \int_{x_i, x_N \in \bbR^d \times \cX_{\rm data}} \frac{1}{(1-t_i)^2}\left\{(x_i - t_i x_N)^\top \varepsilon^{\rm v}_i(x_i) p_{X_i \mid X_N = x_N}(x_i)\; \dx_i \right\} p_{X_N}(x_N) \; \dx_N\\
    & \le \int_{x_i, x_N \in \bbR^d \times \cX_{\rm data}} \frac{1}{(1-t_i)^2}\left\{\left\vert(x_i - t_i x_N)^\top \varepsilon^{\rm v}_i(x_i)\right\vert p_{X_i \mid X_N = x_N}(x_i)\; \dx_i  \right\} p_{X_N}(x_N) \; \dx_N \\
    & \le \left(\int_{x_i, x_N \in \bbR^d \times \cX_{\rm data}} \left\{\frac{1}{(1-t_i)^4}\left\vert(x_i - t_i x_N)^\top \varepsilon^{\rm v}_i(x_i)\right\vert ^2 p_{X_i \mid X_N = x_N}(x_i)\; \dx_i  \right\} p_{X_N}(x_N) \; \dx_N\right)^{1/2}\\
    &  = \left(\int_{x_i, x_N \in \bbR^d \times \cX_{\rm data}} \left\{\frac{1}{(1-t_i)^4}\left \langle (x_i - t_i x_N) (x_i - t_i x_N)^\top , \varepsilon^{\rm v}_i(x_i) \varepsilon^{\rm v}_i(x_i)^\top \right\rangle  p_{X_i \mid X_N = x_N}(x_i)\; \dx_i  \right\} p_{X_N}(x_N) \; \dx_N\right)^{1/2} 
\end{aligned}
\end{equation}
Now, recall that $X_i \mid X_N = x_N \sim N(t_i x_N , (1-t_i)^2 I)$. Using standard techniques from \citet[Section B.5]{liang2025low}, we can obtain bound on \eqref{eq: Delta(e_i) simplified}:
\[
\int_{x_i \in \cA} \frac{1}{(1-t_i)^2}(x_i - t_i \mu_{1 \mid t_i}(x_i))^\top \varepsilon^{\rm v}_i(x_i)p_{X_i}(x_i) \; \dx_i \le \frac{2 \evel_i}{(1-t_i)} + 
\evelJone_i + \evelJtwo_i + \evelH_i.
\]
Substituting this in \eqref{eq: bound on W}, we have 
\begin{align*}
& \int_{x_i \in \cA}
\left\{\Delta(\evel_i(x_i))- \frac{\eta_i^2 (1-t_i)}{(1-t_{i+1})} \Tr(\varepsilon_i^J(x_i))\right\} p_{X_i}(x_i) \\
& \le  \frac{2 \evel_i}{(1-t_i)} + 
\evelJone_i + \evelJtwo_i + \evelH_i +  \frac{\eta_i^2 (1-t_i)}{(1-t_{i+1})} \br{\int \Tr(\varepsilon_i^J(x_i))^2 p_{X_i}(x_i)\dx_i}^{1/2}\\
& \le \frac{2 \eta_i }{(1-t_{i+1})}( \evel_i + \evelJone_i + \evelJtwo_i + \evelH_i) .
\end{align*}
Here, we used the fact that $\frac{\eta_i^2 (1-t_i)}{(1 - t_{i+1})} \le \frac{\eta_i}{(1-t_{i+1})}$.

\paragraph{Step 6: Bounding the terms involving $\Cov_{1 \mid t_i}(x_i)$.} 
Here we always assume that $k\ge \log d$.\footnote{If $k <\log d$, then we can redefine $k$ as $\log d$. This does not change the results in Theorem \ref{thm: RF - convergence of Eulerian scheme} and Theorem \ref{thm: stoc-RF sampler non-Eulerian}.
}
We begin defining some quantities first. 
\begin{itemize}
    \item Let $\{x_i^\ast\}_{1 \le i \le N_{\epsilon_0}}$ be an $\epsilon_0$-net of $\mathcal{X}_{\mathrm{data}}$, with $N_{\epsilon_0}$ denoting its cardinality. Let $\{B_i\}_{1 \le i \le N_{\epsilon_0}}$ be a disjoint $\epsilon_0$-cover for $\mathcal{X}_{\mathrm{data}}$ such that $x_i^\ast \in B_i$ for each $i$. 
    
    \item Define the following two sets:
    \begin{equation}
        \mathcal{I} := \left\{ 1 \le i \le N_{\epsilon_0} : \mathbb{P}(X_0 \in B_i) \ge \exp(-C_1 k \log T) \right\}
        \label{eq:I}
    \end{equation}
    and
    \begin{equation}
        \mathcal{G} := \left\{ w \in \mathbb{R}^d : \|w\|_2 \le 2\sqrt{d} + \sqrt{C_1 k \log T}, \;
        \left| (x_i^\ast - x_j^\ast)^\top w \right|
        \le \sqrt{C_1 k \log T}\, \|x_i^\ast - x_j^\ast\|_2,
        \; \forall\, 1 \le i,j \le N_{\varepsilon_0} \right\}.
        \label{eq:G}
    \end{equation}
    for some sufficiently large universal constant $C_1 > 0$. As it turns out, $\bigcup_{i \in \mathcal{I}} B_i$ and $\mathcal{G}$ form certain high-probability sets related to the random vector $X_0 \sim p_{\mathrm{data}}$ and the standard Gaussian random vector in $\mathbb{R}^d$, respectively.
    
    \item Recall \eqref{eq: interp in modified index}, we can express
    \begin{equation}
        X_{t_i} \equiv X_{i} = t _i  X_N + (1 - t_i) X_0,
        \label{eq:xt}
    \end{equation}
    for some random vector $X_0 \sim \mathcal{N}(0, I_d)$. 
\end{itemize}
Next, we define the set  $\cT_t := \left\{t v_1 + (1-t) \omega: v_1 \in \cup_{i \in \cI}\cB_i, \omega \in \cG \right\}$. 

Borrowing standard techniques and results from the proofs of \citet[Lemma 1]{liang2025low} and \citet[Corollary 1]{liang2025low}, we have the following results

\begin{equation}
    \label{eq: prob of exclusion}
    \pr\br{X_{i} \notin \cT_{t_i} } \le \exp\br{- C_2 k \log N}, \quad \text{for $C_2 \gg C_{\rm cover}$, and $0 \le t_i<1$.}
\end{equation}
\begin{equation}
    \label{eq: trace_Cov bound}
    \Tr(\Cov_{1 \mid t_i}(x_i) )\le \frac{C_3 (1 - t_i)^2}{t_i^2} k \log N, \quad \text{for all $t_i>0$ and $x_i \in \cT_{t_i}$.}
\end{equation}

\paragraph{Step 7: Bound on the term involving squared Frobenius norm using linearization:}
One can use \eqref{eq:  trace_Cov bound} directly to bound the term $\bbE_{X_i}\left(\Norm{\Cov_{1\mid t_i}(X_i)}_F^2\right)$ in \eqref{eq: bound on W}. However, this will lead to a quadratic dependence on $k$. Therefore, we present the following lemma that will yield a linear dependence on $k$. The proof is deferred to Appendix \ref{app: proof of Cov Frobenius bound}.
\begin{proposition}
\label{prop: Main cov bound via trace}
    For all $i\ge 1$, the following holds:
    \begin{equation*}
    \frac{ \eta_{i}t_i}{(1-t_{i+1})^2 (1-t_i)^2}  \bbE(\Norm{\Cov_{1\mid t_i}(X_i)}_F^2)  \le \frac{3}{4} \left(\bbE [\Tr(\Cov_{1\mid t_i}(X_i))] - \bbE[\Tr(\Cov_{1\mid t_{i+1}}(X_{i+1}))]\right) + \frac{6 h \eta_{i} }{(1-t_{i+1})^2 N^{10}}.
\end{equation*}
    
\end{proposition}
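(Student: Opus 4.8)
The plan is to reduce the claim to the matrix identity of \Cref{cor: my ddt cov version} and an elementary comparison driven by the explicit schedule \eqref{eq: gemoetric time discretization}.

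\textbf{Step 1 (scalarize and integrate the covariance ODE).} Write $\bSigma_t=\Cov(X_1\mid X_t)$, so that $\Cov_{1\mid t}(X_t)=\bSigma_t$ as random matrices, $\bbE[\Tr\Cov_{1\mid t}(X_t)]=\Tr\bbE[\bSigma_t]$, and, since $\bSigma_t$ is symmetric positive semidefinite, $\bbE\|\Cov_{1\mid t}(X_t)\|_F^2=\Tr\bbE[\bSigma_t^2]$. Taking the trace of $\frac{d}{dt}\bbE[\bSigma_t]=-\frac{2t}{(1-t)^3}\bbE[\bSigma_t^2]$ from \Cref{cor: my ddt cov version} gives the scalar identity $\frac{d}{dt}\bbE[\Tr\Cov_{1\mid t}(X_t)]=-\frac{2t}{(1-t)^3}\bbE\|\Cov_{1\mid t}(X_t)\|_F^2$, which upon integration over $[t_i,t_{i+1}]$ yields
\[
\bbE[\Tr\Cov_{1\mid t_i}(X_{t_i})]-\bbE[\Tr\Cov_{1\mid t_{i+1}}(X_{t_{i+1}})]=\int_{t_i}^{t_{i+1}}\frac{2t}{(1-t)^3}\,\bbE\|\Cov_{1\mid t}(X_t)\|_F^2\,dt .
\]
It then suffices to show that the integral on the right is at least $\tfrac43\cdot\frac{\eta_i t_i}{(1-t_{i+1})^2(1-t_i)^2}\bbE\|\Cov_{1\mid t_i}(X_{t_i})\|_F^2$ minus an $O\!\big(h\eta_i(1-t_{i+1})^{-2}N^{-10}\big)$ correction.

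\textbf{Step 2 (lower bound the integral).} Set $g(t)=\bbE\|\Cov_{1\mid t}(X_t)\|_F^2$ and split the expectation over $\{X_t\in\cT_t\}$ and its complement. On the complement, $\|\Cov_{1\mid t}(x)\|_F^2\le(\Tr\Cov_{1\mid t}(x))^2\le(dR^2)^2$ by \Cref{assumption: bounded support}, while $\pr(X_t\notin\cT_t)\le e^{-C_2 k\log N}$ with $C_2\gg C_{\rm cover}$ by \eqref{eq: prob of exclusion}; multiplying by the coefficient $\frac{\eta_i t_i}{(1-t_{i+1})^2(1-t_i)^2}$ shows this piece is absorbed into the $\frac{6h\eta_i}{(1-t_{i+1})^2 N^{10}}$ term. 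On $\{X_t\in\cT_t\}$, \eqref{eq: trace_Cov bound} forces $\Cov_{1\mid t}(x)$ to scale like $\frac{(1-t)^2}{t^2}$ and $\|\Cov_{1\mid t}(x)\|_F^2$ like $\frac{(1-t)^4}{t^4}$, so the only $t$-dependence of $g$ that is not slowly varying is this explicit prefactor; combined with \Cref{lemma: properties of time-steps} — which gives $\eta_i/(1-t_i)\le h$ and $\eta_i/t_i\le h$, hence both $t$ and $1-t$ change by a factor $1+O(h)$ over $[t_i,t_{i+1}]$ — one obtains a slow-variation bound of the form $g(t)\ge(1-O(h))\tfrac{(1-t)^4 t_i^4}{t^4(1-t_i)^4}\,g(t_i)$ on the step, again up to an $N^{-10}$-type rare-event term. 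Substituting this in, performing the elementary integral (the antiderivative of $\frac{2t}{(1-t)^3}\cdot\frac{(1-t)^4}{t^4}=\frac{2(1-t)}{t^3}$ is $\frac{2t-1}{t^2}$), and expanding via the recursions $t_{i+1}=(1+h)t_i$ on the first half and $1-t_i=(1+h)(1-t_{i+1})$ on the second half, the resulting lower bound matches $\tfrac43\cdot\frac{\eta_i t_i}{(1-t_{i+1})^2(1-t_i)^2}g(t_i)$ up to multiplicative $1+O(h)$ and additive $O(h\eta_i(1-t_{i+1})^{-2}N^{-10})$ errors — both absorbed by the $\tfrac34$ slack and the stated correction.

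\textbf{Main obstacle.} The delicate point is the slow-variation estimate for $g(t)=\bbE\|\Cov_{1\mid t}(X_t)\|_F^2$ across a single step: \Cref{cor: my ddt cov version} only controls the first moment $\bbE[\bSigma_t]$, whereas $g$ is a second moment, so its regularity in $t$ is not visible from that identity alone. The route above sidesteps this by restricting to $\cT_t$, where the covariance estimates borrowed from \citet[Lemma 1, Corollary 1]{liang2025low} pin the scale of $\Cov_{1\mid t}(x)$ to $\frac{(1-t)^2}{t^2}$ times an intrinsic-dimension-$k$-sized, slowly varying factor, making the only nontrivial $t$-variation explicit; the purely probabilistic alternative is to differentiate the identity of \Cref{lemma: eldan ddt cov lemma} once more — the It\^o drift of $\bA_s^2$ is $-2\bA_s^3$ plus a positive-semidefinite quadratic-variation term, equivalently convexity in the signal-to-noise ratio of the stochastic-localization MMSE $s\mapsto\Tr\bbE[\bA_s]$ — which gives monotonicity of $s\mapsto\bbE\|\bA_s\|_F^2$. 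Making either argument quantitative with constants that fit inside the $\tfrac34$ slack uniformly over the grid, in particular in the regime $t\approx1$ where $(1-t_{i+1})^{-2}$ is large, is precisely what forces the $N^{-10}$ correction term.
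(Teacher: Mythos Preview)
Your Step~1 is correct and is exactly how the paper begins: integrate the traced version of \Cref{cor: my ddt cov version} to reduce the claim to a lower bound on $\int_{t_i}^{t_{i+1}}\frac{2t}{(1-t)^3}g(t)\,dt$ with $g(t)=\bbE\|\bSigma_t\|_F^2$.

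The gap is in Step~2. Your slow-variation estimate $g(t)\ge(1-O(h))\tfrac{(1-t)^4 t_i^4}{t^4(1-t_i)^4}g(t_i)$ is not supported by \eqref{eq: trace_Cov bound}. That display is only a one-sided \emph{upper} bound on $\Tr\Cov_{1\mid t}(x)$; it does not pin the scale of $\Cov_{1\mid t}(x)$ to $\frac{(1-t)^2}{t^2}$, and even a two-sided scale estimate would not by itself control the variation of $g$ across a step. Writing $g(t)=c(t)\frac{(1-t)^4}{t^4}$, you need $c(t)/c(t_i)\ge 1-O(h)$, and nothing in \eqref{eq: trace_Cov bound} or \Cref{lemma: properties of time-steps} bounds the oscillation of $c$. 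So the main line of Step~2 does not go through.

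What does work is precisely the ``purely probabilistic alternative'' you name in your obstacle paragraph --- and it is what the paper does. Apply It\^o's formula to $\tr(\bA_s^2)$ using the SDE $d\bA_s=-\bA_s^2\,ds+\cM_s^{(3)}dB_s$; the drift is $-2\tr(\bA_s^3)+\langle\cM_s^{(3)},\cM_s^{(3)}\rangle$, and the second (quadratic-variation) term is nonnegative. Dropping it and time-changing gives
\[
\frac{d}{dt}\,g(t)\ \ge\ -\frac{4t}{(1-t)^3}\,\bbE\big[\tr(\bSigma_t^3)\big].
\]
Now bound $\tr(\bSigma_t^3)\le \Tr(\bSigma_t)\,\|\bSigma_t\|_F^2$, split over $\cT_t$ and its complement, and use \eqref{eq: trace_Cov bound} on $\cT_t$ to get the differential inequality
\[
\frac{d}{dt}\,g(t)\ \ge\ -\frac{4C_3 k\log N}{(1-t)\,t}\,g(t)\;-\;\frac{4t}{(1-t)^3}\cdot\frac{1}{N^{10}}.
\]
On a single step the integrating-factor exponent is at most $\frac{4C_3 k\log N\,\eta_i}{(1-t_i)t_{i-1}}\le 8C_3 k h\log N\ll 1$ by \Cref{lemma: properties of time-steps} and \eqref{eq: value of h}, so Gr\"onwall yields $g(t_i)\le 3\,g(t)+\frac{24h}{(1-t_{i+1})^2 N^{10}}$ for $t\in[t_i,t_{i+1}]$. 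Plugging $g(t)\ge\tfrac13 g(t_i)-\tfrac{8h}{(1-t_{i+1})^2 N^{10}}$ into the integral from Step~1 and lower-bounding $\int_{t_i}^{t_{i+1}}\frac{2t}{(1-t)^3}\,dt\ge \frac{2\eta_i(t_{i+1}+t_i)}{(1-t_{i+1})^2(1-t_i)^2}\ge \frac{4\eta_i t_i}{(1-t_{i+1})^2(1-t_i)^2}$ gives exactly the stated inequality with constant $\tfrac34$. The point you were missing is that the trace upper bound \eqref{eq: trace_Cov bound} enters not to fix the scale of $g$, but to close a Gr\"onwall loop on $g$ via $\tr(\bSigma_t^3)\le \Tr(\bSigma_t)\|\bSigma_t\|_F^2$.
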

The 

Using Proposition \ref{prop: Main cov bound via trace} and \eqref{eq: trace_Cov bound}, we have
\begin{align*}
& \mathbb{P}_{\Phi_{i}\left(X_{i}\right)}(\mathcal{A})-\mathbb{P}_{X_{i+1}}(\mathcal{A}) \\
 & \le \frac{3 \eta_i t_i }{ (1-t_i)^2} \left(\frac{(1-t_i)^2}{t_i^2} - \frac{(1-t_{i+1})^2}{t_{i+1}^2}\right) k \log N + \frac{\eta_i^2}{2(1-t_{i+1})^2 t_i^2} k \log N + \frac{4 \eta_i^2 (1-t_i)^2}{(1-t_{i+1})^2} (\evelJone_i)^{2} + \frac{\eta_i^2}{2(1-t_i)^2}(\evel_i)^2\\
  &\quad  + \frac{2 \eta_i }{(1-t_{i+1})}( \evel_i + \evelJone_i + \evelJtwo_i + \evelH_i) + \pr(X_i \notin \cE_i) + \frac{24 h \eta_i^2 t_i}{(1-t_{i+1})^2(1-t_i)^2 N^{10}}.
\end{align*}

First, Lemma \ref{lemma: simple identity 1} yields that 
\begin{equation}
\label{eq: simple identity 1}
\frac{(1-t_i)^2}{t_i^2} - \frac{(1-t_{i+1})^2}{t_{i+1}^2} \le \frac{2 \eta_i}{t_i^3}.
\end{equation}
Using the above inequality along with \eqref{eq: simple identity 1} yields 
\begin{align*}
  & \mathbb{P}_{\Phi_{i}\left(X_{i}\right)}(\mathcal{A})-\mathbb{P}_{X_{i+1}}(\mathcal{A}) \\
&  \le \frac{6 C_3 \eta_i^2}{(1-t_{i+1})^2 t_i^2} k \log N + \frac{C_3 \eta_i^2}{2(1-t_{i+1})^2 t_i^2} k \log N + \frac{C \eta_i^2 (1-t_i)^2}{(1-t_{i+1})^2} (\evelJone_i)^{2} + \frac{\eta_i^2}{2(1-t_i)^2}(\evel_i)^2\\
  &\quad  + \frac{2 \eta_i }{(1-t_{i+1})}( \evel_i + \evelJone_i + \evelJtwo_i + \evelH_i) + \frac{24 h \eta_i^2 t_i}{(1-t_{i+1})^2(1-t_i)^2 N^{10}} .
\end{align*}
\paragraph{Recalling properties of  time scheduling \eqref{eq: gemoetric time discretization} (see Lemma \ref{lemma: properties of time-steps}):}
\begin{enumerate}  
    \item For all $i>1$, we have $\frac{\eta_i}{1-t_i} \le h$ and $\frac{\eta_i}{1-t_{i+1}} \le h$.
    
     \item For all $i>1$, we have $\frac{ \eta_i^2 (1-t_i)^2 }{(1-t_{i+1})^2} = \frac{(1-t_i)^2}{(\frac{1-t_i}{\eta_i} - 1)^2} \le  h^2 $, as $\frac{\eta_i}{(1-t_i)} \le h$ and $(1-t_i) \le 1$.
     
     \item $\sum_{i : t_i >0}\frac{ \eta_i^2 }{(1-t_{i+1})^2 t_i^2} = \sum_{i: 0 < t_{i} < 1/2 }\frac{ \eta_i^2 }{(1-t_{i+1})^2 t_i^2} + \sum_{i: t_{i}\ge 1/2 }\frac{ \eta_i^2 }{(1-t_{i+1})^2 t_i^2} \le 4 h^2 N$.
\end{enumerate}

Putting all of these together and solving \eqref{eq: TV recursion}, we get
\begin{equation*}
    \begin{aligned}
        & \TV(p_{X_{N-1}}, p_{Y_{N-1}})\\
        & \le C_1^\prime h^2 N k \log N + + C_2^\prime h^2  N \meanevelJone^2 + C_3^\prime h^2 N\meanevel^2 \\
        & \quad + C_4^\prime h \times \sum_{i = 1}^{N-1} (\evel_i + \evelJone_i + \evelJtwo_i + \evelH_i) + C_5^\prime \frac{h^2}{\delta^2 N^{10}} + \TV(p_{X_1}, p_{Y_1})\\
        & \le C_1^\prime h^2 N k \log N  + C_2^\prime h^2  N \meanevelJone^2 + C_3^\prime h^2 N\meanevel^2 \\
        & \quad + C_4^\prime h N (\meanevel + \meanevelJone + \meanevelJtwo + \meanevelH) + C_5^\prime \frac{h^2}{\delta^2 N^{10}}+ \TV(p_{X_1}, p_{Y_1}),
    \end{aligned}
\end{equation*}
for sufficiently large $N$.
\paragraph{Handling $\TV(p_{X_1}, p_{Y_1})$ separately:}
Recall that \eqref{eq: bound on W} is also valid for $t_{i+1} = \delta, t_i = 0$. Therefore, we have 
\begin{align*}
& - W(x_0) \\
& \le \frac{4 \delta^2 }{(1-\delta)^2} \Norm{\varepsilon_0^J(x_0)}_F^2 + \frac{\delta^2 \Norm{\varepsilon_0^{\rm v}(x_0)}_2^2}{2(1-\delta)^2}  +  \frac{\delta^2}{2(1-\delta)^2} \tr(\Cov_{1} ) + \underbrace{\frac{\delta}{1 - \delta} x_0 ^\top \varepsilon_0^{\rm v}(x_0)}_{\Delta(\varepsilon_0^{\rm v}(x_0))}\\
& \quad - \frac{\delta^2 }{(1-\delta)} \tr(\varepsilon_0^J(x_0)). 
\end{align*}
Therefore, applying similar argument, we get 

\begin{align*}
    \TV(p_{X_1}, p_{Y_1}) \le A_1 \delta^2 \tr(\Cov_1) + A_2 \delta^2 (\evel_0)^2 + A_3 \delta^2 (\evelJone_0)^2 + A_4 (\evel_0 + \evelJone_0 + \evelJtwo_0 + \evelH_0) + \underbrace{\TV(p_{X_0},p_{ Y_0})}_{=0}.
\end{align*}

Now we recall that $\delta < 1/N$.
Therefore, combining all of the above inequalities and \eqref{eq: value of h} gives 
\begin{align*}
    & \TV(p_{X_{N-1}}, p_{Y_{N-1}})\\
    & \le C_1''  \frac{k \log^3 (1/\delta)  }{N} + C_2''  \frac{\log^2 (1/\delta)}{N}   \meanevelJone^2 + C_3'' \frac{\log^2 (1/\delta)}{N} \meanevel^2 \\
    & \quad + C_4'' \log (1/\delta) (\meanevel + \meanevelJone + \meanevelJtwo + \meanevelH) + C_5'' \delta^{-2} N^{-10}\\
     &\quad +   A_1 \delta^2 {\Tr(\Cov_1)} + A_2 \delta^2 (\evel_0)^2 + A_3 \delta^2 (\evelJone_0)^2 + A_4 (\evel_0 + \evelJone_0 + \evelJtwo_0 + \evelH_0) , 
\end{align*}
where $\Cov_1:= \Cov_{X_1 \sim p_1}(X_1)$.
Now, note that one can estimate $v_0(x) = \mu_{p_{X_1}} - x$ by $\widehat v_0(x):= \widehat \mu_1 - x$, where $\widehat \mu_1 = \sum_{i=1}^n x_{i,1}/n$ and $n$ is the (training) sample size. In this case, we have 
\begin{itemize}
    \item $\evelJone_0 = \evelJtwo_0 = \evelH = 0$.
    \item $(\evel_0)^2 = \bbE \Norm{\widehat \mu_1 - \mu_{p_{X_1}}}_2^2 = O\left( \frac{\tr(\Cov_1)}{n }\right)$.
\end{itemize}
This finishes the proof of Theorem \ref{thm: RF - convergence of Eulerian scheme}.


\subsection{Proof of convergence of STOC-RF (Theorem \ref{thm: stoc-RF sampler non-Eulerian})}
\label{app: Stoc-RF convergence}
The proof of this result essentially follows by reduction and leverages key results of \cite{liang2025low}. We start by recalling some key facts:
 $t(\tau) = \frac{\sqrt{\omega_\tau}}{\sqrt{1 - \omega_\tau} + \sqrt{\omega_\tau}}$, and $\frac{Y^\prime_\tau}{\sqrt{\omega_\tau}} \overset{d}{=} \frac{\tilde Z_{t(\tau)}}{t(\tau)}$. With this time transformation, we have 
 \begin{equation}
 \label{eq: alpha_t expression stoc-RF}
 \alpha_\tau = \frac{\omega_\tau}{\omega_{\tau -1}} = \frac{(t_i/ \sigma_{t_i})^2}{(t_{i+1}/\sigma_{t_{i+1}})^2} = \frac{R_i^2}{R_{i+1}^2}, \quad \text{ where $t_i = t(\tau)$}.
 \end{equation}
 We also have $s_{Y^\prime_\tau }(y) = \sigma_t s_{\tilde Z_t}(\tilde z)$, where $\tilde z = \sigma_t y$.
 Next, we recall the DDIM sampler considered in \citet{liang2025low}:  

\begin{equation}
    \label{eq: DDPM sampler liang2025 2}
    \begin{aligned}
    & \hat Y^\prime_{\tau -1} = \frac{1}{\sqrt{\alpha_\tau}} \bc{ \hat Y^\prime_\tau + \delta_\tau \widehat s_{Y^\prime_\tau}(\hat Y^\prime_\tau) + \nu_\tau \xi_\tau}, \hat Y^\prime_N \sim N(0, I_d)\\
    & \delta_\tau = 1 - \alpha_\tau,  \nu_\tau = \sqrt{\frac{(\alpha_\tau - \omega_\tau)(1 - \alpha_t)}{1 - \omega_\tau}},\\
    & \tau   = N , N-1, \ldots, 1, \quad \{\xi_\tau\}_{\tau \ge 1} \overset{i.i.d.}{\sim} N(0,I_d),
    \end{aligned}
\end{equation}

Now, if we substitute all the quantities of the above sampler by their corresponding RF versions, i.e.,  we get 
$\hat Y_\tau \rightarrow  Y_{t}/\sigma_t, \widehat s_{Y^\prime_\tau }(y) \rightarrow \sigma_t \widehat s_{t_i}(Y_{t_i})$, and $\alpha_\tau \rightarrow \frac{R_i^2}{R_{i+1}^2} $, then it leads to the following sampler:
\begin{equation}
     \begin{aligned}
      &\frac{Y_{t_{i+1}}}{\sigma_{t_{i+1}}} = \frac{R_{i+1}}{R_i} \bc{ \frac{Y_{t_i}}{\sigma_{t_i}} + \eta_i \sigma_{t_i} \widehat s_{t_i}(Y_{t_i}) + \sqrt{\psi_i} W_{t_i}},\\
      &  \frac{Y_{t_0}}{\sigma_{t_0}} \sim N(0,  I_d), \quad \{W_{t_i}\}_{i \ge 0}\overset{i.i.d.}{\sim} N(0,I_d),\\
      & i = 0, 1, \ldots, N-1,
     \end{aligned}
 \end{equation}
 where $ \eta_i = 1- \frac{R_i^2}{R_{i+1}^2},   \psi_i = \frac{R_i^2}{R_{i+1}^2}.\frac{ 1 - R_{i+1}^2}{1  - R_i^2}.\br{1 - \frac{R_i^2}{R_{i+1}^2}}$, and $\widehat s_{t_i}(x) = \frac{t_i \widehat v_{t_i}(x) - x}{1 - t_i}$. The above is exactly the sampler described in \eqref{eq: stoc-RF sampler non-Eulerian}. In other words, we have equivalence between two above samplers. Therefore, we can use the convergence result of \eqref{eq: DDPM sampler liang2025 2}. 

 \paragraph{Defining few auxiliary processes:}
 We define the auxiliary processes in exact way defined in \citet[Section C.1]{liang2025low}.
 \begin{itemize}
    \item First, we define $\{Y^\star_\tau\}_{\tau = N}^1$ by 
    \[
    Y^\star_N = Y^\prime_N \sim N(0, I_d), \quad Y^\star_{\tau - 1} = \frac{1}{\sqrt{\alpha_\tau}} \br{Y^\star_\tau + \delta_\tau s_{Y^\prime_\tau}(Y^\star_\tau) + \nu_\tau \xi_\tau}.
    \]
Note that, the above update rule is just the analog of $\{\hat Y_\tau\}_\tau$ process with the true score.
 
  \item We construct an auxiliary process $\bar{Y}_{\tau}$ that follows the same transition dynamics as $Y_{\tau}^{\star}$:
\end{itemize}

\begin{equation}
\label{eq: aux process bar_Y}
\bar{Y}_{\tau - 1}^{-}\left|\bar{Y}_{\tau} \sim Y_{\tau - 1}^{\star}\right| Y_{\tau}^{\star}, \quad \bar{Y}_{\tau - 1} \left\lvert\,\left\{\bar{Y}_{\tau - 1}^{-}=y_{\tau - 1}^{-}\right\}= \begin{cases}y_{\tau - 1}^{-}, & \text {with prob. } \frac{p_{Y^\prime_{\tau - 1}}\left(y_{\tau - 1}^{-}\right)}{p_{\bar{Y}_{\tau - 1}^{-}}\left(y_{\tau - 1}^{-}\right)} \wedge 1  \\ \infty, & \text { otherwise }\end{cases}\right.
\end{equation}

for any $y_{\tau - 1}^{-} \neq \infty$, where we recall that $a \wedge b:=\min \{a, b\}$. It is straightforward to show that

\begin{equation}
\label{eq: pdf of aux process bar_Y}
p_{\bar{Y}_{\tau}}\left(y_{\tau}\right)=\int_{\mathbb{R}^{d}}\left(p_{Y^\prime_{\tau}}\left(y_{\tau}^{-}\right) \wedge p_{\bar{Y}_{\tau}^{-}}\left(y_{\tau}^{-}\right)\right) \delta\left(y_{\tau}-y_{\tau}^{-}\right) \mathrm{d} y_{\tau}^{-}=p_{Y^\prime_{\tau}}\left(y_{\tau}\right) \wedge p_{\bar{Y}_{\tau}^{-}}\left(y_{\tau}\right) 
\end{equation}

for any $y_{\tau} \neq \infty$, where $\delta(\cdot)$ denotes the Dirac measure.

\begin{itemize}
  \item To account for the score estimation error, we introduce another auxiliary process $\widetilde{Y}_{\tau}$ based on the dynamics of $Y_{\tau}$ :
\end{itemize}

\begin{equation}
\label{eq: aux process hat_Y}
\widetilde{Y}_{\tau - 1}^{-}\left|\widetilde{Y}_{\tau} \sim \hat Y^\prime_{\tau - 1}\right| \hat Y^\prime_{\tau}, \quad \widetilde{Y}_{\tau - 1} \left\lvert\,\left\{\widetilde{Y}_{\tau - 1}^{-}=y_{\tau - 1}^{-}\right\}= \begin{cases}y_{\tau - 1}^{-}, & \text {with prob. } \frac{p_{Y^\prime_{\tau - 1}}\left(y_{\tau - 1}^{-}\right)}{p_{\bar{Y}_{\tau - 1}^{-}}\left(y_{\tau - 1}^{-}\right)} \wedge 1 \\ \infty, & \text { otherwise. }\end{cases}\right.
\end{equation}


 Next, we observe that for $t_i = t(\tau)$ 
 \begin{equation}
     \label{eq: score error equivalence}
      (\varepsilon^s_\tau)^2 := \bbE \Norm{\widehat s_{Y^\prime_{\tau}}(Y^\prime_{\tau}) - s_{Y^\prime_{\tau}}(Y^\prime_{\tau})}_2^2  = \sigma_{t_i}^2 \bbE \Norm{ \widehat s_{t_i}(X_{t_i}) - s_{t_i}(X_{t_i})}_2^2  \le (\escore_i)^2.
 \end{equation}
 Now, we can exactly replicate the analysis in \citet[Section C.2]{liang2025low} till Equation (107) in Step 6, which yields the following for a universal constant $L>0$:

 \[
 \KL\br{p_{\bar Y_1} \| p_{\widetilde{Y}_1} } \le \sum_{\tau = 2}^N L(1 - \omega_\tau) (\varepsilon_\tau^s)^2 \lesssim \frac{\log N}{N}\sum_{t_i = t(2)}^{t(N)} (\escore_i)^2 \lesssim  \meanescore^2 \log N.
 \]
 After this, Step 7 of the proof follows verbatim to yield the final bound in Theorem \ref{thm: stoc-RF sampler non-Eulerian}.

\subsection{Description of the new RF sampler in Remark \ref{remark: new RF sampler}}
\label{app: new RF sampler}
The DDIM sampler considered in \citet{liang2025low} takes the form

\begin{equation}
    \label{eq: DDIM sampler liang2025 2}
    \begin{aligned}
    & \hat Y^\prime_{\tau -1} = \frac{1}{\sqrt{\alpha_\tau}} \bc{ \hat Y^\prime_\tau + \delta_\tau \widehat s_{Y^\prime_\tau}(\hat Y^\prime_\tau)}, \; \hat Y^\prime_N \sim N(0, I_d)\\
    & \delta_\tau = \frac{1 - \alpha_\tau}{1 + \sqrt{\frac{\alpha_\tau - \omega_\tau}{1 - \omega_\tau}}},\\
    & \tau   = N , N-1, \ldots, 1.
    \end{aligned}
\end{equation}
Using the same transformations detailed in Section \ref{app: Stoc-RF convergence} and plugging them in \eqref{eq: DDIM sampler liang2025 2}, we arrive at the RF sampler

\begin{equation}
     \label{eq: DDIM sampler non-Eulerian}
     \begin{aligned}
      &\frac{Y_{t_{i+1}}}{\sigma_{t_{i+1}}} = \frac{R_{i+1}}{R_i} \bc{ \frac{Y_{t_i}}{\sigma_{t_i}} + \eta_i \sigma_{t_i} \widehat s_{t_i}(Y_{t_i}) },\\
      &  \frac{Y_{t_0}}{\sigma_{t_0}} \sim N(0,  I_d),\\
      & i = 0, 1, \ldots, N-1,
     \end{aligned}
 \end{equation}
where $\eta_i = \frac{1 - \frac{R_i^2}{R_{i+1}^2}}{1 + \sqrt{\frac{R_i^2}{R_{i+1}^2}. \frac{1 - R_{i+1}^2}{1 -  R_{i}^2}}}, R_i = t_i/\sigma_{t_i}$.

\begin{figure}[h!]
    \centering
    \begin{subfigure}{0.45\textwidth}
        \includegraphics[width=\textwidth]{figures/N_200_geom_timegrid_histogram.pdf}
        \caption{Histogram and kernel density estimation (KDE) plot of time-grid \eqref{eq: gemoetric time discretization} showing U-shaped distribution.}
    \end{subfigure}\hfill
    \begin{subfigure}{0.45\textwidth}
       \includegraphics[width=\textwidth]{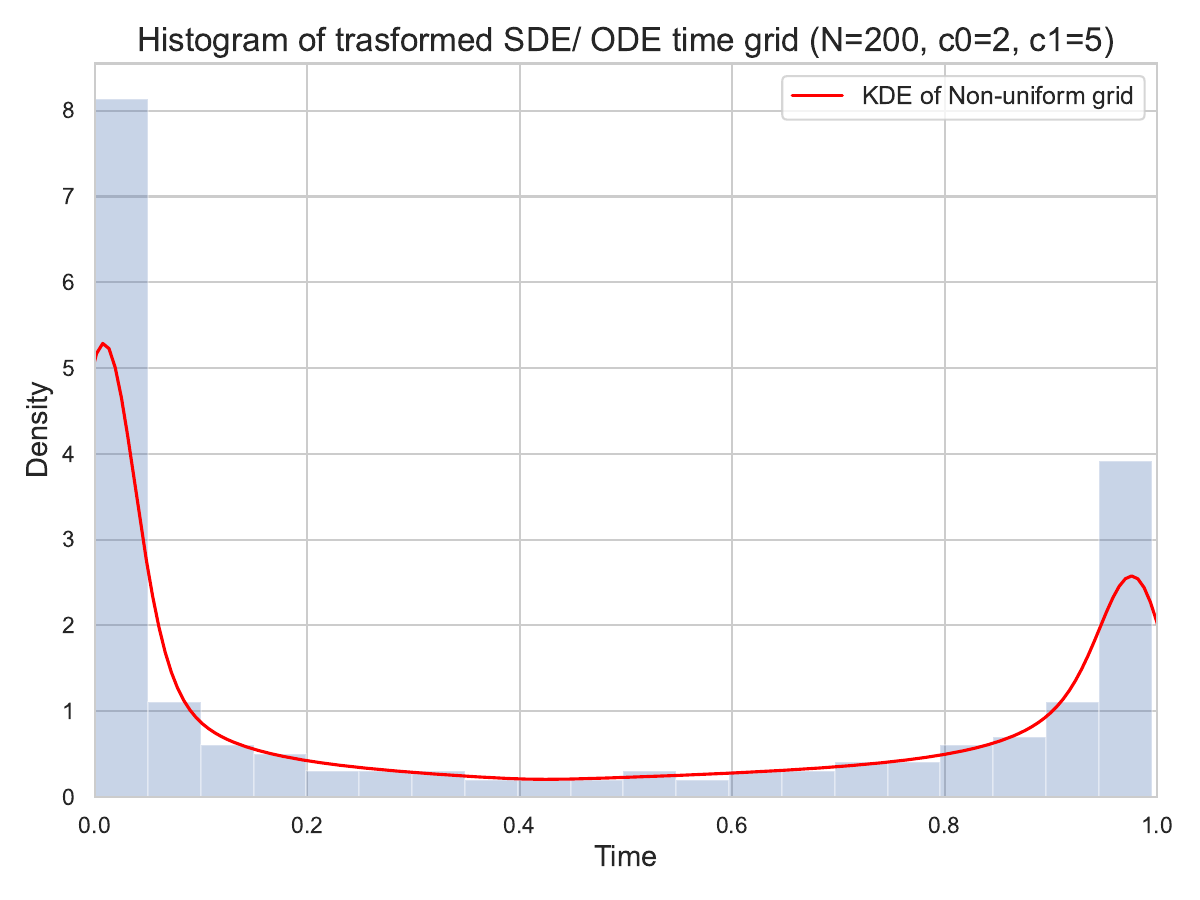} 
       \caption{Histogram and KDE plot of the transformed time-grid \eqref{eq: DDPM time scheduling} showing U-shaped distribution.}
    \end{subfigure}
    \caption{ Histogram of non-uniform time grids.}
    
    \label{fig: hist of sde timegrid}
\end{figure}

\paragraph{Algebraic simplification.}
Using $R_i=t_i/\sigma_{t_i}$, we have
\[
\frac{R_{i+1}}{R_i}
=
\frac{t_{i+1}}{t_i}
\frac{\sigma_{t_i}}{\sigma_{t_{i+1}}},
\qquad
\frac{R_i^2}{R_{i+1}^2}.\frac{1 - R_{i+1}^2}{1 - R_i^2} = \frac{t_i^2 (1-t_{i+1})^2}{t_{i+1}^2(1-t_i)^2},
\qquad
\eta_i
=
\frac{1-\frac{t_i^2\sigma_{t_{i+1}}^2}{t_{i+1}^2\sigma_{t_i}^2}}{1 + \frac{t_i (1-t_{i+1})}{t_{i+1}(1-t_i)}}.
\]
Multiplying \eqref{eq: DDIM sampler non-Eulerian} by $\sigma_{t_{i+1}}$ yields
\[
Y_{t_{i+1}}
=
\frac{t_{i+1}}{t_i}
\Bigl(
Y_{t_i}
+
\eta_i\,
\sigma_{t_i}^2\,
\widehat s_{t_i}(Y_{t_i})
\Bigr).
\]
A direct calculation using $\sigma_t^2=t^2+(1-t)^2$ shows that
\[
\eta_i\sigma_{t_i}^2
=
\frac{
(t_{i+1}-t_i)
\bigl(1-t_i\bigr)
}{
t_{i+1}
}.
\]
Therefore, the deterministic sampler can be written equivalently as
\begin{equation}
\label{eq:ode-sampler-simplified}
Y_{t_{i+1}}
=
\frac{t_{i+1}}{t_i}\,Y_{t_i}
+
\frac{(t_{i+1}-t_i)(1-t_i)}{t_i}
\widehat s_{t_i}(Y_{t_i}) = Y_{t_i} + \Delta_i \underbrace{\bc{\frac{Y_{t_i}}{t_i} + \frac{1 - t_i}{t_i} \widehat s_{t_i}(Y_{t_i})}}_{ = \widehat v_{t_i}(Y_{t_i})},
\end{equation}
where $ \Delta_i = t_{i+1} - t_i$. This expression matches exactly the form of the RF sampler in \eqref{eq: emp-ode-disc}. However, the time steps $
\Delta_i$ are different from those we propose in \eqref{eq: gemoetric time discretization}, which we used to obtain the RF convergence guarantees given in \Cref{thm: RF - convergence of Eulerian scheme}. Although the time-schedule is still U-shaped as shown in \Cref{fig: hist of sde timegrid} (b), however, it is not symmetric and, as a result, might take more discretization steps at either the beginning or the end of the sampling stage. Whether a symmetric or asymmetric schedule is better remains an open problem. Furthermore, we observe that empirically it appears to be relatively sensitive to the choice of the parameters $c_0$ and $c_1$ in \eqref{eq: DDPM time scheduling}.

\section{Proof of Lemmas related to Stochastic Localization (Section \ref{sec: SL})}
\label{app: SL lemmas}

\subsection{Proof of Lemma \ref{lemma: time change DDPM-SL} and Lemma \ref{lemma: SL and RF are equivalent}}
\label{app: SL = RF = DDPM}
We first show that the solution to the SDE \eqref{eq: forward process} takes the  form
\begin{equation}
\label{eq: forwards process DDS solution}
 \frac{Y'_{\tau}}{\sqrt{\omega_{\tau}}} =  X_1 + \widetilde B_{\frac{1 -\omega_{\tau}}{\omega_{\tau}}}^\prime,  
\end{equation}
where $\omega_{\tau} = \exp(-2 \int_0^\tau \beta(u) du)$ and $\bc{\widetilde B^\prime_\tau}_{\tau\ge 0}$ is a standard Brownian motion. First, it is easy to see that the solution of SDE \eqref{eq: forward process} is 
\begin{equation}
\label{eq: forward process solution}
\frac{Y'_{\tau}}{\sqrt{\omega_{\tau}}} = X_1 + \underbrace{\int_0^\tau \sqrt{2 \beta(t)} \exp \br{\int_0^t \beta(u) du} dB^\prime_t}_{:= M_\tau}.
\end{equation}
We note that $M_\tau$ is continuous local Martingale with $M_0 = 0$. Also, the quadratic variation of $M_\tau$ is 
$$
\innerprod{M}_\tau = \int_{0}^\tau 2 \beta(t) \exp \br{2 \int_{0}^t \beta(u) du} dt = \exp \br{2 \int_{0}^\tau \beta(u) du} -1 = \frac{1 - \omega_\tau}{\omega_\tau}.
$$
Then, by Dambis–Dubins–Schwarz theorem \citep[Theorem 1.6]{revuz2013continuous} there exists a Brownian motion $\bc{\widetilde B_u^\prime}_{u \ge 0}$ such that $\bc{M_\tau}_{\tau \ge 0} = \bc{\widetilde B_{\frac{1 -\omega_{\tau}}{\omega_{\tau}}}^\prime}_{\tau \ge 0}$. Therefore, we have \eqref{eq: forwards process DDS solution} by substituting $M_\tau$ in \eqref{eq: forward process solution}.

Now, we define two processes : $\hat U_s:= s Y^\prime_{\tau(s)}/\sqrt{\omega_{\tau(s)}} = s X_1 + s \widetilde B^\prime_{\frac{1 - \omega_{\tau(s)}}{\omega_{\tau(s)}}}$, and $\tilde U_s := s \tilde X_{t(s)}/t(s) = s X_1 + s \tilde B_{\frac{(1-t(s))^2}{t^2(s)}}$. 
Now, under the prescribed time change in Lemma \ref{lemma: time change DDPM-SL} and Lemma \ref{lemma: SL and RF are equivalent}, we have 
\[
s \widetilde B^\prime_{\frac{1 - \omega_{\tau(s)}}{\omega_{\tau(s)}}} = s \widetilde B^\prime_{1/s}, \quad \text{and} \quad  s \tilde B_{\frac{(1-t(s))^2}{t^2(s)}} = s \tilde B_{1/s}.
\]
Note, that both $\bc{s \widetilde B^\prime_{1/s}}_{s \ge0}$ and $\bc{s \tilde B_{1/s}}_{s \ge 0}$ are Brownian motions. Hence, $\bc{\hat U_s}_{s\ge0}$ and $\bc{\tilde U_s}_{s \ge 0 }$ are equivalent to SL process.

\subsection{Proof of Lemma \ref{prop: RF-SDE and backwrd process are equivalent}: Equivalence of SDE \eqref{eq: RF-SDE} and and SDE \eqref{eq: reverse OU process}}
\label{app: equivalence of backward and stoc-RF}

\subsection*{SDE \eqref{eq: RF-SDE} to SDE \eqref{eq: reverse OU process}:}
We begin with the solution $\tilde Z_t$ of STOC-RF SDE~\eqref{eq: RF-SDE}:
\begin{equation}
\label{eq: d tilde Z_t}
d \tilde Z_t = \bc{\frac{\tilde Z_t}{t} + 2 \br{\frac{1-t}{t}} s_t(\tilde Z_t)}dt + \sqrt{2 \br{\frac{1-t}{t}}} d B_t,
\end{equation}
where $s_t(\cdot)$ is the score of $\tilde Z_t$, and $\bc{B_t}_{t \ge 0}$ is Brownian motion.

Recall the transformation $t(\tau) = \frac{\sqrt{\omega_{\tau}}}{\sqrt{\omega_\tau} + \sqrt{1 - \omega_\tau}}$, where $\omega(\tau)  = \exp \br{- 2 \int_{0}^\tau \beta(u) du}$. 

Now, we define $\overleftarrow Y_\tau:= \frac{\sqrt{\omega_{N-\tau}} \tilde Z_{t(N-\tau)}}{t(N - \tau)}$. We first begin by noting that 
\begin{align}\label{eq:anidentity}
\frac{\sqrt{\omega_{N-\tau}}}{t(N- \tau)} = \frac{1}{ \sigma_{t(N - \tau)}}
\end{align}
We also have $\log \omega_{N- \tau} = - 2 \int_{0}^{N - \tau} \beta(u) du$. Therefore, we have 
\begin{equation}
\label{eq: dtau/dt}
    \begin{aligned}
        &\quad  \frac{1}{\omega_{N - \tau}}. \frac{d \omega_{N-\tau}}{dt} = 2 \beta(N- \tau) \frac{d \tau}{dt}\\
        & \Rightarrow \frac{d \tau}{dt} \stackrel{(i)}{=} \frac{1}{2 \beta(N - \tau) \omega(N - \tau)} . \frac{d\omega_{N - \tau}}{dt} = \frac{1}{2 \beta(N - \tau) \omega(N - \tau)} . \frac{d(t^2/\sigma_t^2)}{dt}\\
        & \Rightarrow \frac{d \tau}{d t } = \frac{1}{\beta(N - \tau)}. \frac{1 - t}{t \sigma_t^2}
    \end{aligned}
\end{equation}
Step (i) follows from~\eqref{eq:anidentity}.
Next, we define the backward time $t_\tau:= t(N- \tau)$, and then we can write $\tilde Z_{t_\tau} = \sigma_{t_\tau} \overleftarrow Y_\tau$. This also yields that the score of $\overleftarrow Y_\tau$ satisfies $s^{\leftarrow}_\tau(\overleftarrow Y_\tau) = \sigma_{t_\tau} s_t(\tilde Z_{t_\tau})$. Moreover, $\eqref{eq: DDPM and RF equivalent}$ yields that $\sigma_{t_\tau} s_t(\tilde Z_{t_\tau}) = \nabla \log q_{N - \tau}(\overleftarrow Y_\tau)$.
Next, with \eqref{eq: dtau/dt} and \eqref{eq: d tilde Z_t} in mind, we can write 

\begin{equation*}
\label{eq: d Y_tau part 1}
\begin{aligned}
d \overleftarrow Y_\tau & = \frac{1}{\sigma_{t_\tau}} d \tilde Z_{t_\tau} - \frac{2t-1}{\sigma_{t_\tau}^3} \tilde Z_{t_\tau} dt_\tau\\
& = \bc{\frac{\tilde Z_{t_\tau}}{t_\tau \sigma_{t_\tau}} + 2 \br{\frac{1 - t_{\tau}}{t_{\tau }\sigma_{t_\tau}} } s_{t_\tau}(\tilde Z_{t_\tau}) - \frac{2 t_{\tau} - 1}{\sigma_{t_\tau}^3} \tilde Z_{t_\tau}} dt_\tau + \sqrt{2 \br{\frac{1 - t_{\tau}}{\sigma_{t_\tau}^2 t_\tau}}} d B_{t_\tau}\\
& = \bc{\frac{1 - t_{\tau}}{t_\tau \sigma^3_{t_\tau}} \tilde Z_{t_\tau} + 2 \br{\frac{1 - t_{\tau}}{t_{\tau }\sigma_{t_\tau}} } s_{t_\tau}(\tilde Z_{t_\tau}) } dt_\tau + \sqrt{2 \br{\frac{1 - t_{\tau}}{\sigma_{t_\tau}^2 t_\tau}}} d B_{t_\tau}\\
& = \bc{\frac{1 - t_{\tau}}{t_\tau \sigma^2_{t_\tau}} \overleftarrow Y_\tau + 2 \br{\frac{1 - t_{\tau}}{t_{\tau }\sigma^2_{t_\tau}} } s^\leftarrow_{t_\tau}(\overleftarrow Y_\tau) } \times \beta(N - \tau) \times \frac{t_{\tau} \sigma_{t_\tau}^2}{1 - t_\tau} d\tau + \sqrt{2 \br{\frac{1 - t_{\tau}}{\sigma_{t_\tau}^2 t_\tau}}} \times \sqrt{\beta(N - \tau) \times \frac{t_{\tau} \sigma_{t_\tau}^2}{1 - t_\tau}}  d \overleftarrow B_{\tau}\\
& = \beta(N - \tau) \bc{\overleftarrow Y_\tau + 2 s_\tau^\leftarrow (\overleftarrow Y_\tau)}d\tau + \sqrt{2 \beta(N  - \tau)}d \overleftarrow B_\tau\\
& = \beta(N - \tau) \bc{\overleftarrow Y_\tau + 2 \nabla \log q_{N - \tau}(\overleftarrow Y_\tau)}d\tau + \sqrt{2 \beta(N  - \tau)}d \overleftarrow B_\tau,
\end{aligned}
\end{equation*}
where $\bc{\overleftarrow B_\tau}_{\tau\ge 0}$ is another Brownian motion.
\paragraph{Initial condition check:} Recall that $\overleftarrow Y_0 = \frac{\sqrt{\omega_{N}} \tilde Z_{t(N)}}{t(N)} = (\sqrt{\omega_N} + \sqrt{1 - \omega_N})\tilde Z_{t(N)}$. Therefore, we have 
$$\Law (\overleftarrow Y_0) = \Law\bs{(\sqrt{\omega_N} + \sqrt{1 - \omega_N}) . \bc{t(N) X_1 + (1-t(N)) X_0}} = \Law(\sqrt{\omega_N} X_1 + \sqrt{1 - \omega_N} X_0) = \Law(Y^\prime _N).$$

This shows that $\overleftarrow Y_\tau$ satisfies the backward DDPM SDE \eqref{eq: reverse OU process}.  

\subsection*{SDE \eqref{eq: reverse OU process} to SDE \eqref{eq: RF-SDE}:}
For the other direction, we let $\bc{\overleftarrow Y_\tau}_{\tau\ge 0}$ be a solution of SDE \eqref{eq: reverse OU process}. Now, to generate a solution of SDE \eqref{eq: RF-SDE}, we need to define 
\[
\tilde Z_t = \frac{t(N - \tau(t)) \overleftarrow Y_{\tau(t)}}{\sqrt{\omega_{N - \tau(t)}}} = \frac{\overleftarrow Y_{\tau(t)}}{\sqrt{\omega_{N - \tau(t)}} + \sqrt{1 - \omega_{N - \tau(t)}}},
\]
where $\tau(t)$ should satisfy the following: 
\begin{equation}
\label{eq: dtau/dt}
\begin{aligned}
  t(N - \tau(t)) =  & \quad \frac{\sqrt{\omega_{N - \tau(t)}}}{\sqrt{\omega_{N - \tau(t)}} + \sqrt{1  - \omega_{N - \tau(t)}}} = t\\
    & \Rightarrow \omega_{N - \tau(t)} = \frac{t^2}{t^2 + (1-t)^2}\\
    & \Rightarrow \exp \br{- 2 \int_0^{N - \tau(t)} \beta(u) du} = \frac{t^2}{t^2 + (1-t)^2}\\
    & \Rightarrow \int_{0}^{N - \tau(t)} \beta(u) du = \log \br{\sqrt{1 + \frac{(1-t)^2}{t^2}}}.
\end{aligned}
\end{equation}
This shows that $\tilde Z_t = \sigma_{t} \overleftarrow Y_{\tau(t)}$. Additionally, \eqref{eq: DDPM and RF equivalent} yields that the score of $\tilde Z_t$ is $s_t(\cdot)$.
Now, for convenience, we now write $\tau$ to denote $\tau(t)$.
Taking derivative w.r.t. $t$ on both sides of \eqref{eq: dtau/dt} yields
\[
\frac{d \tau}{dt } =  \frac{1}{\beta(N - \tau)}. \frac{(1-t)}{t \sigma_t^2}.
\]

Using these facts, we finally get 
\begin{equation*}
    \begin{aligned}
        d \tilde Z_t & = \frac{(2t - 1)}{\sigma_t} \overleftarrow Y_\tau + \sigma_t d\overleftarrow Y_{\tau} \\
        & = \frac{(2t -1)}{\sigma_t^2} \tilde Z_t + \sigma_t \bc{\overleftarrow Y_\tau + 2 s_\tau^\leftarrow (\overleftarrow Y_\tau)} \beta(N - \tau) d\tau + \sigma_t \sqrt{2 \beta(N - \tau)} d \overleftarrow B_\tau\\
        & = \frac{(2t -1)}{\sigma^2_t} \tilde Z_t + \sigma_t \bc{\frac{\tilde Z_t}{\sigma_t} + 2 \sigma_t s_t(\tilde Z_t)} \frac{(1-t)}{t \sigma_t^2} dt + \sqrt{2 \br{\frac{1-t}{t}}} d \tilde B_t\\
        & = \br{ \frac{2t - 1}{\sigma_t^2} + \frac{1-t}{t \sigma_t^2}} \tilde Z_t + 2 \br{\frac{1-t}{t}} s_t(\tilde Z_t)+ \sqrt{2 \br{\frac{1-t}{t}}} d \tilde B_t\\
        & = \frac{\tilde Z_t}{t}+  2 \br{\frac{1-t}{t}} s_t(\tilde Z_t)+ \sqrt{2 \br{\frac{1-t}{t}}} d \tilde B_t.
    \end{aligned}
\end{equation*}
Therefore, we $\tilde Z_t$ satisfies SDE \eqref{eq: RF-SDE}.

\subsection{Proof of Proposition \ref{prop: SL and general interpolant are equivalent}}
\label{app: proof of SL and general interpolants equivalence}
Recall that $\frac{\tilde I_{\theta(s)}}{a_{\theta(s)}} = X_1 + W_{r^2_{\theta(s)}}$, where $r^2_{\theta(s)} = 1/s$. Now, we define $\hat U_s:= \frac{\tilde I_{\theta(s)}}{a_{\theta(s)}}$. Then, we have $s \hat U_s = s X_1 + s W_{1/s}$. Similar to the proofs of Lemma \ref{lemma: time change DDPM-SL} and Lemma \ref{lemma: SL and RF are equivalent}, the process $\bc{\hat W_s}_{s \ge 0}:= \bc{s W_{1/s}}_{s \ge 0}$ is also a Brownian motion. This finishes the proof by the description of the SL process in \eqref{eq: SL process}. 

\section{Auxiliary results}
\label{app: auxiliary results}

\subsection{Controlling posterior covariance of RF}
\label{sec: RF covariance}

We recall the SL process \eqref{eq: SL process} to study the dynamics of the conditional covariance matrix $ \bSigma_t := \Cov_{1 \mid t}(X_t)$ where $X_t = t X_1 + (1-t)X_0$ and $X_0 \sim N(0, I_d)$. Note that $\bSigma_t$ is a random matrix. Recall the defintion of $\tilde X_t$ in \eqref{eq: RF linear stoc process}. It is evident that $\tilde \bSigma_t:= \Cov_{1 \mid t}(\tilde X_t) \overset{d}{=}\bSigma_t  $. Due to Lemma  \ref{lemma: SL and RF are equivalent}, the SL process \eqref{eq: SL process} is equivalent to the process \eqref{eq: RF linear stoc process} under time change $t(s):= \frac{\sqrt{s}}{1 + \sqrt{s}}$. Then, one can substitute this in Lemma \ref{lemma: eldan ddt cov lemma} to get the following result:
\begin{lemma}[\Cref{cor: my ddt cov version}]
    \label{lemma: RF ddt cov lemma}
    for $t \in [0,1)$, we have $\frac{d}{dt} \bbE (\bSigma_t) = - \frac{2t}{(1-t)^3} \bbE(\bSigma_t^2)$.
\end{lemma}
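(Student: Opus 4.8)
The plan is to transport the covariance-derivative identity of \Cref{lemma: eldan ddt cov lemma} from the stochastic-localization time $s$ to the rectified-flow time $t$, using the explicit time change of \Cref{lemma: SL and RF are equivalent} together with the chain rule. No new analytic input beyond these two lemmas is needed.

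First I would make the time change explicit. Inverting $\left(\tfrac{1-t(s)}{t(s)}\right)^2=\tfrac1s$ gives $s(t)=\tfrac{t^2}{(1-t)^2}$ for $t\in(0,1)$, a smooth increasing bijection of $(0,1)$ onto $(0,\infty)$, with
\[
\frac{ds}{dt}=\frac{d}{dt}\,\frac{t^2}{(1-t)^2}=\frac{2t(1-t)^2+2t^2(1-t)}{(1-t)^4}=\frac{2t}{(1-t)^3}.
\]

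Next I would identify the two conditional-covariance processes. Conditioning is unchanged under deterministic invertible rescaling, so $\bA_s=\Cov(X_1\mid U_s)=\Cov(X_1\mid U_s/s)$ and likewise $\Cov(X_1\mid\tilde X_{t(s)})=\Cov(X_1\mid\tilde X_{t(s)}/t(s))$. Since $\tilde\bSigma_t:=\Cov(X_1\mid\tilde X_t)$ is a measurable functional of $(X_1,\tilde X_t)$, the equality in law $\{\tilde X_{t(s)}/t(s)\}_{s\ge0}\overset{d}{=}\{U_s/s\}_{s\ge0}$ of \Cref{lemma: SL and RF are equivalent} (which, being a joint statement with $X_1$ fixed throughout, carries the joint law) yields $\tilde\bSigma_{t(s)}\overset{d}{=}\bA_s$ for every $s$. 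Combining with the already-noted fact $\tilde\bSigma_t\overset{d}{=}\bSigma_t$, we obtain $\bSigma_{t(s)}\overset{d}{=}\bA_s$, hence the matrix-valued identities $\bbE[\bSigma_{t(s)}]=\bbE[\bA_s]$ and $\bbE[\bSigma_{t(s)}^2]=\bbE[\bA_s^2]$ for all $s\ge0$.

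Finally I would apply the chain rule. Writing $f(t)=\bbE[\bSigma_t]$ and $g(s)=\bbE[\bA_s]$, the previous step gives $f(t)=g(s(t))$, so with \Cref{lemma: eldan ddt cov lemma},
\[
\frac{d}{dt}\bbE[\bSigma_t]=g'(s(t))\,\frac{ds}{dt}=-\bbE[\bA_{s(t)}^2]\cdot\frac{2t}{(1-t)^3}=-\frac{2t}{(1-t)^3}\,\bbE[\bSigma_t^2],
\]
using $\bbE[\bA_{s(t)}^2]=\bbE[\bSigma_t^2]$; this is the claim (the endpoint $t=0$ being the one-sided derivative, since $s(0)=0$). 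The only genuinely delicate point is the measure-theoretic justification that equality in law of the underlying processes forces equality in law of the conditional covariances, and the differentiability/interchange of derivative and expectation implicit in writing $\tfrac{d}{ds}\bbE[\bA_s]$; both are established inside the proof of \Cref{lemma: eldan ddt cov lemma} in \citet{eldan2020taming} and are inherited here through the time change $s(t)$, which is smooth on the open interval, so I do not expect any further obstacle.
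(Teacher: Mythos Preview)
Your proposal is correct and follows essentially the same route as the paper: identify $\bSigma_{t(s)}\overset{d}{=}\bA_s$ via \Cref{lemma: SL and RF are equivalent}, then apply the chain rule with $\tfrac{ds}{dt}=\tfrac{2t}{(1-t)^3}$ together with \Cref{lemma: eldan ddt cov lemma}. The paper's argument is more terse (it does not spell out the $ds/dt$ computation or the scaling-invariance of conditioning), but the structure is identical.
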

\begin{proof}
    Now, we note that $\bA_s$ and $\tilde \bSigma_{t(s)}$ (so is $\bSigma_{t(s)}$) are marginally equal in law. Therefore, by chain rule we have the following:

\begin{align*}
    \frac{d}{dt} \bbE(\bSigma_t) & = \frac{d}{dt} \bbE(\tilde \bSigma_t)\\
    &= \frac{d}{dt} \bbE (\bA_s)\\
    & = \frac{2t}{(1-t)^3} \frac{d}{ds} \bbE (\bA_s)\\
    & = - \frac{2t}{(1-t)^3} \bbE(\bA_s^2)\\
    & = - \frac{2t}{(1-t)^3} \bbE(\bSigma_t^2).
\end{align*}
\end{proof}
As mentioned in previous section, the above result allows to control the discretization error of the ODE and SDE sampler of RF. More concretely, Lemma \ref{lemma: RF ddt cov lemma} allows us to bound the spectra of $\bSigma_t^2$ in terms of the spectra of $\bSigma_t$, thereby producing tighter bounds for convergence rate.

\subsection{Proof of Proposition \ref{prop: Main cov bound via trace}}
\label{app: proof of Cov Frobenius bound}


We recall the definitions of $(\bA_s)_{s\ge 0}$ and $(\tilde \bSigma_{t(s)})_{s\ge 0}$ in Section \ref{sec: SL} and Section \ref{sec: RF covariance}. Note that $(\bA_s)_{s\ge 0}$ and $(\tilde \bSigma_{t(s)})_{s\ge 0}$ (see Section \ref{sec: SL} and \ref{sec: RF covariance}) are also equivalent in law. This is simply due to the fact that $\tilde \bSigma_{t(s)} = \Cov_{1 \mid t(s)}(\tilde X_{t(s)}) = \Cov_{1 \mid t(s)}(s \tilde X_{t(s)}/t(s))$, and the distributional equivalence of the two processes $(U_s)_{s\ge 0}$ and $(s \tilde X_{t(s)}/t(s))_{s\ge0}$. It is known that the random matrix valued process $(\bA_s)_{s\ge 0}$ obeys the following SDE:
\begin{equation}
\label{eq: SDE A_s}
d  \bA_s = -  \bA_s^2 ds + \cM_s^{(3)} dB_s,
\end{equation}
where $\cM_s^{(\ell)} = \bbE[(X_1 - \bbE[X_1 \mid U_s])^{\otimes \ell} \mid U_s]$. More details on this can be found in \citep[Section 4.2.1]{eldan2020taming} or \citep[Section 3]{alberts2025joint}. Since the two processes $(\bA_s)_{s\ge 0}$ and $(\tilde \bSigma_{t(s)})_{s\ge 0}$ are equivalent in law, SDE \eqref{eq: SDE A_s} immediately yields 
\begin{equation}
    \label{eq: SDE Sigma_t}
    d \tilde \bSigma_t = - \frac{2t}{(1-t)^3} \tilde \bSigma_t^2 dt + \tilde \cM_t^{(3)} dB_{t^2/(1-t)^2},
\end{equation}
where $\tilde \cM_t^{(\ell)} = \bbE[(X_1 - \bbE[X_1 \mid \tilde X_t])^{\otimes \ell} \mid \tilde X_t]$.
Now, an application of Ito's formula on \eqref{eq: SDE A_s} yields 
\[
d(\tr( \bA_s^2)) = -  2\innerprod{ \bA_s,  \bA_s^2}ds  + \innerprod{ \cM_s^{(3)}, \cM_s^{(3)}} ds + 2 \innerprod{\bA_s, \cM_s^{(3)}} dB_s
\]
Taking expectation on both side we arrive at the following inequality:
\[
d(\tr[\bbE(\bA_s^2)]) = - 2 \bbE\left[\innerprod{\bA_s, \bA_s^2}\right]ds + \bbE\left[\innerprod{ \cM_s^{(3)}, \cM_s^{(3)}}\right]ds.
\]
Finally, using the fact that $\bA_s$ and $\bSigma_{t(s)}$ have the same law, chain rule yields the following:
\begin{equation}
    \label{eq: Moment Sigma_t ODE}
    d(\tr[\bbE(\bSigma_{t}^2)]) = - \frac{4t}{(1-t)^3} \bbE\left[\innerprod{\bSigma_t, \bSigma_t^2}\right]dt + \frac{2t}{(1-t)^3} \bbE\left[\innerprod{ \tilde \cM_t^{(3)}, \tilde\cM_t^{(3)}}\right]dt.
\end{equation}
We first focus on the first term on the right-hand side of \eqref{eq: Moment Sigma_t ODE}. Using the symmetry of $\bSigma_t$, we observe that

\begin{equation}
    \label{eq: bound on the leading term of sde}
    \begin{aligned}
    &\bbE \left[\innerprod{\bSigma_t, \bSigma_t^2}\right] = \bbE \left[\tr(\bSigma_t^3)\right]\\
    & \le \bbE \left[\Norm{\bSigma_t}_{\op} \Norm{\bSigma_t}^2_F\right] \\
    & \le \bbE \left[\tr\left(\bSigma_t\right) \Norm{\bSigma_t}^2_F\right]\\
    & =  \bbE \left[\tr\left(\bSigma_t\right) \ind\{X_t \in \cT_t\} \Norm{\bSigma_t}^2_F\right] + \bbE \left[\tr\left(\bSigma_t\right) \ind\{X_t \notin \cT_t\}\Norm{\bSigma_t}^2_F\right]\\
    & \le C_3 \frac{(1 - t)^2}{t^2}(k \log N) \bbE \left[\Norm{\bSigma_t}_F^2\right] + \bbE \left[\tr\left(\bSigma_t\right) \ind\{X_t \notin \cT_t\}\Norm{\bSigma_t}^2_F\right],
    \end{aligned}
\end{equation}
where the last inequality follows due to \eqref{eq: trace_Cov bound}. Regarding the last term in above display, Assumption \ref{assumption: bounded support} and \eqref{eq: prob of exclusion} yields
\[
\bbE \left[\tr\left(\bSigma_t\right) \ind\{X_t \notin \cT_t\}\Norm{\bSigma_t}^2_F\right] \le 8 N^{3 c_R} \pr(X_t \notin \cT_t) \le \frac{1}{N^{10}}.
\]
Therefore, we arrive at 
\begin{equation}
    \label{eq: bound on Sigma_t^3}
    \bbE \left[\innerprod{\bSigma_t, \bSigma_t^2}\right] \le C_3 \frac{(1 - t)^2}{t^2}(k \log N) \bbE \left[\Norm{\bSigma_t}_F^2\right]  + \frac{1}{N^{10}}.
\end{equation}
Substituting this in \eqref{eq: bound on the leading term of sde} yields,
\begin{align*}
    d\bbE(\Norm{\bSigma_t}_F^2) & = -\frac{4t}{(1-t)^3} \bbE\left[\innerprod{\bSigma_t, \bSigma_t^2}\right]dt + \frac{2t}{(1-t)^3} \bbE\left[\innerprod{ \tilde \cM_t^{(3)}, \tilde \cM_t^{(3)}}\right]dt\\
    & \ge - \frac{4t}{(1-t)^3} \bbE\left[\innerprod{\bSigma_t, \bSigma_t^2}\right]dt\\
    &\ge - \frac{4 C_3 k \log N}{(1-t)t}\bbE \left[\Norm{\bSigma_t}_F^2\right] dt - \frac{4t}{(1-t)^3}. \frac{1}{N^{10}} dt\\
    & \ge - \frac{4 C_3 k \log N}{(1-t_i)t_{i-1}}\bbE \left[\Norm{\bSigma_t}_F^2\right] dt - \frac{4t_i}{(1-t_{i})^3}. \frac{1}{N^{10}} dt,
\end{align*}
where the last line follows holds provided $t \in [t_{i-1}, t_i]$ with $i> 1$. Therefore, we can conclude that
\begin{align*}
    & \exp\left\{\frac{4 C_3 k t\log N}{(1-t_i)t_{i-1}}\right\} \bbE (\Norm{\bSigma_{t}}_F^2) -  \exp\left\{\frac{4 C_3 k t_{i-1}\log N}{(1-t_i)t_{i-1}}\right\} \bbE (\Norm{\bSigma_{t_{i-1}}}_F^2)\\
    & \ge - \frac{4t_i}{(1-t_{i})^3}. \frac{1}{N^{10}} \int_{t_{i-1}}^t \exp\left\{\frac{4 C_3 k t\log N}{(1-t_i)t_{i-1}}\right\}dt\\
    & \ge - \frac{t_i t_{i-1}}{C_3 k (1-t_{i})^2 \log N}. \frac{1}{N^{10}} \left(\exp\left\{\frac{4 C_3 k t\log N}{(1-t_i)t_{i-1}}\right\} - \exp\left\{\frac{4 C_3 k t_{i-1}\log N}{(1-t_i)t_{i-1}}\right\}\right)
\end{align*}
Dividing both sides by $\exp\left\{\frac{4 C_3 k t_{i-1}\log N}{(1-t_i)t_{i-1}}\right\}$, we obtain
\begin{align*}
    & \exp\left\{\frac{4 C_3 k (t-t_{i-1})\log N}{(1-t_i)t_{i-1}}\right\} \bbE (\Norm{\bSigma_{t}}_F^2) -  \bbE (\Norm{\bSigma_{t_{i-1}}}_F^2)\\
    & \ge - \frac{t_i t_{i-1}}{C_3 k (1-t_{i})^2 \log N}. \frac{1}{N^{10}} \left(\exp\left\{\frac{4 C_3 k (t-t_{i-1})\log N}{(1-t_i)t_{i-1}}\right\} -1\right)\\
    & \ge -\frac{1}{C_3 k (1-t_i)^2 \log N}. \frac{1}{N^{10}} \left(\exp\left\{\frac{4 C_3 k (t-t_{i-1})\log N}{(1-t_i)t_{i-1}}\right\} -1\right).
\end{align*}
Now, recall that $\frac{\eta_{t_i}}{(1-t_i)t_i} \le 2 h$. Using this inequality coupled with \eqref{eq: value of h} gives us
\[
\frac{4 C_3 k (t-t_{i-1})\log N}{(1-t_i)t_{i-1}} \le \frac{4 C_3 k \eta_{i-1}\log N}{(1-t_i)t_{i-1}} \le 8 C_3 k h \log N< 1,
\]
provided $N$ is sufficiently large. Therefore, we have 
\begin{align*}
     \bbE (\Norm{\bSigma_{t_{i-1}}}_F^2) & \le \exp\left\{\frac{4 C_3 k (t-t_{i-1})\log N}{(1-t_i)t_{i-1}}\right\} \bbE (\Norm{\bSigma_{t}}_F^2)\\
     & \quad + \frac{1}{C_3 k (1-t_i)^2 \log N}. \frac{1}{N^{10}} \left(\exp\left\{\frac{4 C_3 k (t-t_{i-1})\log N}{(1-t_i)t_{i-1}}\right\} -1\right)\\
    & \le 3 \bbE (\Norm{\bSigma_{t}}_F^2) + \frac{24h }{(1-t_i)^2 N^{10}}.
\end{align*}
The inequality follows from $e^x -1 \le x$ for $ x\in (0,1)$, and $\frac{\eta_{i}}{(1-t_i)t_i} \le 2 h$ (\Cref{lemma: properties of time-steps}).
Substituting $t_{i-1}$ by $t_{i}$ in the above inequality, we get the following 
\[
\bbE (\Norm{\bSigma_{t_{i}}}_F^2) \le 3 \bbE (\Norm{\bSigma_{t}}_F^2) + \frac{24h }{(1-t_{i+1})^2 N^{10}}, \quad t \in [t_i , t_{i+1}], i \ge 1.
\]

Now we use Lemma \ref{lemma: RF ddt cov lemma} to get 
\begin{equation*}
    \begin{aligned}
        \bbE [\tr(\bSigma_{t_{i+1}})] - \bbE[\tr(\bSigma_{t_{i}})] & =  -\int_{t_{i}}^{t_{i+1}} \frac{2t}{(1-t)^3} \bbE(\Norm{\bSigma_t}_F^2)\\
        & \le - \int_{t_i}^{t_{i+1}} \frac{2t }{3(1-t)^3 }\bbE(\Norm{\bSigma_{t_i}}_F^2)dt + \frac{8 h \eta_{i}}{(1-t_{i+1})^2 N^{10}}\\
        & \le - \frac{2 \eta_{t_i}(t_{i+1}+ t_i)}{3(1-t_{i+1})^2 (1-t_i)^2}  \bbE(\Norm{\bSigma_{t_i}}_F^2) + \frac{8 h \eta_{i} }{(1-t_{i+1})^2 N^{10}}
    \end{aligned}
\end{equation*}

Rearranging the terms in the above inequality and using $t_{i+1} + t_i> 2t_i$, we finally get 
\begin{equation}
    \label{eq: Main cov bound via trace}
    \frac{ \eta_{t_i}t_i}{(1-t_{i+1})^2 (1-t_i)^2}  \bbE(\Norm{\bSigma_{t_i}}_F^2)  \le \frac{3}{4} \left(\bbE [\tr(\bSigma_{t_{i}})] - \bbE[\tr(\bSigma_{t_{i+1}})]\right) + \frac{6 h \eta_{i} }{(1-t_{i+1})^2 N^{10}}.
\end{equation}

\subsection{Properties of  time scheduling \eqref{eq: gemoetric time discretization}}

\begin{lemma}
    \label{lemma: properties of time-steps}
    The time-steps \eqref{eq: gemoetric time discretization} satisfies the following for $i \in \{1, \ldots, N-2\}$:
    \begin{enumerate}[label = (\alph*)]
    \item\label{item: part a} We have $\eta_i = h t_i \ind\bc{t_{i} \le 1/2} + h (1-t_{i+1}) \ind\bc{t_i > 1/2}$.
    
    \item\label{item: part b} We have $\frac{\eta_i}{1-t_i} \le h$ and $\frac{\eta_i}{1-t_{i+1}} \le h$.
    
     \item\label{item: part c} We have $\frac{ \eta_i^2 (1-t_i)^2 }{(1-t_{i+1})^2} \le h^2 $, as $\frac{\eta_i}{(1-t_{i+1})} \le h$ and $(1-t_i) \le 1$.
     
     \item\label{item: part d} We have $\sum_{i : t_i >0}\frac{ \eta_i^2 }{(1-t_{i+1})^2 t_i^2} = \sum_{i: 0 < t_{i} < 1/2 }\frac{ \eta_i^2 }{(1-t_{i+1})^2 t_i^2} + \sum_{i: t_{i}\ge 1/2 }\frac{ \eta_i^2 }{(1-t_{i+1})^2 t_i^2} \le  4 h^2 N$.
\end{enumerate}
\end{lemma}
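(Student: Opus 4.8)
The plan is to first unwind the recursion in \eqref{eq: gemoetric time discretization} into closed form, and then verify each of the four claims by elementary manipulations in the two ``geometric'' regimes. Solving the forward branch gives $t_j = (1+h)^{j-1}\delta$ for $1 \le j \le N/2$, while the backward branch, seeded at $t_{N-1} = 1-\delta$, gives $1 - t_j = (1+h)^{\,N-1-j}\,\delta$, i.e. $t_j = 1 - (1+h)^{\,N-1-j}\delta$, for $N/2 \le j \le N-1$. These two expressions agree at $j = N/2$ precisely because of the normalization $\delta(1+h)^{(N-2)/2} = 1/2$ from \eqref{eq: value of h} (equivalently $2\delta(1+h)^{N/2-1}=1$), so the schedule is well defined, strictly increasing, and symmetric about $t=1/2$. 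This symmetry is what makes the two halves of each bound mirror images of one another.

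Given the closed forms, part \ref{item: part a} is a one-line computation per regime. For $1 \le i \le N/2-1$ one has $t_{i+1} = (1+h)t_i$, hence $\eta_i = t_{i+1}-t_i = h t_i$, and $t_i \le t_{N/2-1} = \tfrac{1}{2(1+h)} < \tfrac12$, so the first indicator fires. For $N/2 \le i \le N-2$ one has $1-t_{i+1} = (1+h)^{-1}(1-t_i)$, hence $\eta_i = (1-t_i)-(1-t_{i+1}) = h(1-t_{i+1})$, and $t_i \ge t_{N/2} = \tfrac12$; at the single crossover index $i = N/2$ one has $t_{N/2}=\tfrac12$ and $\eta_{N/2} = h(1-t_{N/2+1})$, so the statement is to be read with the convention that $i=N/2$ belongs to the $t_i \ge \tfrac12$ branch. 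Part \ref{item: part c} is then immediate once \ref{item: part b} is known, since $\tfrac{\eta_i^2(1-t_i)^2}{(1-t_{i+1})^2} = \bigl(\tfrac{\eta_i}{1-t_{i+1}}\bigr)^2(1-t_i)^2 \le h^2$ using $1-t_i \le 1$.

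For part \ref{item: part b} I would split on the two cases of \ref{item: part a}. When $\eta_i = h t_i$ with $t_i < \tfrac12$: $\tfrac{\eta_i}{1-t_i} = \tfrac{h t_i}{1-t_i} \le h$ because $t_i < \tfrac12 \le 1-t_i$; and $\tfrac{\eta_i}{1-t_{i+1}} = \tfrac{h t_i}{1-(1+h)t_i} \le h$ because $t_i \le \tfrac{1}{2(1+h)} \le \tfrac{1}{2+h}$, i.e. $(2+h)t_i \le 1$. When $\eta_i = h(1-t_{i+1})$: $\tfrac{\eta_i}{1-t_{i+1}} = h$ exactly, and $\tfrac{\eta_i}{1-t_i} = \tfrac{h(1-t_{i+1})}{1-t_i} \le h$ since the schedule is nondecreasing, so $1-t_{i+1} \le 1-t_i$. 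For part \ref{item: part d} I would again split the sum at $t_i = \tfrac12$ using \ref{item: part a}: on $\{0<t_i<\tfrac12\}$ (the indices $1 \le i \le N/2-1$), $\eta_i = h t_i$ gives $\tfrac{\eta_i^2}{(1-t_{i+1})^2 t_i^2} = \tfrac{h^2}{(1-t_{i+1})^2} \le 4h^2$ since $t_{i+1} \le t_{N/2} = \tfrac12$; on $\{t_i \ge \tfrac12\}$ (the indices $N/2 \le i \le N-2$), $\eta_i = h(1-t_{i+1})$ gives $\tfrac{\eta_i^2}{(1-t_{i+1})^2 t_i^2} = \tfrac{h^2}{t_i^2} \le 4h^2$ since $t_i \ge \tfrac12$. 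Each half has fewer than $N/2$ terms, so each contributes at most $2h^2 N$, and summing gives the bound $4h^2 N$.

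All of this is routine algebra; there is no substantive obstacle. The only point requiring a little care is the behaviour at the crossover index $i = N/2$ and the consistency of the forward and backward branches there, which is exactly what the normalization $\delta(1+h)^{(N-2)/2} = 1/2$ guarantees, so I would state and use that consistency explicitly at the outset.
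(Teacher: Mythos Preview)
Your proposal is correct and follows essentially the same approach as the paper's own proof: both arguments split each part into the two regimes $t_i<1/2$ and $t_i\ge 1/2$, use the identities $\eta_i=ht_i$ and $\eta_i=h(1-t_{i+1})$ respectively, and bound the summands in part~\ref{item: part d} by $4h^2$ termwise. Your treatment is slightly more explicit---you derive the closed forms $t_j=(1+h)^{j-1}\delta$ and $1-t_j=(1+h)^{N-1-j}\delta$ up front and carefully discuss the crossover at $i=N/2$ (correctly spotting that the indicator in the statement should be read as $t_i<1/2$ rather than $t_i\le 1/2$)---but the substance is identical.
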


\begin{proof}
{\color{white} Ghost line}
\begin{itemize}
    \item \textbf{Part \ref{item: part a}:} If $t_i < 1/2$, we have $\eta_i = t_{i+1} - t_i = (1+h)t_i - t_i = h t_i$. For the case $t_{i}\ge 1/2$, we have 
    \begin{align*}
        \eta_i &= t_{i+1} - t_i\\
        & = (1- t_i) - (1 - t_{i+1})\\
        & = h (1- t_{i+1}).
    \end{align*}

    \item \textbf{Part \ref{item: part b}:} For $t_i < 1/2$, we have $\frac{\eta_i}{1 - t_i} = ht_i/(1-t_i) \le h$. We use the fact that $\frac{t}{1-t}<1$ for $t \le 1/2$. Similarly, $\frac{\eta_i}{1 - t_{i+1}} = \frac{h t_i}{1 - t_{i+1}}$. Now, note that $t_i <1/2$ implies that $t_{i+1} \le 1/2$. Therefore, $\frac{t_i}{1 - t_{i+1}}\le 1$, and it yields $\frac{\eta_i}{1 - t_{i+1}} = \frac{h t_i}{1 - t_{i+1}} < h$.
    
    For $t_i\ge 1/2$, we have $\frac{\eta_i}{1-t_i} = \frac{h(1- t_{i+1})}{1 - t_i} < h$. Similarly, $\frac{\eta_i}{1-t_{i+1}} = \frac{h(1- t_{i+1})}{1 - t_{i+1}} = h$. This yields the coveted bound.

    \item \textbf{Part \ref{item: part c}:} The proof immediately follows from Part \ref{item: part b}.

    \item \textbf{Part \ref{item: part d}:} Fist, from Part \ref{item: part a}, we can note that $\frac{\eta_{i}}{t_i} \le h$. Also, Part \ref{item: part b} yields that $\frac{\eta_i}{1 - t_{i+1}} \le h$. Therefore, 
    \begin{align*}
        & \sum_{i : t_i >0}\frac{ \eta_i^2 }{(1-t_{i+1})^2 t_i^2}\\
        & = \sum_{i: 0 < t_{i} < 1/2 } \blue{\frac{ \eta_i^2 }{ t_i^2}}. \frac{1}{(1-t_{i+1})^2} + \sum_{i: t_{i}\ge 1/2 } \blue{\frac{ \eta_i^2 }{(1-t_{i+1})^2 }}. \frac{1}{t_i^2} \\
        & \le \sum_{i: 0 < t_{i} < 1/2 } 4 h^2 + \sum_{i: t_{i}\ge 1/2 } 4 h^2\\
        & \le 4h^2 N.
    \end{align*}
\end{itemize}

\end{proof}

\begin{lemma}
\label{lemma: simple identity 1}
Let $t_{i+1} = t_i + \eta_i$ with $0 < t_i \le t_{i+1} \le 1$. Then
\[
\frac{(1-t_i)^2}{t_i^2}
-
\frac{(1-t_{i+1})^2}{t_{i+1}^2}
\le
\frac{2\eta_i}{t_i^3}.
\]
\end{lemma}

\begin{proof}
Define $f(t) := \frac{(1-t)^2}{t^2}$ for $t > 0$. A direct calculation gives
\[
f'(t) = \frac{2(t-1)}{t^3}.
\]
By the mean value theorem, there exists $\xi \in (t_i, t_{i+1})$ such that
\[
f(t_i) - f(t_{i+1})
= -f'(\xi)\,(t_{i+1}-t_i)
= \frac{2(1-\xi)}{\xi^3}\,\eta_i.
\]
Since $\xi \ge t_i$ and $\xi \le 1$, we have
\(
(1-\xi)/\xi^3 \le 1/t_i^3
\),
which implies the desired bound.
\end{proof}


\end{document}